\documentclass[]{article}
\PassOptionsToPackage{numbers, sort, square}{natbib}
\usepackage[final]{neurips_2025}
\usepackage[utf8]{inputenc}
\usepackage{graphicx}
\usepackage[dvipsnames]{xcolor}
\usepackage{enumitem,kantlipsum}
\usepackage{bm}
\usepackage{soul} 
\usepackage{algorithm}
\usepackage[noend]{algpseudocode}
\usepackage{mathtools}
\usepackage{graphicx} 
\usepackage{fancyvrb} 
\usepackage{amsmath,amssymb}
\usepackage{hyperref}
\usepackage{hyperref}
\usepackage[english]{babel}
\usepackage{amsthm}
\usepackage{bbm}
\usepackage{subcaption}
\usepackage{float}
\usepackage{tabularx,graphicx}
\usepackage{multirow}
\usepackage{xpatch}
\usepackage{booktabs}
\usepackage{etoc}
\makeatletter

\usepackage{subcaption} 

\DeclareMathOperator{\R}{\mathbb{R}}

\newtheorem{theorem}{Theorem}
\newtheorem{corollary}{Corollary}
\newtheorem{proposition}{Proposition}
\newtheorem{lemma}{Lemma}
\theoremstyle{remark}

\theoremstyle{definition}

\usepackage{pgfplots}
\pgfplotsset{compat=newest}

\let\emptyset\varnothing

\title{ElliCE: Efficient and Provably Robust Algorithmic Recourse via the Rashomon Sets
}
\author{Bohdan Turbal \and Iryna Voitsitska \and Lesia Semenova}
\author{%
  Bohdan Turbal\textsuperscript{\rm 1} \quad Iryna Voitsitska\textsuperscript{\rm 2}\quad Lesia Semenova\textsuperscript{\rm 3$ $\thanks{The majority of this work was conducted while Bohdan was at Taras Shevchenko National University of Kyiv and Lesia at Microsoft Research NYC.}} \\
  \\
  \textsuperscript{1} Princeton University \textsuperscript{2} Ukrainian Catholic University
  \textsuperscript{3} Rutgers University\\
  \texttt{bt4811@princeton.edu}, \texttt{voitsitska.pn@ucu.edu.ua}, 
  \texttt{lesia.semenova@rutgers.edu}
}
\date{ }
\begin{document}

\maketitle
\begin{abstract}

Machine learning models now influence decisions that directly affect people’s lives, making it important to understand not only their predictions, but also how individuals could act to obtain better results. Algorithmic recourse provides actionable input modifications to achieve more favorable outcomes, typically relying on counterfactual explanations to suggest such changes.
However, when the Rashomon set -- the set of near-optimal models -- is large, standard counterfactual explanations can become unreliable, as a recourse action valid for one model may fail under another.
We introduce ElliCE, a novel framework for robust algorithmic recourse that optimizes counterfactuals over an ellipsoidal approximation of the Rashomon set.
The resulting explanations are provably valid over this ellipsoid, with theoretical guarantees 
on uniqueness, stability, and alignment with key feature directions.
Empirically, ElliCE generates counterfactuals that are not only more robust but also more flexible, adapting to user-specified feature constraints  while being substantially faster than existing baselines.
This provides a principled and practical solution for reliable recourse under model uncertainty, ensuring stable recommendations for users even as models evolve.
\end{abstract}

\section{Introduction}

When an algorithmic decision denies someone a loan, a job, or insurance coverage, a natural question follows: \textit{What could I change to obtain a better outcome next time?}
Algorithmic recourse answers this question by providing concrete, actionable changes that could lead to a more favorable decision.
A common way to generate such recommendations is through counterfactual explanations---small modifications to an individual’s features that flip the model’s prediction.
Yet, even when the recommendation looks specific (e.g. “increase your income by \$5000”), one must ask: \textit{``Would that same change still work tomorrow if the institution retrains or replaces its model?''} or \textit{``How stable are these suggestions across equally good models that explain the data in different ways?''}

Most existing counterfactual generation methods \cite{ PoyiadziSokolSantosRodriguezDeBieFlach2019, SunEtAl24, LaugelLesotMarsalaRenardDetyniecki2017, McGrathCostabelloLeVanSweeneyKamiabShenLecue2018, Russell2019efficient, NemirovskyThiebautXuGupta2022a, mohammadi2021scaling, WachterMittelstadtRussell2017} implicitly assume that the underlying model is fixed and perfectly known. In practice, models evolve: banks regularly retrain risk predictors, healthcare institutions update diagnostic classifiers, and regulators may require model re-validation under new privacy or transparency constraints. Small shifts in data or regularization can result in very different-yet-equally-accurate models. This phenomenon, known as the Rashomon Effect \cite{breiman2001statistical, semenova2022existence, rudin2024amazing, d2022underspecification, ganesh2025systemizing}, implies that many distinct predictors achieve nearly optimal performance. In such settings, a counterfactual valid for one model can fail under another, undermining the reliability and consistency of algorithmic recourse.

Recent approaches have attempted to produce robust counterfactuals, meaning counterfactuals that are  valid under small parameter perturbations or across predefined ensembles \cite{upadhyay2021towards, DuttaLongMishraTilliMagazzeni2022, jiang2024argumentative, pmlr-v222-jiang24a, JiangLeofanteRagoToni2024, hamman2023robust, ForelParmentierVidal2022, Kinjo2025robustcounterfactual}. However, these methods either rely on heavy-weight mixed-integer solvers, restrict robustness to local neighborhoods around a single model, or lack formal guarantees of validity across the full space of near-optimal solutions known as the Rashomon set. None of them directly leverages the geometry of this Rashomon set itself.

We introduce ElliCE, an efficient and provably robust framework for algorithmic recourse that works over an ellipsoidal approximation of the Rashomon set.
By modeling the space of near-optimal models as an ellipsoid derived from the curvature (Hessian) of the loss landscape, 
ElliCE reformulates robust counterfactual generation as a tractable convex optimization problem.
The resulting counterfactuals are valid for every model inside the ellipsoid, ensuring that a user’s recommended action remains meaningful even if the deployed model is replaced by another equally accurate one from the approximated Rashomon set.

Our contributions are fourfold:
(1) \textit{Theoretical foundation}. 
We derive a closed-form expression for the worst-case prediction, which allows us to formulate the robust recourse problem as a convex optimization and establish formal guarantees of validity, uniqueness, and stability for ElliCE’s counterfactuals.
(2) \textit{Geometric intuition}. We show that ElliCE’s robustness term connects the counterfactual’s stability with the importance of the features it modifies as the optimization naturally aligns recourse directions with the principal curvature axes of the loss landscape.
(3) \textit{Actionability}. ElliCE supports feature-level constraints, such as sparsity constraints, immutable or range-restricted attributes, allowing users to generate realistic, actionable recourse tailored to specific application or user settings.
(4) \textit{Empirical validation}. Across multiple high-stakes tabular datasets, ElliCE achieves higher robustness and remains one to three orders of magnitude faster than competing baselines, while maintaining proximity and plausibility.

Ultimately, ElliCE looks at algorithmic recourse through the lens of model multiplicity. Instead of relying on a single model’s decision boundary, it offers explanations that stay consistent across many models that fit the data almost equally well. This perspective treats the Rashomon Effect not as a flaw to eliminate, but as an inherent source of uncertainty to account for, leading to stable recourse in the presence of model diversity.

\section{Related works}

\textbf{Rashomon Effect.} 
The Rashomon Effect, a term popularized by \citet{breiman2001statistical} in the context of machine learning, describes the phenomenon where multiple distinct models can achieve near-optimal empirical risk (these models form a Rashomon set). This effect is also referred to as model multiplicity \cite{marx2020predictive, black2022model}. The existence of the Rashomon set has implications for the trustworthiness and reliability of machine learning systems, influencing feature importance  \cite{fisher2019all, muller2023empirical, donnelly2023the, dong2020exploring}, fairness \cite{DaiRavishankarYuanNeillBlack2025, langlade2025fairness, meyer2024perceptions}, the existence of simple yet accurate models \cite{semenova2022existence, semenova2023noise, boner2024using} to name a few.
Significant research has focused on measuring and characterizing the Rashomon set for different model classes \cite{xin2022rashomontrees, hsu2022rashomon, hsu2024dropout, hsu2024rashomongb, zhong2023exploring}. 
Our work leverages insights into the geometry of the Rashomon set, explored by works like \citet{zhong2023exploring, donnelly2025rashomon}, but applies them to the distinct challenge of generating robust algorithmic recourse across this set.

\textbf{Counterfactual Explanations (CEs).} 
Counterfactual Explanations have emerged as a prominent tool for providing  algorithmic recourse. Numerous approaches exist for generating CEs. Proximity-based methods aim for counterfactuals requiring minimal feature space perturbations \citep{WachterMittelstadtRussell2017, UstunSpangherLiu2018, mohammadi2021scaling, brughmans2021nice}. Sparsity techniques prioritize modifying the fewest features possible to enhance actionability \citep{McGrathCostabelloLeVanSweeneyKamiabShenLecue2018, SunEtAl24}, while some methods attempt to balance both objectives \citep{LaugelLesotMarsalaRenardDetyniecki2017}. Another research direction emphasizes plausibility, ensuring generated CEs represent realistic instances by constraining them to the data manifold, for example, using guidance from generative models \citep{PawelczykBroelemannKasneci2020, JoshiKoyejoVijitbenjaronkKimGhosh2019, NemirovskyThiebautXuGupta2022a}, encoding feasibility rules \citep{KarimiBartheBalleValera2020a}, or tracing density-aware paths \citep{PoyiadziSokolSantosRodriguezDeBieFlach2019}. Recent extensions also incorporate temporal reasoning \citep{ceccon2025reinforcement} and fairness objectives \citep{barrainkua2025pays, Yetukuri_Hardy_Vorobeychik_Ustun_Liu_2024, kuratomi2025subgroup}.
A key limitation across these approaches (which ElliCE directly addresses) is the assumption of a fixed, perfectly known predictive model, as counterfactuals constructed near a specific decision boundary can become unstable under model updates or perturbations.

\textbf{Robustness to Local Model Perturbations.}
Building upon the limitation of fixed models, one line of work has focused specifically on achieving robustness against small, local changes or perturbations in the model's parameters.
For instance, ROAR \cite{upadhyay2021towards} optimizes CEs considering local $\Delta$-perturbations of the model. \citet{JiangLeofanteRagoToni2022} introduced $\Delta$-robustness, a formal metric to assess CE validity under bounded parameter perturbations in neural networks, with subsequent works developing provably robust MILP-based methods \cite{pmlr-v222-jiang24a}.  
While these methods offer formal guarantees for $\Delta$-robustness, MILP-based approaches can face scalability challenges, and the focus is generally on local parameter stability rather than the broader implications of the Rashomon Effect. 

\textbf{Robustness under the Rashomon Effect.} 
A growing body of work addresses counterfactual robustness under model multiplicity, aligning closely with the Rashomon Effect. Several approaches evaluate stability across predefined sets or ensembles of models, introducing heuristic stability measures (e.g., T:Rex \citep{hamman2023robust} and RobX \citep{DuttaLongMishraTilliMagazzeni2022}), probabilistic frameworks \citep{ForelParmentierVidal2022, Kinjo2025robustcounterfactual}, or guarantees under specific norms and conditions like distribution shift \citep{Kyaw2025optimal, konig2025performative, garg2025search}. Foundational work by \citet{PawelczykBroelemannKasneci2020} conceptually linked the Rashomon Effect to counterfactuals, though primarily enhancing input perturbation robustness. More recent methods use argumentative ensembling \citep{jiang2024argumentative} or aggregate explanations across AutoML-generated sets \citep{cavus2025beyond} to handle model multiplicity.

Our work takes a distinct approach. Rather than relying on ensemble agreement, heuristic stability metrics, local perturbations, or argumentative aggregation, ElliCE leverages the local geometry of the Rashomon set, approximated by an ellipsoid, to derive theoretically grounded, robust recourse valid across all models within the approximation.

\section{Background and Notation}\label{section:notation}
\textbf{Dataset and hypothesis space}. Consider $n$ i.i.d. samples $\mathcal{S}_n=\{\mathbf{z}_i = (\mathbf{x}_i,y_i)\}_{i=1}^n$, where $\mathbf{x}_i\in\mathcal{X}\subseteq\mathbb{R}^d$ and $y_i\in\mathcal{Y} = \{0, 1\}$ are generated from an unknown distribution $\mathcal{D}$ on $\mathcal{X}\times \mathcal{Y}$. Let $\mathcal{Y}_{pred}$ be an output space, where $\mathcal{Y}_{pred} \subseteq \mathbb{R}$ for scores (logits) or $\mathcal{Y}_{pred}  \subseteq [0,1]$ for probabilities.
Then $\mathcal{F}=\{f_{\bm{\theta}} : \bm{\theta} \in \Theta\}$ is a hypothesis space of functions $f_{\bm{\theta}}: \mathcal{X}\rightarrow \mathcal{Y}_{pred}$, parameterized by a vector $\bm{\theta} \in \Theta  \subseteq \mathbb{R}^p$. For example, $\mathcal{F}$ can represent linear models or multilayer perceptrons. We denote a specific function by $f_{\bm{\theta}}$. As our analysis focuses on the parameter space $\Theta$, we will often refer to the model directly by its parameter vector $\bm{\theta}$.

\textbf{Loss and objective function}. Let $\phi:\mathcal{Y}_{pred}\times \mathcal{Y}\rightarrow \mathbb{R}_+$ be a loss function. In this work, we consider binary cross-entropy (logistic) loss $\phi(f_{\bm{\theta}}(\mathbf{x}), y) = -[y \log(\sigma(f_{\bm{\theta}}(\mathbf{x}))) + (1-y) \log(1-\sigma(f_{\bm{\theta}}(\mathbf{x})))]$, which is applied to the model's raw score (logit), $s=f_{\bm{\theta}}(\mathbf{x})$, where $\sigma(s) = \frac{1}{1 + \exp(-s)}$ is the sigmoid function. However, our results generalize to other convex losses. 
The true risk is the expected loss $J(\bm{\theta}) = \mathbb{E}_\mathbf{z} [\phi(f_{\bm{\theta}}(\mathbf{x}), y)]$ that we approximate with the empirical risk, which is the average loss, $\hat{J}(\bm{\theta}) = \frac{1}{n} \sum_{i=1}^n \phi(f_{\bm{\theta}}(\mathbf{x}_i),y_i)$.
  We also define an $\ell_2$-regularized objective function:
$\hat{L}(\bm{\theta}) = \hat{J}(\bm{\theta}) + \frac{\lambda}{2} ||\bm{\theta}||_2^2,$
where $\lambda \ge 0$ is the regularization strength. 
The empirical risk minimizer (ERM) is $\hat{\bm{\theta}} \in \arg\min_{\bm{\theta}\in \Theta} \hat{L}(\bm{\theta})$.
When $\lambda = 0$, the ERM is $\hat{\bm{\theta}}\in \arg\min_{\bm{\theta}\in \Theta}\hat{J}(\bm{\theta})$.

\textbf{Rashomon set}. Following \cite{fisher2019all, semenova2022existence, xin2022rashomontrees}, we define the $\epsilon$-Rashomon set within the parameter space $\Theta$ as the set of parameter vectors whose corresponding models $f_{\bm{\theta}}$ have objective value close to the minimum:
\[
\mathcal{R}(\epsilon) \coloneqq \{\bm{\theta} \in \Theta: \hat L(\bm{\theta}) \leq\hat L(\hat{\bm{\theta}}) + \epsilon\},
\]
where $\epsilon \geq 0$ is the Rashomon parameter defining the allowable tolerance in objective compared to the ERM, $\hat{L}(\hat{\bm{\theta}})$. It is typically a small value. The existence of the Rashomon set with multiple, distinct parameter vectors $\bm{\theta}$ (corresponding to potentially distinct functions $f_{\bm{\theta}}$) achieving similar performance implies that different underlying logic (how features contribute to predictions) can explain the data equally well. It is important to be aware of this variability among near-optimal models when generating explanations for individual predictions, as different models in $\mathcal{R}(\epsilon)$ might suggest different ways an outcome could be changed.

\textbf{Counterfactual explanations.} 
Let $g: \mathcal{Y}_{pred} \to \{0,1\}$ be the decision function that converts a model's score output $s = f_{\bm{\theta}}(\mathbf{x})$ to a final binary class label by applying a threshold $t$, such that $g(s) = \mathbf{1}[s \ge t]$.
For an ERM $\hat{\bm{\theta}}$ and for an input vector $\mathbf{x}_0$ with prediction $g(f_{\hat{\bm{\theta}}}(\mathbf{x}_0)) = \hat{y}_0$, a counterfactual explanation $\mathbf{x}_c$ is a data point such that its predicted class is the opposite, i.e., $g(f_{\hat{\bm{\theta}}}(\mathbf{x}_c)) = 1 - \hat{y}_0$.
The set of all counterfactual explanations for $\mathbf{x}_0$ under the model $\hat{\bm{\theta}}$ and decision function $g$ is defined as:
\[
\mathcal{C}(\mathbf{x}_0, \hat{\bm{\theta}}) = \left\{\mathbf{x}_c \in \mathcal{X} : g(f_{\hat{\bm{\theta}}}(\mathbf{x}_c)) = 1 - g(f_{\hat{\bm{\theta}}}(\mathbf{x}_0))\right\}.
\]
For instance, in a credit loan application scenario, if an applicant $\mathbf{x}_0$ is denied a loan (e.g., $g(f_{\hat{\bm{\theta}}}(\mathbf{x}_0)) = 0$), a counterfactual explanation $\mathbf{x}_c$ would be a modified version of their application details (e.g., increased income, reduced debt) such that the model predicts approval, $g(f_{\hat{\bm{\theta}}}(\mathbf{x}_c)) = 1$.
While many such $\mathbf{x}_c$ might exist, practical algorithmic recourse aims to find explanations that require minimal change for the user. This means finding the ``closest'' counterfactual:
$\mathbf{x}_c^* = \arg\min_{\mathbf{x}_c \in \mathcal{C}(\mathbf{x}_0, \hat{\bm{\theta}})} \nu(\mathbf{x}_c, \mathbf{x}_0),
$
where $\nu(\cdot, \cdot):\mathbb{R}^d\times \mathbb{R}^d \rightarrow \mathbb{R}$ is a defined distance function or cost metric that  we discuss next.


\textbf{Distance Metrics.} 
In our framework, we primarily focus on the two distance metrics for generating actionable and interpretable counterfactuals: $\ell_2$ or Euclidean distance and mixed distance $\ell_{mix}$. Note that $\ell_2$ is a natural geometric measure of proximity, that penalizes large differences in any feature, \(\nu(\mathbf{x}_c, \mathbf{x}_0) = \ell_2(\mathbf{x}_c, \mathbf{x}_0) = \|\mathbf{x}_c - \mathbf{x}_0\|_2^2 = \sum_{j=1}^d (x_{cj} - x_{0j})^2.\)
For practical applications where features have different natures (continuous and categorical),  one can also consider the mixed distance $\nu(\cdot, \cdot) = \ell_{mix}$, inspired by Gower's distance. Assuming that the data are standardized, it is defined as:
\(\ell_{mix}(\mathbf{x}_c, \mathbf{x}_0) = \sqrt{\sum_{j \in \mathcal{I}_{cont}} (x_{cj} - x_{0j})^2 + \sum_{j \in \mathcal{I}_{cat}} \bar{u}_j \mathbf{1}[x_{cj} \neq x_{0j}]},\)
where $\mathcal{I}_{cont}$ and $\mathcal{I}_{cat}$ denote the sets of continuous and categorical feature indices respectively, $\mathbf{1}[\cdot]$ is the indicator function, and $\bar{u}_j$ are optional weights reflecting the cost of changing feature $j$. We use $\ell_2$ distance for our theoretical analysis and experiments in the subsequent sections.

Next, we describe our approximating framework and outline the optimization process.

\section{A Framework for Robust Recourse over the Rashomon Set}\label{section:framework}

We focus our theoretical analysis on linear predictors of the form $f_{\bm{\theta}}(\mathbf{x})=\bm{\theta}^{\top}\mathbf{x}$. However, 
the same methodology applies in the final embedding space of multilayer perceptrons (MLPs) by writing the model as
$f_{\boldsymbol{\theta}}(\mathbf{x}) = \boldsymbol{\theta}^{\top} h(\mathbf{x})$,
where $h(\mathbf{x})$ is the penultimate-layer embedding and $\boldsymbol{\theta}$ are the last-layer parameters. We freeze $h(\cdot)$ and apply the same ellipsoidal procedure to $\boldsymbol{\theta}$ as in the linear case (equivalently, replace $\mathbf{x}$ by $h(\mathbf{x})$ in the formulas below).



\textbf{Approximated Rashomon set.} 
For certain objectives, such as $\ell_2$-regularized mean-squared error on linear models, the Rashomon set is exactly an ellipsoid in the parameter space \citep{semenova2022existence}: $\mathcal{R}(\epsilon) = \{\bm{\theta}: (\bm{\theta} - \hat{\bm{\theta}})^\top (X^\top X + \lambda I_p)(\bm{\theta} - \hat{\bm{\theta}}) \leq \epsilon\}$, where $X \in \mathbb{R}^{n\times d}$ is the data matrix, whose $i$-th row is the feature vector $\mathbf{x}_i^\top$, $I_p$ is an identity matrix of size $p\times p$, and $\lambda \in \mathbb{R}_+$ is the regularization parameter. 
Because mean-squared error provides a local quadratic approximation to other convex losses, the exact ellipsoidal form of its Rashomon set serves as strong motivation for the Rashomon set approximation. Building on this and on similar geometric intuition \citep{zhong2023exploring},
we approximate the $\epsilon$-Rashomon set with an ellipsoid defined by the local geometry of the loss landscape:
\[
\hat{\mathcal{R}}(\epsilon) =\{\bm{\theta}: \frac{1}{2}(\bm{\theta} - \hat{\bm{\theta}})^\top H(\bm{\theta} - \hat{\bm{\theta}}) \leq \epsilon\},
\]
where $H = X^\top W X + \lambda I_p$ is the Hessian of the $\ell_2$-regularized loss function, evaluated at $\hat{\bm{\theta}}$. For logistic loss, $W$ is an $n \times n$ diagonal matrix of weights where $w_{ii} = \sigma(f_{\hat{\bm{\theta}}}(\mathbf{x}_i))(1 - \sigma(f_{\hat{\bm{\theta}}}(\mathbf{x}_i)))$. Recall from Section~\ref{section:notation} that $\sigma(\cdot)$ is the sigmoid function.

The Hessian matrix $H$ of the regularized objective $\hat{L}(\bm{\theta})$ is strictly positive definite. This is because it is the sum of the positive semidefinite (PSD) Hessian from the convex logistic loss and the positive definite (PD) Hessian from the $\ell_2$ regularization term ($\lambda I_p$), assuming $\lambda > 0$. A positive definite Hessian is important for our framework, as it guarantees the approximating ellipsoid is well-defined and bounded, and ensures that $H$ is invertible for our closed-form solution.

In cases where the unregularized risk $\hat{J}(\bm{\theta})$ is minimized (e.g., for neural networks), the resulting Hessian is only guaranteed to be PSD and may be singular. For these models, we ensure positive definiteness in practice by adding a small stabilization term, $\alpha I_p$, $\alpha > 0$, to the computed Hessian, which is a standard technique to guarantee invertibility.

 \textbf{Optimization.} To find a robust counterfactual explanation, we want to compute an explanation $\mathbf{x}_c$ that is closest to the original point $\mathbf{x}_0$ while ensuring that its predicted outcome is above a target threshold $t$ for all models within the approximated Rashomon set.
In other words, for a given $\mathbf{x}_0$, we look for a minimally modified (measured in some distance; we will use $\ell_2$ here) $\mathbf{x}_c$, such that its predicted outcome achieves $t$ even when evaluated by the least favorable model $\bm{\theta}$ within the approximated Rashomon set $\hat{\mathcal{R}}(\epsilon)$.
Formally, this requirement leads to the following optimization problem:
\begin{equation}\label{eq:original_optimization}
    \min_{\mathbf{x}_c}  \quad  \|\mathbf{x}_c - \mathbf{x}_0\|_2^2 \quad \quad \text{s.t.} \quad \min_{\bm{\theta} \in \hat{\mathcal{R}}(\epsilon)} \bm{\theta}^\top \mathbf{x}_c \ge t. 
\end{equation}
The inner minimization problem admits a closed-form solution, as we show next in Theorem~\ref{th:closed_form_solution}. By reformulating the problem in this way, we get a tractable optimization framework that supports more efficient computation and analytical analysis of solution properties.

\newcommand{\TheoremClosedForm}
{
For positive-definite Hessian $H$, the inner minimization problem over the ellipsoid-approximated Rashomon set $\hat{\mathcal{R}}(\epsilon)$
has the closed-form solution
$\min_{\bm{\theta} \in \hat{\mathcal{R}}(\epsilon)} \bm{\theta}^{\top} \mathbf{x}_c =
        \hat{\bm{\theta}}^{\top} \mathbf{x}_c -
        \sqrt{2\epsilon\,\mathbf{x}_c^{\top} H^{-1} \mathbf{x}_c}.$
Moreover, for a given $\mathbf{x}_c$, the worst-case model $\bm{\theta}_{worst}(\mathbf{x}_c)$ that achieves this minimum is: 
$\bm{\theta}_{worst}(\mathbf{x}_c) =
        \hat{\bm{\theta}} -
        \sqrt{2\epsilon}\,
        \frac{H^{-1}\mathbf{x}_c}{\sqrt{\mathbf{x}_c^{\top}H^{-1}\mathbf{x}_c}}.$
}

\begin{theorem}[Closed-form solution]\label{th:closed_form_solution}
\TheoremClosedForm
\end{theorem}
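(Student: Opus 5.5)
The plan is to reduce the minimization of the linear function $\bm{\theta}\mapsto\bm{\theta}^\top\mathbf{x}_c$ over the ellipsoid $\hat{\mathcal{R}}(\epsilon)$ to minimization over a Euclidean ball via a change of variables, and then apply Cauchy--Schwarz.

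\textbf{Change of variables.} Since $H$ is positive definite, it has a symmetric positive definite square root $H^{1/2}$ with inverse $H^{-1/2}$. Write $\bm{\theta}=\hat{\bm{\theta}}+\sqrt{2\epsilon}\,H^{-1/2}\mathbf{u}$. The constraint $\frac12(\bm{\theta}-\hat{\bm{\theta}})^\top H(\bm{\theta}-\hat{\bm{\theta}})\le\epsilon$ then becomes exactly $\|\mathbf{u}\|_2\le 1$. This map is a bijection between the unit ball and $\hat{\mathcal{R}}(\epsilon)$. The objective becomes
\[
\bm{\theta}^\top\mathbf{x}_c=\hat{\bm{\theta}}^\top\mathbf{x}_c+\sqrt{2\epsilon}\,\mathbf{u}^\top\mathbf{v}, \qquad \mathbf{v}:=H^{-1/2}\mathbf{x}_c .
\]

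\textbf{Cauchy--Schwarz.} For $\|\mathbf{u}\|_2\le1$ we have $\mathbf{u}^\top\mathbf{v}\ge-\|\mathbf{v}\|_2$. Moreover $\|\mathbf{v}\|_2^2=\mathbf{x}_c^\top H^{-1/2}H^{-1/2}\mathbf{x}_c=\mathbf{x}_c^\top H^{-1}\mathbf{x}_c$, which gives the lower bound $\hat{\bm{\theta}}^\top\mathbf{x}_c-\sqrt{2\epsilon\,\mathbf{x}_c^\top H^{-1}\mathbf{x}_c}$.

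\textbf{Attainment.} If $\mathbf{x}_c\neq 0$, then $\mathbf{v}\ne0$ because $H^{-1/2}$ is invertible. The choice $\mathbf{u}^*=-\mathbf{v}/\|\mathbf{v}\|_2$ gives equality in Cauchy--Schwarz. Mapping back,
\[
\bm{\theta}_{worst}=\hat{\bm{\theta}}-\sqrt{2\epsilon}\,\frac{H^{-1/2}H^{-1/2}\mathbf{x}_c}{\|\mathbf{v}\|_2}=\hat{\bm{\theta}}-\sqrt{2\epsilon}\,\frac{H^{-1}\mathbf{x}_c}{\sqrt{\mathbf{x}_c^\top H^{-1}\mathbf{x}_c}},
\]
which is the stated formula. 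As a sanity check, substituting this back into both the constraint (it holds with equality, lying on the boundary) and the objective recovers the minimum value directly, so the square-root matrix is not strictly needed for verification.

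\textbf{Main obstacle.} The only real subtlety is bookkeeping rather than a hard step. The degenerate case $\mathbf{x}_c=0$ must be treated separately: the objective is then identically $0$ and equals the formula's value of $0$, but the stated $\bm{\theta}_{worst}$ is undefined, so any point of the ellipsoid is a minimizer. I would add a remark excluding this case. I would also note that uniqueness of the minimizer for $\mathbf{x}_c\neq0$ follows from the equality condition in Cauchy--Schwarz.
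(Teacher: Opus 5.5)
Your proposal is correct and follows essentially the same route as the paper: the paper substitutes $\bm{v}=H^{1/2}(\bm{\theta}-\hat{\bm{\theta}})$ to reduce the problem to minimizing $\bm{v}^\top H^{-1/2}\mathbf{x}_c$ over the ball $\|\bm{v}\|_2\le\sqrt{2\epsilon}$, then cites the standard closed form for a linear function over a Euclidean ball, which is exactly your rescaled change of variables plus Cauchy--Schwarz. Your explicit handling of $\mathbf{x}_c=\mathbf{0}$ and of uniqueness slightly refines the paper, which simply sets $\bm{\theta}_{worst}=\hat{\bm{\theta}}$ in that case; one small caveat is that your map is a bijection only for $\epsilon>0$, though the argument only needs it to be onto $\hat{\mathcal{R}}(\epsilon)$, which also holds at $\epsilon=0$.
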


We prove Theorem~\ref{th:closed_form_solution} in Appendix~\ref{app:proof1}. 
As a direct consequence of Theorem~\ref{th:closed_form_solution}, we obtain a practical criterion for verifying the robustness of a potential counterfactual. Specifically, since the theorem provides an explicit characterization of the output generated by the least favorable model $\bm{\theta} \in \hat{\mathcal{R}}(\epsilon)$ for a given $\mathbf{x}_c$, we can immediately determine if this $\mathbf{x}_c$ achieves the target $t$ across the entire set as we show in the following corollary.

\begin{corollary}\label{corollary:check}
A given counterfactual explanation $\mathbf{x}_c$ is robust with respect to all models in the ellipsoid-approximated Rashomon set $\hat{\mathcal{R}}(\epsilon)$ against a target score $t$ if and only if: 
$\hat{\bm{\theta}}^{\top} \mathbf{x}_c - \sqrt{2\epsilon\,\mathbf{x}_c^{\top} H^{-1} \mathbf{x}_c} \ge t.$
\end{corollary}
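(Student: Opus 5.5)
The corollary follows almost directly from Theorem~\ref{th:closed_form_solution}, so the plan is to unpack the definition of robustness and substitute the closed form. We call $\mathbf{x}_c$ robust with respect to $\hat{\mathcal{R}}(\epsilon)$ against target $t$ when every model in the ellipsoid clears the threshold:
\[
\bm{\theta}^\top \mathbf{x}_c \ge t \quad \text{for all } \bm{\theta} \in \hat{\mathcal{R}}(\epsilon).
\]
This matches the constraint in problem~\eqref{eq:original_optimization}.

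The first step is the standard equivalence between a universally quantified inequality and an inequality on the infimum: $\bm{\theta}^\top \mathbf{x}_c \ge t$ for all $\bm{\theta}\in\hat{\mathcal{R}}(\epsilon)$ holds if and only if $\inf_{\bm{\theta}\in\hat{\mathcal{R}}(\epsilon)} \bm{\theta}^\top\mathbf{x}_c \ge t$.
\begin{itemize}
\item For the forward direction, $t$ is a lower bound of the set of values $\bm{\theta}^\top\mathbf{x}_c$, so it is at most their infimum.
\item For the reverse direction, each value is at least the infimum, which is at least $t$.
\end{itemize}
Since $H$ is positive definite, $\hat{\mathcal{R}}(\epsilon)$ is a compact, nonempty ellipsoid (it contains $\hat{\bm{\theta}}$). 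The linear functional $\bm{\theta}\mapsto\bm{\theta}^\top\mathbf{x}_c$ is continuous, so the infimum is attained and equals the minimum appearing in Theorem~\ref{th:closed_form_solution}.

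The second step substitutes the closed-form value $\hat{\bm{\theta}}^{\top} \mathbf{x}_c - \sqrt{2\epsilon\,\mathbf{x}_c^{\top} H^{-1} \mathbf{x}_c}$ from Theorem~\ref{th:closed_form_solution} for this minimum. This yields exactly the stated inequality, in both directions at once.

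The only point needing care is the degenerate case $\mathbf{x}_c=\mathbf{0}$. There the formula for $\bm{\theta}_{worst}$ in Theorem~\ref{th:closed_form_solution} divides by zero, although the value formula remains meaningful. I would handle this separately: every model gives $\bm{\theta}^\top\mathbf{0}=0$, so robustness is equivalent to $0\ge t$, and the corollary's expression also reduces to $0 - 0 \ge t$. The equivalence therefore holds in this case too. I do not expect any step to be a real obstacle; the only care required is this edge case and the attainment of the minimum.
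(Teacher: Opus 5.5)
Your proposal is correct and follows the paper's route: the paper simply notes that the corollary follows immediately from Theorem~\ref{th:closed_form_solution}, since robustness over $\hat{\mathcal{R}}(\epsilon)$ is equivalent to the worst-case (minimum) score being at least $t$. You make this explicit by stating the equivalence between the universally quantified inequality and the minimum, and then substituting the closed-form value; your remarks on attainment of the minimum and on the degenerate case $\mathbf{x}_c=\mathbf{0}$ are harmless extra care, and the paper's proof of Theorem~\ref{th:closed_form_solution} also treats $\mathbf{x}_c=\mathbf{0}$ separately.
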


By substituting the closed-form solution from Theorem~\ref{th:closed_form_solution} into the original optimization problem \eqref{eq:original_optimization}, the robust counterfactual optimization problem becomes:
\begin{equation} \label{eq:final_optimization}
    \min_{\mathbf{x}_c}  \quad \|\mathbf{x}_c - \mathbf{x}_0\|_2^2 \quad \quad
    \text{s.t.}  \quad \hat{\bm{\theta}}^{\top} \mathbf{x}_c - \sqrt{2\epsilon\, \mathbf{x}_c^\top H^{-1} \mathbf{x}_c} \ge t. 
\end{equation}



The resulting problem is a quadratically constrained quadratic program (QCQP), which is a class of tractable convex optimization problems. We solve it efficiently using a gradient-based method. 
Leveraging the formulation \eqref{eq:final_optimization}, we implement two approaches for generating counterfactuals: a search-based method for generating data-supported counterfactuals lying on the data manifold, and a continuous optimization method for exploring potentially novel non-data supported solutions.

\textbf{Continuous CE generation.} 
For non-data supported counterfactuals, we solve the convex optimization problem in Equation~\eqref{eq:final_optimization} using a gradient-based approach for both linear models and multilayer perceptrons. This method directly optimizes for a counterfactual $\mathbf{x}_c$ in the input space. For neural networks, the process is guided by the worst-case model $\bm{\theta}_{worst}(\mathbf{x}_c)$  identified in the final layer's embedding space using  Theorem~\ref{th:closed_form_solution}, with the resulting gradients mapped back to the input features. The full details of this procedure are available in Appendix~\ref{appendix:optimization}.

\textbf{Data-supported CE generation}. 
For practical applications where counterfactuals should remain on the data manifold, we generate data-supported explanations based on the training set. Specifically, we evaluate the robust logit $\hat{\bm{\theta}}^{\top} \mathbf{x}_i - \sqrt{2\epsilon \mathbf{x}_i^\top H^{-1} \mathbf{x}_i}$ for each training data point $\mathbf{x}_i$ using Theorem~\ref{th:closed_form_solution}. Then, we compute the subset $S_{stable}$ by filtering out points where this robust prediction exceeds the target threshold $t$. Finally, we use k-d tree nearest neighbor search within $S_{stable}$ to identify the points closest to the input point $\mathbf{x}_0$ in terms of defined distance (for example, $\ell_2$), which gives us a counterfactual that is both robust and lies on the data manifold.


The continuous approach offers flexibility by exploring the entire feature space for new solutions, while the data-supported approach guarantees plausibility by restricting solutions to observed examples. We evaluate the performance of both approaches in Section~\ref{section:experiments} and focus on theoretical guarantees of our framework next.

\section{Theoretical Guarantees of ElliCE Counterfactuals}\label{section:properties}


In this section, we explore key theoretical properties of the counterfactual explanations generated under our framework. Note that we use $\ell_2$ distance as target distance between $\mathbf{x}_0$ and $\mathbf{x}_c$.
We show that the counterfactual explanations generated by our method are valid, unique, stable, and align with important directions in the feature space. We focus on each of these properties separately and proofs of theorems provided in this section are in Appendix \ref{app:proof2}.

\textbf{Validity.} By explicitly optimizing for the worst-case model  $\bm{\theta}_{worst}$ within the defined ellipsoid, any counterfactual $\mathbf{x}_c$ generated by ElliCE is, by construction, valid for all models in the approximated Rashomon set. This inherent validity ensures that the provided recourse is faithful, regardless of which model from the approximated Rashomon set was selected.

\textbf{Uniqueness.} By Theorem~\ref{thm:uniqueness}, that we state next, any solution $\mathbf{x}_c$ to the optimization problem \eqref{eq:final_optimization} is unique. Because our objective is strictly convex and the approximated Rashomon set is characterized as an ellipsoid, for a given $\mathbf{x}_0$, there can never be two distinct counterfactuals at the same $\ell_2$ distance from the original $\mathbf{x}_0$. In practical terms, this uniqueness guarantees that ElliCE provides a single solution for a given input and desired robustness level. This directly addresses and resolves ``explanation multiplicity'' \cite{Gunasekaran2024MAMCR}, where multiple, distinct explanation paths might exist for a single input (at least for $\ell_2$ distance).

\newcommand{\TheoremUniqueness
}{
If a solution $\mathbf{x}_c$ to the optimization problem \eqref{eq:final_optimization} exists, then $\mathbf{x}_c$ is unique.
}

\begin{theorem}[Uniqueness]
    \label{thm:uniqueness}
\TheoremUniqueness
\end{theorem}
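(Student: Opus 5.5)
The plan is to show that the problem \eqref{eq:final_optimization} minimizes a strictly convex function over a convex feasible set. Uniqueness then follows by the standard midpoint argument.

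\textbf{Step 1 (convexity of the feasible set).} Write the constraint as $c(\mathbf{x}) := t - \hat{\bm{\theta}}^\top\mathbf{x} + \sqrt{2\epsilon}\,\|H^{-1/2}\mathbf{x}\|_2 \le 0$. Since $H$ is positive definite, $H^{-1}$ is positive definite and has a symmetric square root $H^{-1/2}$, so $\sqrt{\mathbf{x}^\top H^{-1}\mathbf{x}} = \|H^{-1/2}\mathbf{x}\|_2$. This is a norm composed with a linear map, hence convex. Adding an affine function keeps $c$ convex, so the sublevel set $\{\mathbf{x}: c(\mathbf{x})\le 0\}$ is convex. It is in fact a second-order cone constraint.

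An alternative route is through Theorem~\ref{th:closed_form_solution}: the feasible set equals $\bigcap_{\bm{\theta}\in\hat{\mathcal{R}}(\epsilon)}\{\mathbf{x}: \bm{\theta}^\top\mathbf{x}\ge t\}$, an intersection of half-spaces, which is convex and closed. I will mention this as a sanity check. If $\mathcal{X}$ is an additional constraint, I assume it is convex (e.g.\ $\mathbb{R}^d$ or a box).

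\textbf{Step 2 (strict convexity of the objective).} The function $\mathbf{x}\mapsto\|\mathbf{x}-\mathbf{x}_0\|_2^2$ has Hessian $2I_d \succ 0$, so it is strictly convex. Equivalently, for $\mathbf{x}\neq\mathbf{y}$ the parallelogram identity gives
\[
\left\|\tfrac{\mathbf{x}+\mathbf{y}}{2}-\mathbf{x}_0\right\|_2^2 = \tfrac12\|\mathbf{x}-\mathbf{x}_0\|_2^2+\tfrac12\|\mathbf{y}-\mathbf{x}_0\|_2^2-\tfrac14\|\mathbf{x}-\mathbf{y}\|_2^2.
\]

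\textbf{Step 3 (contradiction).} Suppose two distinct minimizers $\mathbf{x}_c\neq\mathbf{x}_c'$ attain the optimal value $v$. By Step 1 their midpoint is feasible. By the identity in Step 2 the midpoint has objective value $v-\tfrac14\|\mathbf{x}_c-\mathbf{x}_c'\|_2^2<v$, which contradicts optimality. Hence the minimizer is unique.

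The only delicate point is Step 1: one must use the square-root (norm) form of the constraint, not the squared form. Squaring would turn the constraint into a nonconvex quadratic inequality. Stated this way, no real obstacle remains. Existence is assumed in the theorem and is not needed for the argument.
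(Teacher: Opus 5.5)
Your proof is correct and takes essentially the same route as the paper. The paper likewise writes the constraint as $\hat{\bm{\theta}}^\top\mathbf{x}_c - \sqrt{2\epsilon}\,\|H^{-1/2}\mathbf{x}_c\|_2 \ge t$, notes that this is a superlevel set of a concave function and hence convex, and then invokes strict convexity of the squared distance; your parallelogram-identity midpoint argument and the half-space-intersection check just spell out the standard fact the paper cites without proof.
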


\textbf{Stability.} The input data $\mathbf{x}_0$ is often subject to noise or minor variations. A desirable property is that such small changes in the input do not lead to drastically different counterfactuals. Our framework ensures this stability. Theorem~\ref{thm:stability} formally states that the process of generating robust counterfactuals is Lipschitz continuous with a constant of $1$. This means that if the original input $\mathbf{x}_0$ is perturbed by a small amount $\bm{\delta}$ to become $\mathbf{x}_0'$, the resulting robust counterfactual $\mathbf{x}_c'$ will not deviate from the original counterfactual $\mathbf{x}_c$ by more than the magnitude of the initial perturbation $\|\bm{\delta}\|_2$. This property guarantees the reliability of the explanations.

\newcommand{\TheoremStability
}{
Given an input $\mathbf{x}_0$, let $\mathbf{x}_c$ be the robust counterfactual solution for $\mathbf{x}_0$. If the input is perturbed to $\mathbf{x}_0' = \mathbf{x}_0 + \bm{\delta}$, where $\bm{\delta} \in \mathbb{R}^d$, and $\mathbf{x}_c'$ is the robust counterfactual solution for $\mathbf{x}_0'$, then $\|\mathbf{x}_c - \mathbf{x}_c'\|_2 \leq \|\bm{\delta}\|_2$.

}

\begin{theorem}[Stability]
    \label{thm:stability}
\TheoremStability
\end{theorem}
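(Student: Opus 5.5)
The plan is to recognize the robust counterfactual as a Euclidean projection onto a fixed closed convex set and then invoke the nonexpansiveness of projections. Define the feasible set
\[
K \coloneqq \{\mathbf{x} \in \mathbb{R}^d : \hat{\bm{\theta}}^{\top}\mathbf{x} - \sqrt{2\epsilon\,\mathbf{x}^{\top}H^{-1}\mathbf{x}} \ge t\}.
\]
By Theorem~\ref{th:closed_form_solution}, $K$ equals $\bigcap_{\bm{\theta}\in\hat{\mathcal{R}}(\epsilon)}\{\mathbf{x}: \bm{\theta}^{\top}\mathbf{x}\ge t\}$. This is an intersection of closed half-spaces, so $K$ is closed and convex.

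Convexity can also be seen directly. $H^{-1}$ is positive definite because $H$ is, so $\mathbf{x}\mapsto\sqrt{\mathbf{x}^{\top}H^{-1}\mathbf{x}} = \|H^{-1/2}\mathbf{x}\|_2$ is a norm and hence convex. The constraint function $\hat{\bm{\theta}}^{\top}\mathbf{x} - \sqrt{2\epsilon}\,\|H^{-1/2}\mathbf{x}\|_2$ is therefore concave, and its superlevel set is convex. It is closed by continuity.

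The crucial observation is that $K$ does not depend on $\mathbf{x}_0$. Problem~\eqref{eq:final_optimization} is exactly $\min_{\mathbf{x}\in K}\|\mathbf{x}-\mathbf{x}_0\|_2^2$, so $\mathbf{x}_c = P_K(\mathbf{x}_0)$ and $\mathbf{x}_c' = P_K(\mathbf{x}_0')$, where $P_K$ is the Euclidean projection. This also agrees with Theorem~\ref{thm:uniqueness}, which guarantees that the minimizer is unique once it exists.

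Next I would prove nonexpansiveness of $P_K$ directly from the variational inequality. For $\mathbf{x}_c = P_K(\mathbf{x}_0)$, first-order optimality of the strictly convex objective over the convex set $K$ gives
\[
(\mathbf{x}_0 - \mathbf{x}_c)^{\top}(\mathbf{y}-\mathbf{x}_c)\le 0 \quad \text{for all } \mathbf{y}\in K.
\]
To derive this, compare the objective at $\mathbf{x}_c + s(\mathbf{y}-\mathbf{x}_c)$ for small $s\in(0,1]$, using convexity of $K$.

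Apply the inequality with $\mathbf{y}=\mathbf{x}_c'$, apply the symmetric version for $\mathbf{x}_c'$ with $\mathbf{y}=\mathbf{x}_c$, and add the two. This yields
\[
\|\mathbf{x}_c-\mathbf{x}_c'\|_2^2 \le (\mathbf{x}_0-\mathbf{x}_0')^{\top}(\mathbf{x}_c-\mathbf{x}_c').
\]
Cauchy--Schwarz then gives $\|\mathbf{x}_c-\mathbf{x}_c'\|_2\le\|\bm{\delta}\|_2$. The case $\mathbf{x}_c=\mathbf{x}_c'$ is trivial.

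The main obstacle is the careful justification that $K$ is convex and closed, together with existence of the projection. Convexity and closedness are handled by the half-space intersection or concavity argument above. For existence, nonemptiness of $K$ is implicitly assumed in the statement, since both counterfactuals are presumed to exist. Given nonemptiness, existence follows from coercivity of $\|\mathbf{x}-\mathbf{x}_0\|_2^2$ on the closed set $K$.

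One subtlety is that the square-root term is nondifferentiable at $\mathbf{x}=0$. This is why I would state the optimality condition in its variational (directional) form rather than through KKT multipliers; that form only uses convexity of $K$ and differentiability of the objective.
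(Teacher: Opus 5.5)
Your proposal is correct and follows essentially the same route as the paper: it identifies the counterfactual as the Euclidean projection of $\mathbf{x}_0$ onto the fixed, nonempty, closed convex feasible set (via Lemma~\ref{lemma:feasible_set}) and invokes the 1-Lipschitz property of such projections. Your half-space characterization of the set via Theorem~\ref{th:closed_form_solution} and your variational-inequality derivation of nonexpansiveness simply spell out steps the paper cites rather than proves.
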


\textbf{Alignment with Important Feature Directions.} An insightful explanation should not only provide a path to a different outcome but also highlight which features are most critical in achieving that change, particularly under model uncertainty. The robustness penalty term, $C_{rob}(\epsilon, \mathbf{x}_c) = \sqrt{2\epsilon \mathbf{x}_c^\top H^{-1} \mathbf{x}_c}$, plays a key role in this alignment. Theorem~\ref{thm:VI} formalizes the intuition that to minimize this penalty (and thus find an efficient robust counterfactual), the recourse direction $\mathbf{x}_c$ should align with directions in the feature space that are most sensitive or influential, as captured by the eigenvectors of the Hessian matrix $H$. Specifically, under reasonable conditions, the penalty is minimized when the counterfactual aligns with the leading eigenvector of $H$, which often corresponds to the direction of greatest sensitivity. This encourages the counterfactual to suggest changes along features that have a significant impact, making the explanation more informative.

\newcommand{\TheoremAlignment
}{
Define the robustness penalty as $C_{rob}(\epsilon, \mathbf{x}_c) = \sqrt{2\epsilon \mathbf{x}_c^\top H^{-1} \mathbf{x}_c}$ for a symmetric positive definite Hessian $H$. Let $\lambda_1$ be the largest eigenvalue of $H$ with corresponding eigenvector $\mathbf{q}_1$, and assume that $\lambda_1$ is unique.  Then, for a fixed non-zero norm $\|\mathbf{x}_c\|_2$, the robustness penalty term $C_{rob}(\epsilon, \mathbf{x}_c)$ is minimized when the counterfactual vector $\mathbf{x}_c$ is aligned (i.e., collinear) with the eigenvector $\mathbf{q}_1$.
}

\begin{theorem}[Alignment with Important Feature Directions]
    \label{thm:VI}
\TheoremAlignment
\end{theorem}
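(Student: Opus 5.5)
Since $\sqrt{2\epsilon\,\cdot}$ is monotone nondecreasing on $[0,\infty)$ (and constant if $\epsilon=0$, a degenerate case we set aside by assuming $\epsilon>0$), minimizing $C_{rob}(\epsilon,\mathbf{x}_c)$ over $\{\mathbf{x}_c : \|\mathbf{x}_c\|_2 = r\}$ for fixed $r>0$ is equivalent to minimizing the quadratic form $\mathbf{x}_c^\top H^{-1}\mathbf{x}_c$ on the sphere of radius $r$. The plan is therefore to reduce the statement to a Rayleigh-quotient problem for $H^{-1}$. We then identify the minimizer via the spectral decomposition of $H$.

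First, because $H$ is symmetric positive definite, we write $H = Q\Lambda Q^\top$ with $Q=[\mathbf{q}_1,\dots,\mathbf{q}_p]$ orthogonal and $\Lambda=\mathrm{diag}(\lambda_1,\dots,\lambda_p)$, ordered so that $\lambda_1>\lambda_2\ge\dots\ge\lambda_p>0$. Strictness of the first inequality comes from the uniqueness assumption on $\lambda_1$. Then $H^{-1}=Q\Lambda^{-1}Q^\top$, whose eigenvalues are $1/\lambda_j$ with the same eigenvectors. The smallest of these is $1/\lambda_1$, and it is strictly smallest.

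Next, set $\mathbf{a}=Q^\top\mathbf{x}_c$, so that $\|\mathbf{a}\|_2=r$ by orthogonality. Then
\[
\mathbf{x}_c^\top H^{-1}\mathbf{x}_c=\sum_{j=1}^p \frac{a_j^2}{\lambda_j}\;\ge\;\frac{1}{\lambda_1}\sum_{j=1}^p a_j^2=\frac{r^2}{\lambda_1}.
\]
Equality holds if and only if $a_j=0$ for every $j$ with $1/\lambda_j>1/\lambda_1$, which by uniqueness of $\lambda_1$ means every $j\ge 2$. So equality holds exactly when $\mathbf{x}_c=\pm r\,\mathbf{q}_1$ (taking $\|\mathbf{q}_1\|_2=1$), that is, when $\mathbf{x}_c$ is collinear with $\mathbf{q}_1$. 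Substituting back gives the minimal penalty $C_{rob}=r\sqrt{2\epsilon/\lambda_1}$, attained only along $\mathbf{q}_1$.

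There is no serious obstacle here. The only point needing care is the equality case, which is where the uniqueness of $\lambda_1$ enters: without it, any unit vector in the top eigenspace would also be a minimizer, and the conclusion about collinearity with $\mathbf{q}_1$ specifically would fail. I would also note explicitly that the two collinear choices $\pm\mathbf{q}_1$ give the same penalty. This is consistent with the statement, which only asserts collinearity, not a particular sign.
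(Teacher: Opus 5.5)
Your proof is correct and follows the paper's own argument. Both diagonalize $H$, pass to the coordinates $Q^\top\mathbf{x}_c$, and observe that $\sum_j a_j^2/\lambda_j$ on a sphere is minimized by placing all the mass on the coordinate with the strictly smallest weight $1/\lambda_1$. Your explicit bound $\mathbf{x}_c^\top H^{-1}\mathbf{x}_c \ge r^2/\lambda_1$, with its equality characterization, makes the role of the uniqueness of $\lambda_1$ slightly more rigorous than the paper's informal ``place the whole mass'' remark.
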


\textbf{Price of robustness.} Previous literature has observed the trade-off between robustness and proximity \cite{ForelParmentierVidal2022}. Indeed, intuitively, increasing robustness and ensuring validity across a larger set of potential models may require more changes to the input features, effectively increasing the proximity. 
This implies a ``cost'' for greater robustness that Theorem~\ref{thm:tradeoff} formalizes.

\newcommand{\TheoremTradeoff
}{
For an input $\mathbf{x}_0$ such that $\hat{\bm{\theta}}^\top \mathbf{x}_0 \leq t$, where $\hat{\bm{\theta}}$ is ERM, let $\mathbf{x}_c^*(\epsilon)$ be the optimal robust counterfactual for a given robustness level $\epsilon > 0$, and let $\nu(\epsilon) = \|\mathbf{x}_c^*(\epsilon) - \mathbf{x}_0\|_2^2$ be its $\ell_2$ distance from $\mathbf{x}_0$. If  $\nu(\epsilon_1) > 0$ and $\mathbf{x}_c^*(\epsilon_1) \neq \mathbf{0}$, then for any two robustness levels $0 < \epsilon_1 < \epsilon_2$, $\nu(\epsilon_1) < \nu(\epsilon_2)$. 
}

\begin{theorem}[Robustness-Proximity Trade-off]
    \label{thm:tradeoff}
\TheoremTradeoff
\end{theorem}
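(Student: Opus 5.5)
Writing $g_\epsilon(\mathbf{x}) = \hat{\bm{\theta}}^\top \mathbf{x} - \sqrt{2\epsilon\,\mathbf{x}^\top H^{-1}\mathbf{x}}$, Corollary~\ref{corollary:check} shows that the feasible set of \eqref{eq:final_optimization} is
\[
F(\epsilon)=\{\mathbf{x}: g_\epsilon(\mathbf{x})\ge t\}.
\]
The plan is to show that these feasible sets are nested and strictly shrink, and then use the uniqueness of the optimum (Theorem~\ref{thm:uniqueness}) to turn the weak inequality $\nu(\epsilon_1)\le\nu(\epsilon_2)$ into a strict one.

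\textbf{Step 1 (nesting).} Fix $\mathbf{x}$. The quantity $\sqrt{\mathbf{x}^\top H^{-1}\mathbf{x}}\ge 0$, and it is strictly positive for $\mathbf{x}\neq\mathbf{0}$ because $H^{-1}$ is positive definite. Hence $g_\epsilon(\mathbf{x})$ is nonincreasing in $\epsilon$, and strictly decreasing when $\mathbf{x}\ne\mathbf{0}$. So $\epsilon_1<\epsilon_2$ gives $F(\epsilon_2)\subseteq F(\epsilon_1)$. Since $\mathbf{x}_c^*(\epsilon_2)$ is feasible for the $\epsilon_1$ problem, we get $\nu(\epsilon_1)\le\nu(\epsilon_2)$.

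\textbf{Step 2 (strictness).} Suppose, for contradiction, that $\nu(\epsilon_1)=\nu(\epsilon_2)$. Then $\mathbf{x}_c^*(\epsilon_2)$ is feasible for the $\epsilon_1$ problem and attains its optimal value. The objective is strictly convex and $F(\epsilon_1)$ is convex, because $\sqrt{\mathbf{x}^\top H^{-1}\mathbf{x}}$ is a norm and so $g_\epsilon$ is concave. By Theorem~\ref{thm:uniqueness} this forces $\mathbf{x}_c^*(\epsilon_2)=\mathbf{x}_c^*(\epsilon_1)=:\mathbf{x}^*$.

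We then show that the constraint is active at $\mathbf{x}^*$ for $\epsilon_1$, i.e.\ $g_{\epsilon_1}(\mathbf{x}^*)=t$. If instead $g_{\epsilon_1}(\mathbf{x}^*)>t$, then by continuity of $g_{\epsilon_1}$ the point $\mathbf{x}^*+s(\mathbf{x}_0-\mathbf{x}^*)$ stays feasible for small $s>0$. It is strictly closer to $\mathbf{x}_0$, because $\nu(\epsilon_1)>0$ means $\mathbf{x}^*\neq\mathbf{x}_0$. This contradicts optimality.

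Now $\mathbf{x}^*\ne\mathbf{0}$ by hypothesis, so Step 1 gives strict monotonicity:
\[
g_{\epsilon_2}(\mathbf{x}^*)<g_{\epsilon_1}(\mathbf{x}^*)=t.
\]
Thus $\mathbf{x}^*$ is infeasible for $\epsilon_2$, contradicting $\mathbf{x}^*=\mathbf{x}_c^*(\epsilon_2)$. Hence $\nu(\epsilon_1)<\nu(\epsilon_2)$.

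\textbf{Main obstacle.} The delicate point is the activeness argument in Step 2. It needs convexity of $F(\epsilon_1)$ to invoke uniqueness, and it needs $\mathbf{x}^*\neq\mathbf{x}_0$ to move toward $\mathbf{x}_0$; the latter is exactly what $\nu(\epsilon_1)>0$ supplies. The hypothesis $\hat{\bm{\theta}}^\top\mathbf{x}_0\le t$ only serves to explain why $\nu(\epsilon_1)>0$ is the natural case. I would also note that existence of $\mathbf{x}_c^*(\epsilon_2)$ is implicitly assumed, as it is in the statement.
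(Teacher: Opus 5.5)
Your proposal is correct and follows essentially the same route as the paper. Nesting $F(\epsilon_2)\subseteq F(\epsilon_1)$ gives $\nu(\epsilon_1)\le\nu(\epsilon_2)$; uniqueness turns equality of the values into $\mathbf{x}_c^*(\epsilon_1)=\mathbf{x}_c^*(\epsilon_2)=\mathbf{x}^*\neq\mathbf{0}$; and the active $\epsilon_1$-constraint together with the strict decrease of $g_\epsilon(\mathbf{x}^*)$ in $\epsilon$ gives the contradiction, with your segment argument toward $\mathbf{x}_0$ simply making explicit the paper's claim that the minimizer ``must lie on the boundary'' of $\mathcal{C}(\epsilon_1)$.
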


The practical impact of this trade-off is significant. Overly robust counterfactuals may become distant and unactionable, while insufficient robustness compromises recourse reliability under model shifts. This underscores the need for methods that efficiently explore this trade-off by achieving substantial robustness with reasonable proximity---a goal that ElliCE effectively meets (Figure \ref{fig:side_by_side_minipage}). 




When applying our theoretical results to MLPs, \textit{the validity guarantee is fully preserved in the input space}, which is a key result. The formal guarantees for uniqueness (Theorem~\ref{thm:uniqueness}), stability (Theorem~\ref{thm:stability}), and the robustness-proximity trade-off (Theorem~\ref{thm:tradeoff}), however, depend on the convexity of the feasible set (see proof of Theorem  \ref{thm:uniqueness}). While this convexity is guaranteed in the embedding space $h(\mathbf{x})$, the nonlinear mapping from the input space ($\mathbf{x} \mapsto h(\mathbf{x})$) means it is not guaranteed to hold there. This distinction highlights a fundamental challenge for robust recourse in deep models and underscores that extending these formal guarantees to the input space is a promising direction for future work. Nonetheless, these theorems provide a principled geometric foundation for our approach and hold for linear models and embedding spaces. Next, we present empirical results showing that ElliCE's performance is consistent with its theoretical guarantees.

\section{Evaluation Pipeline and Experimental Results}\label{section:experiments}


In our evaluation pipeline, we work with the hypothesis space of linear models and multi-layer perceptrons (MLPs). However, our results can be extended to other hypothesis spaces that can be optimized with gradient descent, such as neural additive models \cite{agarwal2021neural}. 
In this section, we empirically show that ElliCE is faster and more robust as compared to other methods that produce robust counterfactuals.
Please see Appendix \ref{appendix:experiments} for additional details and results.


\textbf{Datasets.} We consider nine datasets from high-stakes decision domains such as lending (Australian Credit \cite{statlog_143}, FICO \cite{fico2018heloc}, German Credit \cite{statlog_144}, Banknote  \cite{banknote_authentication_267}), healthcare (Parkinson’s \cite{parkinsons_telemonitoring_189}, Diabetes \cite{smith1988pima}), and recidivism (COMPAS \cite{angwin2016machine}), as well as benchmark datasets (Wine Quality \cite{wine_quality_186}, Extended Iris \cite{samy_baladram_2023}). Please see Table \ref{tab:datasets} for detailed dataset descriptions and preprocessing notes. We used datasets with predominantly categorical features (FICO, Australian Credit, COMPAS, German Credit, Diabetes) for data-supported CE generation, and datasets with continuous features (Diabetes, Parkinson’s, Banknote, Iris, and Wine Quality) for continuous methods.
We balanced the datasets, standardized continuous features, and, for some datasets, dropped rows with missing values. 



\textbf{Baselines.} We compare ElliCE to other methods that are designed to generate robust counterfactual explanations, such as  T:Rex, Interval Abstractions (we refer to it as  Delta-robustness \cite{JiangLeofanteRagoToni2022}), PROPLACE, and ROAR. 
\textit{T:Rex} \cite{hamman2023robust} generates robust counterfactuals for neural networks using a Stability measure that depends on variance. It quantifies robustness to naturally occurring model changes, providing probabilistic validity guarantees. It is a successor of  RobX \cite{DuttaLongMishraTilliMagazzeni2022}, which targets tree-based ensembles.
\textit{Interval Abstractions} \cite{JiangLeofanteRagoToni2024}  ensures that  counterfactuals are robust to bounded changes in model parameters (weights and biases). It uses interval neural networks and mixed-integer linear programming.
\textit{PROPLACE} \cite{pmlr-v222-jiang24a} 
formulates counterfactual generation as a bi-level robust optimization problem: it enforces plausibility by restricting solutions to the convex hull of realistic samples and uses interval bounds on neural networks to ensure robustness.
\textit{ROAR} \cite{upadhyay2021towards} optimizes counterfactual validity under bounded model parameter perturbations using a robustness-constrained loss formulation.
 Most implementations of our baselines follow \citet{jiang2025robustx}.

\textbf{Evaluators.} Precisely computing the entire Rashomon set for the hypothesis spaces that we consider is intractable. Therefore, to evaluate the robustness and validity of counterfactual explanations generated by ElliCE and the baselines, we rely on established techniques that approximate or characterize this set. These approaches generate diverse collections of near-optimal models, each serving as a proxy for the actual Rashomon set. 
Our evaluators include:
\textit{Random Retrain}, which retrains models multiple times with different random seeds to capture procedural variability.
\textit{Rashomon Dropout} \cite{hsu2024dropout}, which applies Gaussian dropout to a single trained neural network’s weights during inference, creating an ensemble of stochastically perturbed models.
\textit{Adversarial Weight Perturbation (AWP)} \cite{hsu2022rashomon}, which generates diverse models from an initially trained model by applying small perturbations to its weights. We define the objective tolerance (Rashomon parameter) for the evaluators as $\varepsilon_{\text{target}}$, which is distinct from $\epsilon$. This separation ensures that the Rashomon set used for evaluation is controlled independently from the robustness tolerance $\epsilon$ used by ElliCE.

\textbf{Metrics.} We evaluate the generated counterfactual explanations based on four metrics: validity, proximity, robustness, and plausibility. 
 \textit{Validity} measures whether a generated counterfactual $\mathbf{x}_{c}$ for a given input $\mathbf{x}_{0}$ successfully achieves the desired outcome $c$ when evaluated on the original model $f_{\text{baseline}}$ for which it was generated, $\text{Validity} = \frac{1}{n} \sum_{i=1}^{n} \mathbf{1}[f_{\text{baseline}}(\mathbf{x}_{ci})=c].$
\textit{Proximity} measures the closeness of a counterfactual $\mathbf{x}_{c}$ to the original instance $\mathbf{x}_{0}$. We primarily report the $\ell_2$ distance: $\|\mathbf{x}_{c} - \mathbf{x}_{0}\|_2$. Lower values indicate less change required and are thus better.
\textit{Plausibility} checks whether the generated counterfactuals lie in realistic regions of the feature space. 
Our data-supported counterfactuals are inherently plausible, as they lie on the data manifold. For continuous approach, because ElliCE enforces robustness by pushing counterfactuals away from the decision boundary, the resulting counterfactuals might shift toward higher-density regions of the target class. Nevertheless, we evaluate plausibility using the Local Outlier Factor (LOF) \cite{pmlr-v222-jiang24a}, 
a standard outlier-detection metric.
LOF values close to~1 indicate high plausibility, whereas larger values suggest 
the counterfactual is in a low-density region.
\textit{Robustness} computes whether the generated counterfactual $\mathbf{x}_{c}$ remains valid (i.e., still achieves the desired outcome $c$) for all models within an evaluator ensemble $\tilde{\mathcal{R}}(\varepsilon_{\text{target}})$. Total is calculated as the average across all $n$ counterfactual points:
$\text{Robustness} = \frac{1}{n} \sum_{i=1}^{n} \mathbf{1}\left[ \forall f_{\bm{\theta}} \in \tilde{\mathcal{R}}(\varepsilon_{\text{target}}), f_{\bm{\theta}}(\mathbf{x}_{{c_i}}) = c \right].$
A higher robustness score (closer to 1) is better, indicating that more counterfactual explanations are robust to model changes.

\textbf{Experimental Setup.} For evaluators, we define a target multiplicity tolerance globally in range \(\varepsilon_\text{target} \in [0, 0.1]\).  
We provide discussion on how to choose ElliCE's \(\epsilon\) in Appendix \ref{appendix:experiments}.
For every dataset, we performed 4-fold stratified cross-validation. Within each fold, the training data are further split into 80\% for training and 20\% for validation.
The procedure within each inner fold is as follows: (1) We train a base model $f_\text{baseline}$, which serves as a reference model for all counterfactual generation methods.
(2) Using $f_\text{baseline}$ as a reference (if required by the evaluation method), we generate $\varepsilon_\text{target}$-Rashomon set.  
(3) Multiplicity parameters for each baseline  ($\epsilon$ for ElliCE, $\delta$ for Delta Robustness, ROAR and PROPLACE, or $\tau$ for T:Rex)  are tuned via grid search on the validation set with a goal of maximizing validity and robustness. We allocate approximately the same amount of time for each method to tune its parameters with a hard maximum of 8 hours per method per data fold (as a result, we could not run PROPLACE for Parkinsons dataset).
(4) Final performance metrics are reported on the held-out split of the outer fold. Note that due to our tuning procedure, we expect high validity metric for ElliCE and baselines. Indeed, for data-supported methods validity is consistently 100\% across datasets, so we do not report it.

We conducted experiments on logistic regression and multilayer perceptrons. Consistent with prior work \cite{upadhyay2021towards, pmlr-v222-jiang24a}, we focus on generating counterfactuals that change predicted labels from 0 to 1. Linear models are trained using Scikit-learn's LBFGS solver with an $\ell_2$ penalty (regularization parameter 0.001). MLPs are trained with the Adam optimizer (learning rate 0.001), early stopping, and $\ell_2$ regularization parameter 0.001.
For evaluation, we generate one counterfactual per method for each data point in the held-out set. Each counterfactual is then evaluated against the three evaluators (Random Retrain, Rashomon Dropout, AWP). The exact construction algorithms for these evaluators are described in Appendix~\ref{appendix:emp_rash_c}. Reported metrics are averaged across data points and folds, with plots displaying the mean and standard error.

\begin{figure}[h]
    \centering
\includegraphics[width=0.95\textwidth]{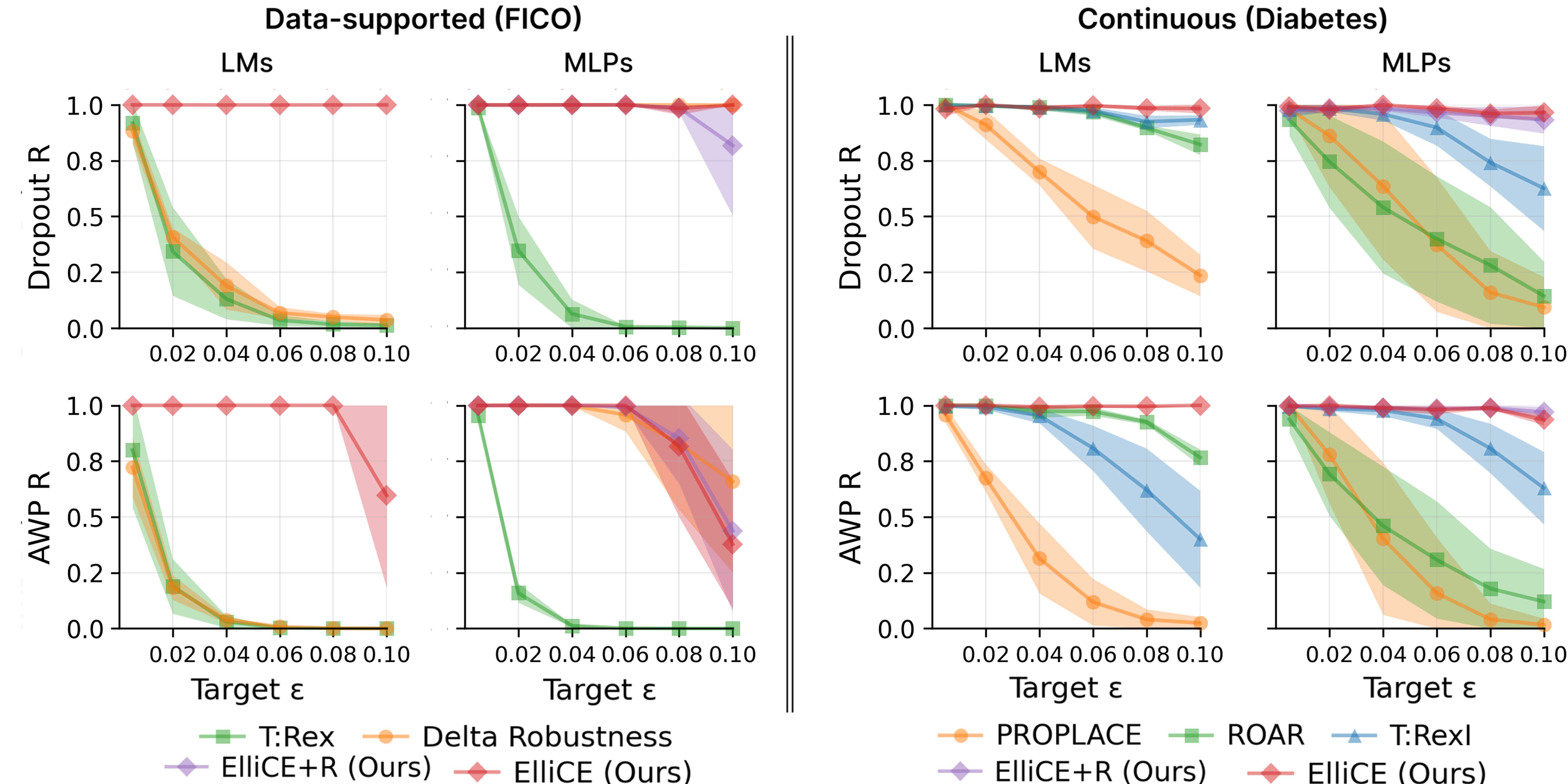}
   \caption{Robustness evaluation of ElliCE against baselines. The plot displays the robustness metric (y-axis) as a function of the target robustness level $\varepsilon_{\text{target}}$ for the evaluators (x-axis). ElliCE consistently outperforms all baselines across all robustness levels. See Appendix \ref{appendix:experiments} for more figures. 
   For ElliCE+R for MLPs, we apply additional regularization to the Hessian, using $\lambda = 0.1$ instead of $0.001$. 
}
    \label{fig:target_eps_rob}
\end{figure}

\begin{table}[h]
\centering
\footnotesize
\caption{Performance of counterfactual methods on MLPs. For evaluators, we set $\varepsilon_{\text{target}}$ to 10\% of the  training objective ($\varepsilon_{\text{target}}=0.1\times \hat{L}(f_{\text{baseline}})$). \(\mathbf{R}\) here stands for Robustness, $\mathbf{L2}$ for proximity, and PROP stands for PROPLACE. See Appendix \ref{appendix:experiments} for results on other datasets.}
\label{tab:performance_comparison_sub}
\setlength{\tabcolsep}{4pt}
\begin{tabular}{l|l|cc|cc|cc}
\hline
\multirow{3}{*}{\textbf{Data}} & \multirow{3}{*}{\textbf{Method}} & \multicolumn{6}{c}{\textbf{Evaluation Metric}} \\
\cline{3-8}
& & \multicolumn{2}{c|}{Retrain} & \multicolumn{2}{c|}{Dropout Rashomon} & \multicolumn{2}{c}{AWP} \\
& & \textbf{R$\uparrow$} & \textbf{L2$\downarrow$} & \textbf{R$\uparrow$} & \textbf{L2$\downarrow$} & \textbf{R$\uparrow$} & \textbf{L2$\downarrow$} \\
\hline
\multicolumn{8}{c}{\textbf{Data-supported (DS)}} \\
\hline

\multirow{3}{*}{FICO} 
& ElliCE   
& \textbf{1.00 $\pm$ 0.00} & 3.53 $\pm$ 0.17 & \textbf{1.00 $\pm$ 0.00} & 4.91 $\pm$ 0.22 & \textbf{1.00 $\pm$ 0.00} & 5.06 $\pm$ 0.29 \\
& DeltaRob 
& \textbf{1.00 $\pm$ 0.00} & 4.00 $\pm$ 0.10 & \textbf{1.00 $\pm$ 0.00} & 5.67 $\pm$ 0.58 & 0.96 $\pm$ 0.07 & 5.70 $\pm$ 0.72 \\
& T:Rex    
& 0.83 $\pm$ 0.08 & 3.12 $\pm$ 0.07 & 0.01 $\pm$ 0.00 & 3.07 $\pm$ 0.11 & 0.00 $\pm$ 0.00 & 2.77 $\pm$ 0.19 \\
\hline

\multirow{3}{*}{German} 
& ElliCE 
& \textbf{1.00 $\pm$ 0.00} & 3.48 $\pm$ 0.10 & \textbf{1.00 $\pm$ 0.00} & 4.32 $\pm$ 0.31 & \textbf{1.00 $\pm$ 0.00} & 4.00 $\pm$ 0.24 \\
& DeltaRob 
& 0.98 $\pm$ 0.01 & 3.45 $\pm$ 0.06 & 0.99 $\pm$ 0.02 & 4.00 $\pm$ 0.15 & \textbf{1.00 $\pm$ 0.00} & 3.99 $\pm$ 0.22 \\
& T:Rex    
& 0.99 $\pm$ 0.01 & 3.47 $\pm$ 0.04 & 0.97 $\pm$ 0.02 & 4.03 $\pm$ 0.20 & 0.99 $\pm$ 0.01 & 4.23 $\pm$ 0.24 \\
\hline
\hline
\multicolumn{8}{c}{\textbf{Continuous (CNT)}} \\
\hline
\multirow{4}{*}{Diabetes}
& ElliCE   
& \textbf{0.98 $\pm$ 0.01} & 2.15 $\pm$ 0.39 & \textbf{0.99 $\pm$ 0.02} & 3.05 $\pm$ 0.34 & \textbf{0.98 $\pm$ 0.02} & 3.22 $\pm$ 0.40 \\
& PROP
& 0.48 $\pm$ 0.48 & 2.01 $\pm$ 0.05 & 0.19 $\pm$ 0.28 & 2.01 $\pm$ 0.05 & 0.08 $\pm$ 0.19 & 2.01 $\pm$ 0.05 \\
& ROAR     
& 0.86 $\pm$ 0.11 & 1.86 $\pm$ 0.24 & 0.40 $\pm$ 0.28 & 1.86 $\pm$ 0.24 & 0.31 $\pm$ 0.26 & 1.86 $\pm$ 0.24 \\
& T:Rex    
& 0.94 $\pm$ 0.03 & 2.47 $\pm$ 0.86 & 0.90 $\pm$ 0.08 & 4.18 $\pm$ 0.36 & 0.94 $\pm$ 0.04 & 4.18 $\pm$ 0.36 \\
\hline
\end{tabular}
\end{table}

\subsection{ElliCE Generates  Robust Counterfactuals}

Figure~\ref{fig:target_eps_rob} illustrates the relationship between the evaluators' multiplicity level $\varepsilon_{\text{target}}$ and the achieved robustness for the baselines. We report results for both linear models and  MLPs for data-supported and continuous methods. Across different  settings, we observe that ElliCE consistently produces more robust counterfactuals than baselines. Notably, ElliCE's counterfactuals generally do not exhibit a decrease in robustness as $\varepsilon_{\text{target}}$ increases, demonstrating stability under different levels of target multiplicity. This  robustness, however, can sometimes come with a greater distance from the original instance (i.e., longer CEs), a trade-off that we saw in Section~\ref{section:properties} and report in Table~\ref{tab:performance_comparison_sub}.
For the MLP setting, 
our empirical results in Figure~\ref{fig:target_eps_rob} and 
Table~\ref{tab:performance_comparison_sub} suggest that ElliCE's ellipsoidal approximation offers good flexibility, allowing it to adapt to the underlying loss function's shape.




\subsection{ElliCE is Efficient}

Tables~\ref{tab:runtime_discrete_sp}, ~\ref{tab:runtime_dats_cm} and ~\ref{tab:runtime_continuous_cm} clearly demonstrate ElliCE's advantage in computational efficiency. Our method is consistently faster than baselines with speedups of up to three orders of magnitude. 
The runtimes of both T:Rex and Delta Robustness tend to grow substantially with the dataset size. In contrast, ElliCE remains lightweight and exhibits better scalability. Across all datasets tested, ElliCE's absolute runtimes for generating a counterfactual remain under  two seconds. This efficiency comes from a closed-form solution for the inner optimization problem (Theorem~1). 
The primary preprocessing cost involves computing and inverting the Hessian matrix $H$, requiring $O(np^2)$ for computation and $O(p^3)$ for inversion, performed once per model (where $n$ is the training set size and $p$ is the parameter dimension). Per-instance counterfactual generation then requires only $O(p^2)$ operations.

\begin{table}[t]
\centering
\footnotesize
\caption{Runtime performance and speedups for data-supported CE for MLPs. }
\setlength{\tabcolsep}{4pt}
\begin{tabular}{l|ccc||cc}
\hline
 & \multicolumn{3}{c||}{Absolute (seconds)} & \multicolumn{2}{c}{Relative (speedup)} \\
\textbf{Dataset} & \textbf{ElliCE} & \textbf{T:Rex} & \textbf{Delta Rob} & \textbf{Over T:Rex} & \textbf{Over Delta Rob} \\
\hline
FICO & $1.792 \pm 0.123$ & $7.006 \pm 0.058$ & $242.035 \pm 1.161$ & $3.91\times$ & $135.04\times$ \\
COMPAS & $0.526 \pm 0.011$ & $3.534 \pm 0.128$ & $360.480 \pm 6.701$ & $6.72\times$ & $685.34\times$ \\
Australian & $0.057 \pm 0.011$ & $0.281 \pm 0.006$ & $2.783 \pm 0.032$ & $4.92\times$ & $48.64\times$ \\
Diabetes & $0.053 \pm 0.001$ & $0.296 \pm 0.006$ & $1.922 \pm 0.032$ & $5.60\times$ & $36.33\times$ \\
German & $0.101 \pm 0.001$ & $0.432 \pm 0.013$ & $9.905 \pm 0.068$ & $4.27\times$ & $97.88\times$ \\
\hline
\end{tabular}
\label{tab:runtime_discrete_sp}
\end{table}


\begin{figure}[t]
    \centering
    \includegraphics[width =\textwidth]{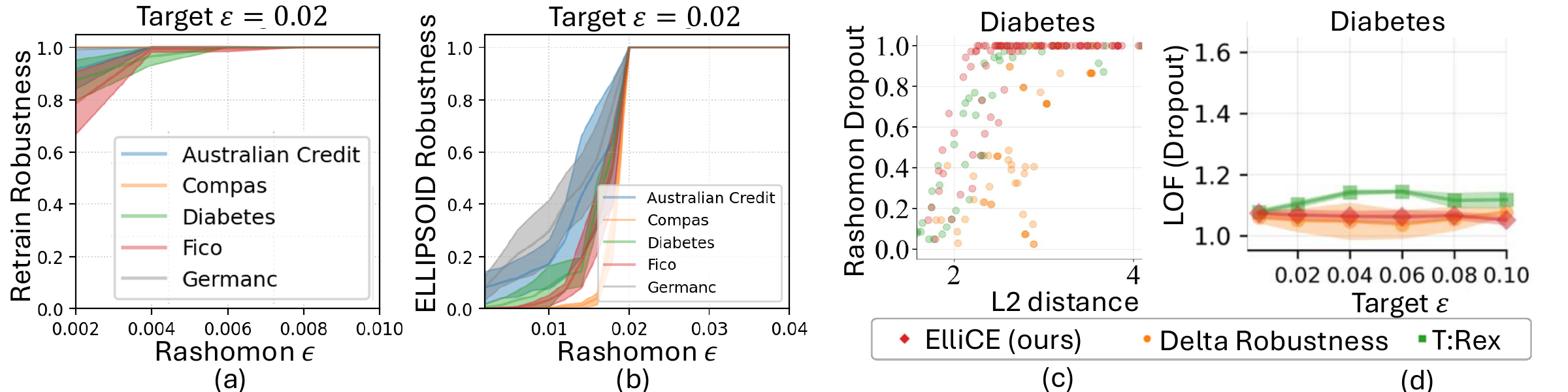}
    \caption{(a,b) Sensitivity of ElliCE's robustness (y-axis) to its internal $\epsilon$ hyperparameter (x-axis). Robustness is evaluated against Random Retrain (a) and an Ellipsoidal Rashomon set approximation defined with a fixed $\varepsilon_{\text{target}}$ (b).
(c, d) Robustness  vs. $\ell_2$ proximity trade-off (c) and  plausibility (d) of counterfactuals generated by ElliCE and baselines on Diabetes dataset. }
\label{fig:side_by_side_minipage}
\end{figure}

\subsection{Sensitivity Analysis}
Figure~\ref{fig:side_by_side_minipage} (a,b) shows an empirical sensitivity analysis of ElliCE's robustness with respect to its internal Rashomon parameter $\epsilon$. The plots show how the achieved robustness (evaluated against the Random Retrain and Ellipsoidal Rashomon set evaluators, respectively) varies as ElliCE's internal $\epsilon$ changes. These results illustrate that ElliCE can achieve high levels of robustness even for relatively small values of its internal $\epsilon$ when evaluated against the Retrain ensemble. For the middle plot (Ellipsoidal evaluator), while initial robustness may be lower for smaller internal $\epsilon$ values, the performance increases sharply, as $\epsilon$ approaches the targeted robustness level.

\subsection{Robustness-Proximity Trade-off and Plausibility}

Figure~\ref{fig:side_by_side_minipage}(c) illustrates the inherent trade-off between robustness and proximity for CEs generated by ElliCE, supporting our discussion in Section~\ref{section:properties}. 
While the trade-off occurs for all baselines, ElliCE achieves the highest robustness at a given length level.
Understanding this trade-off is key to selecting counterfactuals that balance reliability under model shifts with practical user actionability. ElliCE provides a mechanism to navigate this by allowing control over its Rashomon parameter.
We also observed good plausibility across all baselines and datasets, as supported by  Figure~\ref{fig:side_by_side_minipage}(d)  and \ref{fig:lof_ds_lin}. 
All LOF values tend to be close to~1, thus the generated counterfactuals lie on the data manifold.

\subsection{Actionability}

To ensure that generated recourse remains realistic and feasible, we incorporate actionability constraints that specify which features can change and within what ranges. ElliCE supports restrictions on features, including immutable features (e.g., age, citizenship) as well as range and direction constraints such as income or loan duration. It also allows for sparse counterfactuals by adding an optional penalty on the number of modified features. For instance, before applying actionability, one robust counterfactual on the German Credit dataset suggested changing the applicant’s age, an immutable feature. After enforcing immutability and sparsity constraints, ElliCE instead adjusted the credit amount and credit length, reducing both and thus lowering the predicted credit risk, which is reasonable in the lending context. Further details  are provided in Appendix \ref{appendix:actionability}.

\section{Conclusions, Implications and Limitations}
\label{section:conclusion_limitations}

Standard algorithmic recourse is fragile. A recommendation given to a user today may become invalid tomorrow if the underlying model is retrained or replaced---a common scenario under the Rashomon Effect. This paper addressed this reliability gap by introducing ElliCE, a framework that provides recourse with provable robustness guarantees. ElliCE approximates the set of near-optimal models with an ellipsoid and computes counterfactuals that remain valid across this approximated Rashomon set.
A strength of ElliCE is its support for actionability. Users can specify immutable features, range or direction constraints, and optional sparsity penalties, ensuring that the resulting recourse is both robust and realistic. This flexibility might help prevent impractical or unethical recommendations and gives users greater control over actions. While robustness alone does not ensure fairness, user-specified actionability constraints can help to ensure that counterfactuals remain feasible and ethically sound. A comprehensive fairness analysis remains an important direction for future work.
The ellipsoidal approximation, while efficient, is a simplification of the true Rashomon set, and for neural networks our analysis currently captures local rather than global model multiplicity. Despite these limitations, ElliCE offers a practical and theoretically grounded tool for robust and actionable recourse, providing stable and trustworthy advice.

\section*{Acknowledgments}
We thank the \href{https://r-ai.co/ukraine}{RAI for Ukraine} program, led by the Center for Responsible AI at New York University in collaboration with Ukrainian Catholic University in Lviv, for supporting Bohdan's and Iryna's participation in this research.

\section*{Code Availability}

Implementations of ElliCE are available at \url{https://github.com/BogdanTurbal/ElliCE_EXPERIMENTS}.

\bibliographystyle{plainnat}
\bibliography{literature}

@article{WachterMittelstadtRussell2017,
title={Counterfactual explanations without opening the black box: Automated decisions and the {GDPR}},
  author={Wachter, Sandra and Mittelstadt, Brent and Russell, Chris},
  journal={Harv. JL \& Tech.},
  volume={31},
  pages={841},
  year={2017},
  publisher={HeinOnline}
}

@inproceedings{cavus2025beyond,
  title={Beyond the Single-Best Model: {R}ashomon Partial Dependence Profile for Trustworthy Explanations in {A}uto{ML}},
  author={Cavus, Mustafa and van Rijn, Jan N and Biecek, Przemys{\l}aw},
  booktitle={International Conference on Discovery Science},
  pages={445--459},
  year={2025},
  organization={Springer}
}

@article{kuratomi2025subgroup,
  title={Subgroup fairness based on shared counterfactuals},
  author={Kuratomi, Alejandro and Lee, Zed and Tsaparas, Panayiotis and Pitoura, Evaggelia and Lindgren, Tony and Dinis Junior, Guilherme and Papapetrou, Panagiotis},
  journal={Knowledge and Information Systems},
  pages={1--39},
  year={2025},
  publisher={Springer}
}

@article{garg2025search,
  title={From Search To Sampling: Generative Models For Robust Algorithmic Recourse},
  author={Garg, Prateek and Nagalapatti, Lokesh and Sarawagi, Sunita},
  journal={arXiv preprint arXiv:2505.07351},
  year={2025}
}

@article{barrainkua2025pays,
  title={Who Pays for Fairness? {R}ethinking Recourse under Social Burden},
  author={Barrainkua, Ainhize and De Toni, Giovanni and Lozano, Jose Antonio and Quadrianto, Novi},
  journal={arXiv preprint arXiv:2509.04128},
  year={2025}
}

@article{konig2025performative,
  title={Performative validity of recourse explanations},
  author={K{\"o}nig, Gunnar and Fokkema, Hidde and Freiesleben, Timo and Mendler-D{\"u}nner, Celestine and von Luxburg, Ulrike},
  journal={arXiv preprint arXiv:2506.15366},
  year={2025}
}

@article{ceccon2025reinforcement,
  title={Reinforcement Learning for Durable Algorithmic Recourse},
  author={Ceccon, Marina and Fabris, Alessandro and Radanovi{\'c}, Goran and Biega, Asia J and Susto, Gian Antonio},
  journal={arXiv preprint arXiv:2509.22102},
  year={2025}
}

@article{kyaw2025optimal,
  title={Optimal Robust Recourse with $L^{p}$ - Bounded Model Change},
  author={Kyaw, Phone and Kayastha, Kshitij and Jabbari, Shahin},
  journal={arXiv preprint arXiv:2509.21293},
  year={2025}
}

@article{Yetukuri_Hardy_Vorobeychik_Ustun_Liu_2024,
  title   = {Providing Fair Recourse over Plausible Groups},
  author  = {Yetukuri, Jayanth and Hardy, Ian and Vorobeychik, Yevgeniy and Ustun, Berk and Liu, Yang},
  journal = {Proceedings of the AAAI Conference on Artificial Intelligence},
  volume  = {38},
  number  = {19},
  pages   = {21753--21760},
  year    = {2024},
  month   = {Mar},
  url     = {}
}

@InProceedings{pmlr-v222-jiang24a,
  title = 	 {Provably Robust and Plausible Counterfactual Explanations for Neural Networks via Robust Optimisation},
  author =       {Jiang, Junqi and Lan, Jianglin and Leofante, Francesco and Rago, Antonio and Toni, Francesca},
  booktitle = 	 {Proceedings of the 15th Asian Conference on Machine Learning},
  pages = 	 {582--597},
  year = 	 {2024},
  editor = 	 {Yanıkoğlu, Berrin and Buntine, Wray},
  volume = 	 {222},
  series = 	 {Proceedings of Machine Learning Research},
  month = 	 {11--14 Nov},
  publisher =    {PMLR},
}

@article{jiang2025robustx,
  title={Robust{X}: Robust Counterfactual Explanations Made Easy},
  author={Jiang, Junqi and Marzari, Luca and Purohit, Aaryan and Leofante, Francesco},
  journal={arXiv preprint arXiv:2502.13751},
  year={2025}
}

@inproceedings{McGrathCostabelloLeVanSweeneyKamiabShenLecue2018,
  title={Interpretable Credit Application Predictions With Counterfactual Explanations},
  author={Mc Grath, Rory and Costabello, Luca and Le Van, Chan and Sweeney, Paul and Kamiab, Farbod and Shen, Zhao and Lecue, Freddy},
  booktitle={NIPS 2018-Workshop on Challenges and Opportunities for AI in Financial Services: the Impact of Fairness, Explainability, Accuracy, and Privacy},
  year={2018}
}

@article{LaugelLesotMarsalaRenardDetyniecki2017,
  title={Inverse classification for comparison-based interpretability in machine learning},
  author={Laugel, Thibault and Lesot, Marie-Jeanne and Marsala, Christophe and Renard, Xavier and Detyniecki, Marcin},
  journal={arXiv preprint arXiv:1712.08443},
  year={2017}
}

@inproceedings{Russell2019efficient,
  title={Efficient search for diverse coherent explanations},
  author={Russell, Chris},
  booktitle={Proceedings of the conference on fairness, accountability, and transparency},
  pages={20--28},
  year={2019}
}

@inproceedings{UstunSpangherLiu2018,
  title={Actionable recourse in linear classification},
  author={Ustun, Berk and Spangher, Alexander and Liu, Yang},
  booktitle={Proceedings of the conference on fairness, accountability, and transparency},
  pages={10--19},
  year={2019}
}

@inproceedings{KarimiBartheBalleValera2020a,
  author       = {Amir-Hossein Karimi and Gilles Barthe and Borja Balle and Isabel Valera},
  title        = {Model-Agnostic Counterfactual Explanations for Consequential Decisions},
  booktitle    = {Proceedings of the 23rd International Conference on Artificial Intelligence and Statistics (AISTATS)},
  series       = {Proceedings of Machine Learning Research},
  volume       = {108},
  pages        = {895--905},
  year         = {2020},
  editor       = {Silvia Chiappa and Roberto Calandra},
  publisher    = {PMLR},
  month        = {Aug},
}

@inproceedings{PawelczykBroelemannKasneci2020,
  author       = {Martin Pawelczyk and Klaus Broelemann and Gjergji Kasneci},
  title        = {Learning Model-Agnostic Counterfactual Explanations for Tabular Data},
  booktitle    = {Proceedings of The Web Conference 2020 (WWW '20)},
  pages        = {3126--3132},
  year         = {2020},
  location     = {Taipei, Taiwan},
  publisher    = {ACM / IW3C2},
}

@article{JoshiKoyejoVijitbenjaronkKimGhosh2019,
  title={Towards realistic individual recourse and actionable explanations in black-box decision making systems},
  author={Joshi, Shalmali and Koyejo, Oluwasanmi and Vijitbenjaronk, Warut and Kim, Been and Ghosh, Joydeep},
  journal={arXiv preprint arXiv:1907.09615},
  year={2019}
}

@inproceedings{NemirovskyThiebautXuGupta2022a,
  author    = {Daniel Nemirovsky and Nicolas Thiebaut and Ye Xu and Abhishek Gupta},
  title     = {Counte{RGAN}: Generating Counterfactuals for Real-Time Recourse and Interpretability Using Residual {GAN}s},
  booktitle = {Proceedings of the Thirty-Eighth Conference on Uncertainty in Artificial Intelligence},
  series    = {Proceedings of Machine Learning Research},
  volume    = {180},
  pages     = {1488--1497},
  year      = {2022},
  editor    = {James Cussens and Kun Zhang},
  publisher = {PMLR},
  month     = {Aug}
}

@inproceedings{PoyiadziSokolSantosRodriguezDeBieFlach2019,
  title={FACE: {F}easible and actionable counterfactual explanations},
  author={Poyiadzi, Rafael and Sokol, Kacper and Santos-Rodriguez, Raul and De Bie, Tijl and Flach, Peter},
  booktitle={Proceedings of the AAAI/ACM Conference on AI, Ethics, and Society},
  pages={344--350},
  year={2020}
}

@article{brughmans2021nice,
  title={Nice: {A}n algorithm for nearest instance counterfactual explanations},
  author={Brughmans, Dieter and Leyman, Pieter and Martens, David},
  journal={Data mining and knowledge discovery},
  volume={38},
  number={5},
  pages={2665--2703},
  year={2024},
  publisher={Springer}
}

@inproceedings{DuttaLongMishraTilliMagazzeni2022,
  author    = {Sanghamitra Dutta and Jason Long and Saumitra Mishra and Cecilia Tilli and Daniele Magazzeni},
  title     = {Robust Counterfactual Explanations for Tree-Based Ensembles},
  booktitle = {Proceedings of the 39th International Conference on Machine Learning},
  series    = {Proceedings of Machine Learning Research},
  volume    = {162},
  pages     = {5742--5756},
  year      = {2022},
  editor    = {Kamalika Chaudhuri and Stefanie Jegelka and Le Song and Csaba Szepesvari and Gang Niu and Sivan Sabato},
  publisher = {PMLR},
  month     = {Jul},
}

@inproceedings{ForelParmentierVidal2022,
  title={Don’t explain noise: Robust counterfactuals for randomized ensembles},
  author={Forel, Alexandre and Parmentier, Axel and Vidal, Thibaut},
  booktitle={International Conference on the Integration of Constraint Programming, Artificial Intelligence, and Operations Research},
  pages={293--309},
  year={2024},
  organization={Springer}
}

@article{Kinjo2025robustcounterfactual,
  title={Robust counterfactual explanations under model multiplicity using multi-objective optimization},
  author={Kinjo, Keita},
  journal={arXiv preprint arXiv:2501.05795},
  year={2025}
}

@inproceedings{JiangLeofanteRagoToni2022,
  title={Formalising the robustness of counterfactual explanations for neural networks},
  author={Jiang, Junqi and Leofante, Francesco and Rago, Antonio and Toni, Francesca},
  booktitle={Proceedings of the AAAI conference on artificial intelligence},
  volume={37},
  number={12},
  pages={14901--14909},
  year={2023}
}

@article{JiangLeofanteRagoToni2024,
  title={Interval abstractions for robust counterfactual explanations},
  author={Jiang, Junqi and Leofante, Francesco and Rago, Antonio and Toni, Francesca},
  journal={Artificial Intelligence},
  volume={336},
  pages={104218},
  year={2024},
  publisher={Elsevier}
}

@article{upadhyay2021towards,
  title={Towards robust and reliable algorithmic recourse},
  author={Upadhyay, Sohini and Joshi, Shalmali and Lakkaraju, Himabindu},
  journal={Advances in Neural Information Processing Systems ({NeurIPS})},
  volume={34},
  pages={16926--16937},
  year={2021}
}

@inproceedings{jiang2024argumentative,
  title={Recourse under Model Multiplicity via Argumentative Ensembling},
  author={Jiang, Junqi and Leofante, Francesco and Rago, Antonio and Toni, Francesca},
  booktitle={Proceedings of the 23rd International Conference on Autonomous Agents and Multiagent Systems},
  pages={954--964},
  year={2024}
}

@inproceedings{hamman2023robust,
  title={Robust Counterfactual Explanations for Neural Networks with Probabilistic Guarantees},
  author={Hamman, Faisal and Noorani, Erfaun and Mishra, Saumitra and Magazzeni, Daniele and Dutta, Sanghamitra},
  booktitle={Proceedings of the 40th International Conference on Machine Learning},
  year={2023}
}

@inproceedings{xin2022rashomontrees,
  title={Exploring the Whole {R}ashomon Set of Sparse Decision Trees},
  author={Xin, Rui and Zhong, Chudi and Chen, Zhi and Takagi, Takuya and Seltzer, Margo and Rudin, Cynthia},
  booktitle={Advances in Neural Information Processing Systems ({NeurIPS})},
  volume={35},
  year={2022}
}

@inproceedings{semenova2023noise,
  title={A Path to Simpler Models Starts With Noise},
  author={Semenova, Lesia and Chen, Harry and Parr, Ronald and Rudin, Cynthia},
  booktitle={Advances in Neural Information Processing Systems ({NeurIPS})},
  year={2023}
}

@inproceedings{mohammadi2021scaling,
  title={Scaling Guarantees for Nearest Counterfactual Explanations},
  author={Mohammadi, Kiarash and Karimi, Amir-Hossein and Barthe, Gilles and Valera, Isabel},
  booktitle={Proceedings of the Thirty-Fifth AAAI Conference on Artificial Intelligence},
  year={2021}
}

@inproceedings{black2022model,
  title={Model multiplicity: Opportunities, concerns, and solutions},
  author={Black, Emily and Raghavan, Manish and Barocas, Solon},
  booktitle={Proceedings of the 2022 ACM Conference on Fairness, Accountability, and Transparency},
  pages={850--863},
  year={2022}
}

@inproceedings{semenova2022existence,
  title={On the existence of simpler machine learning models},
  author={Semenova, Lesia and Rudin, Cynthia and Parr, Ronald},
  booktitle={Proceedings of the 2022 ACM Conference on Fairness, Accountability, and Transparency},
  pages={1827--1858},
  year={2022}
}

@misc{samy_baladram_2023,
	title={Iris Dataset Extended},
	url={ },
	publisher={Kaggle},
	author={Samy Baladram},
	year={2023}
}

@misc{smith1988pima,
  author       = {Smith, John W. and Everhart, William A. and Dickson, William C. and Knowler, William C. and Johannes, Richard S.},
  title        = {{Pima Indians Diabetes Database}},
  year         = {1988},

}

@misc{fico2018heloc,
  author       = {{Fair Isaac Corporation (FICO)}},
  title        = {{FICO} Explainable Machine Learning Challenge: Home Equity Line of Credit (HELOC) dataset},
  year         = {2018}
}

@misc{statlog_144,
  author       = {Hofmann, Hans},
  title        = {{Statlog (German Credit Data)}},
  year         = {1994},
  howpublished = {UCI Machine Learning Repository},
}

@misc{statlog_143,
  author       = {Quinlan, Ross},
  title        = {{Statlog (Australian Credit Approval)}},
  year         = {1987},
  howpublished = {UCI Machine Learning Repository},
}

@article{breiman2001statistical,
	title        = {Statistical modeling: The two cultures (with comments and a rejoinder by the author)},
	author       = {Breiman, Leo},
	year         = 2001,
	journal      = {Statistical Science},
	publisher    = {Institute of Mathematical Statistics},
	volume       = 16,
	number       = 3,
	pages        = {199--231}
}

@inproceedings{SunEtAl24,
	title        = {Sparse and Faithful Explanations without Sparse Models},
	author       = {Yiyang Sun and Zhi Chen and Vittorio Orlandi and Tong Wang and Cynthia Rudin},
	year         = 2024,
	booktitle    = {Proc. Artificial Intelligence and Statistics {(AISTATS)}}
}

@inproceedings{marx2020predictive,
	title        = {Predictive multiplicity in classification},
	author       = {Marx, Charles and Calmon, Flavio and Ustun, Berk},
	year         = 2020,
	booktitle    = {Proceedings of the International Conference on Machine Learning (ICML)},
	pages        = {6765--6774}
}

@inproceedings{hsu2022rashomon,
	title        = {Rashomon Capacity: A Metric for Predictive Multiplicity in Classification},
	author       = {Hsu, Hsiang and Calmon, Flavio},
	year         = 2022,
	booktitle    = {Neural Information Processing Systems ({NeurIPS})},
	volume       = 35,
	pages        = {28988--29000}
}

@article{fisher2019all,
	title        = {All Models are Wrong, but Many are Useful: Learning a Variable's Importance by Studying an Entire Class of Prediction Models Simultaneously.},
	author       = {Fisher, Aaron and Rudin, Cynthia and Dominici, Francesca},
	year         = 2019,
	journal      = {Journal of Machine Learning Research},
	volume       = 20,
	number       = 177,
	pages        = {1--81}
}

@article{dong2020exploring,
	title        = {Exploring the cloud of variable importance for the set of all good models},
	author       = {Dong, Jiayun and Rudin, Cynthia},
	year         = 2020,
	journal      = {Nature Machine Intelligence},
	publisher    = {Nature Publishing Group},
	volume       = 2,
	number       = 12,
	pages        = {810--824}
}

@article{agarwal2021neural,
	title        = {Neural additive models: Interpretable machine learning with neural nets},
	author       = {Agarwal, Rishabh and Melnick, Levi and Frosst, Nicholas and Zhang, Xuezhou and Lengerich, Ben and Caruana, Rich and Hinton, Geoffrey E},
	year         = 2021,
	journal      = {Advances in Neural Information Processing Systems ({NeurIPS})},
	volume       = 34,
	pages        = {4699--4711}
}

@inproceedings{zhong2023exploring,
	title        = {Exploring and Interacting with the Set of Good Sparse Generalized Additive Models},
	author       = {Zhong, Chudi and Chen, Zhi and Liu, Jiachang and Seltzer, Margo and Rudin, Cynthia},
	year         = 2023,
	booktitle    = {Neural Information Processing Systems ({NeurIPS})}
}

@inproceedings{donnelly2023the,
	title        = {The {R}ashomon Importance Distribution: Getting {RID} of Unstable, Single Model-based Variable Importance},
	author       = {Jon Donnelly and Srikar Katta and Cynthia Rudin and Edward P Browne},
	year         = 2023,
	booktitle    = {Neural Information Processing Systems ({NeurIPS})}
}

@inproceedings{rudin2024amazing,
	title        = {Amazing Things Come From Having Many Good Models},
	author       = {Cynthia Rudin and Chudi Zhong and Lesia Semenova and Margo Seltzer and Ronald Parr and Jiachang Liu and Srikar Katta and Jon Donnelly and Harry Chen and Zachery Boner},
	year         = 2024,
	booktitle    = {Proceedings of the International Conference on Machine Learning (ICML)}
}

@inproceedings{meyer2024perceptions,
  title={Perceptions of the Fairness Impacts of Multiplicity in Machine Learning},
  author={Meyer, Anna P and Kim, Yea-Seul and D'Antoni, Loris and Albarghouthi, Aws},
  booktitle={Proceedings of the 2025 CHI Conference on Human Factors in Computing Systems},
  pages={1--15},
  year={2025}
}

@inproceedings{boner2024using,
	title        = {Using Noise to Infer Aspects of Simplicity Without Learning},
	author       = {Zachery Boner and Harry Chen and Lesia Semenova and Ronald Parr and Cynthia Rudin},
	year         = 2024,
	booktitle    = {The Thirty-eighth Annual Conference on Neural Information Processing Systems},
}

@inproceedings{
hsu2024dropout,
title={Dropout-Based {R}ashomon Set Exploration for Efficient Predictive Multiplicity Estimation},
author={Hsiang Hsu and Guihong Li and Shaohan Hu and Chun-Fu Chen},
booktitle={The Twelfth International Conference on Learning Representations},
year={2024},
url={ }
}

@article{hsu2024rashomongb,
	title        = {Rashomon{GB}: {A}nalyzing the {R}ashomon {E}ffect and mitigating predictive multiplicity in gradient boosting},
	author       = {Hsu, Hsiang and Brugere, Ivan and Sharma, Shubham and Lecue, Freddy and Chen, Richard},
	year         = 2024,
	journal      = {Advances in Neural Information Processing Systems ({NeurIPS})},
	volume       = 37,
	pages        = {121265--121303}
}

@inproceedings{donnelly2025rashomon,
  title={Rashomon sets for prototypical-part networks: {E}diting interpretable models in real-time},
  author={Donnelly, Jon and Guo, Zhicheng and Barnett, Alina Jade and McTavish, Hayden and Chen, Chaofan and Rudin, Cynthia},
  booktitle={Proceedings of the Computer Vision and Pattern Recognition Conference},
  pages={4528--4538},
  year={2025}
}

@inproceedings{muller2023empirical,
	title        = {An empirical evaluation of the {R}ashomon {E}ffect in explainable machine learning},
	author       = {M{\"u}ller, Sebastian and Toborek, Vanessa and Beckh, Katharina and Jakobs, Matthias and Bauckhage, Christian and Welke, Pascal},
	year         = 2023,
	booktitle    = {Joint European Conference on Machine Learning and Knowledge Discovery in Databases},
	pages        = {462--478},
	organization = {Springer}
}

@article{langlade2025fairness,
	title        = {Fairness and Sparsity within {R}ashomon sets: Enumeration-Free Exploration and Characterization},
	author       = {Langlade, Lucas and Ferry, Julien and Laberge, Gabriel and Vidal, Thibaut},
	year         = 2025,
	journal      = {arXiv preprint arXiv:2502.05286}
}

@inproceedings{DaiRavishankarYuanNeillBlack2025,
  title={Be intentional about fairness!: Fairness, size, and multiplicity in the {R}ashomon set},
  author={Dai, Gordon and Ravishankar, Pavan and Yuan, Rachel and Black, Emily and Neill, Daniel B},
  booktitle={Proceedings of the 5th ACM Conference on Equity and Access in Algorithms, Mechanisms, and Optimization},
  pages={42--73},
  year={2025}
}

@article{d2022underspecification,
  title={Underspecification presents challenges for credibility in modern machine learning},
  author={D'Amour, Alexander and Heller, Katherine and Moldovan, Dan and Adlam, Ben and Alipanahi, Babak and Beutel, Alex and Chen, Christina and Deaton, Jonathan and Eisenstein, Jacob and Hoffman, Matthew D and others},
  journal={Journal of Machine Learning Research},
  volume={23},
  number={226},
  pages={1--61},
  year={2022}
}

@article{angwin2016machine,
  author   = {Angwin, Julia and Larson, Jeff and Mattu, Surya and Kirchner, Lauren},
  title    = {Machine Bias: There’s Software Used Across the Country to Predict Future Criminals. {A}nd It’s Biased Against Blacks},
  journal  = {ProPublica},
  year     = {2016},
  month    = may,
  note     = {}
}

@article{Gunasekaran2024MAMCR,
  author  = {Gunasekaran, Abirami and Mistry, Pritesh and Chen, Minsi},
  title   = {Which Explanation Should be Selected: A Method Agnostic Model Class Reliance Explanation for Model and Explanation Multiplicity},
  journal = {SN Computer Science},
  volume  = {5},
  pages   = {503},
  year    = {2024},
}

@book{boyd2004convex,
  title={Convex optimization},
  author={Boyd, Stephen P and Vandenberghe, Lieven},
  year={2004},
  publisher={Cambridge university press}
}

@misc{parkinsons_telemonitoring_189,
  author       = {Tsanas, Athanasios and Little, Max},
  title        = {{Parkinsons Telemonitoring}},
  year         = {2009},
  howpublished = {UCI Machine Learning Repository},
}

@misc{wine_quality_186,
  author       = {Cortez, Paulo and Cerdeira, A. and Almeida, F. and Matos, T. and Reis, J.},
  title        = {{Wine Quality}},
  year         = {2009},
  howpublished = {UCI Machine Learning Repository},
}

@misc{banknote_authentication_267,
  author       = {Lohweg, Volker},
  title        = {{Banknote Authentication}},
  year         = {2012},
  howpublished = {UCI Machine Learning Repository},
}

@inproceedings{ganesh2025systemizing,
  title={Systemizing Multiplicity: The Curious Case of Arbitrariness in Machine Learning},
  author={Ganesh, Prakhar and Taik, Afaf and Farnadi, Golnoosh},
  booktitle={Proceedings of the AAAI/ACM Conference on AI, Ethics, and Society},
  volume={8},
  number={2},
  pages={1032--1048},
  year={2025}
}

@article{bao2021s,
  title={It's compaslicated: The messy relationship between rai datasets and algorithmic fairness benchmarks},
  author={Bao, Michelle and Zhou, Angela and Zottola, Samantha and Brubach, Brian and Desmarais, Sarah and Horowitz, Aaron and Lum, Kristian and Venkatasubramanian, Suresh},
  journal={arXiv preprint arXiv:2106.05498},
  year={2021}
}

\newpage
\appendix
\part*{Appendix}
\addcontentsline{toc}{part}{Appendix}

\begingroup
\etocsetnexttocdepth{subsection} 
\localtableofcontents            
\endgroup

\newpage
\label{app:proofs}
\section{Proofs for Theoretical Results in Sections \ref{section:framework} and \ref{section:properties}}
 In this appendix we provide the proof of theoretical results provided in Sections \ref{section:framework} and \ref{section:properties}.

\subsection{Proof for Theorem \ref{th:closed_form_solution}}
\label{app:proof1}

We state and prove Theorem \ref{th:closed_form_solution} below. 

\begingroup
\def\thetheorem{\ref{th:closed_form_solution}}
\begin{theorem}[Closed-form solution]
\TheoremClosedForm
\end{theorem}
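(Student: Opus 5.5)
The plan is to reduce the problem to minimizing a linear functional over a Euclidean ball. Substitute $\bm{\theta} = \hat{\bm{\theta}} + \sqrt{2\epsilon}\, H^{-1/2}\mathbf{u}$. Because $H$ is symmetric positive definite, its symmetric square root $H^{1/2}$ and its inverse $H^{-1/2}$ exist. The constraint $\frac{1}{2}(\bm{\theta}-\hat{\bm{\theta}})^\top H(\bm{\theta}-\hat{\bm{\theta}}) \le \epsilon$ then becomes $\|\mathbf{u}\|_2 \le 1$, and the substitution is a bijection between $\hat{\mathcal{R}}(\epsilon)$ and the unit ball. The objective transforms as
\[
\bm{\theta}^\top \mathbf{x}_c = \hat{\bm{\theta}}^\top \mathbf{x}_c + \sqrt{2\epsilon}\,\mathbf{u}^\top (H^{-1/2}\mathbf{x}_c).
\]
So the task is to minimize $\mathbf{u}^\top \mathbf{v}$ over $\|\mathbf{u}\|_2\le 1$, where $\mathbf{v} = H^{-1/2}\mathbf{x}_c$.

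Next, apply Cauchy--Schwarz: $\mathbf{u}^\top\mathbf{v} \ge -\|\mathbf{u}\|_2\|\mathbf{v}\|_2 \ge -\|\mathbf{v}\|_2$. When $\mathbf{v}\neq \mathbf{0}$, equality holds exactly at $\mathbf{u}^* = -\mathbf{v}/\|\mathbf{v}\|_2$. Since $\|\mathbf{v}\|_2^2 = \mathbf{x}_c^\top H^{-1}\mathbf{x}_c$, the minimum value is
\[
\hat{\bm{\theta}}^\top\mathbf{x}_c - \sqrt{2\epsilon\,\mathbf{x}_c^\top H^{-1}\mathbf{x}_c}.
\]
Mapping back, $\bm{\theta}_{worst} = \hat{\bm{\theta}} - \sqrt{2\epsilon}\, H^{-1/2}H^{-1/2}\mathbf{x}_c/\sqrt{\mathbf{x}_c^\top H^{-1}\mathbf{x}_c}$, which equals the stated formula. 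Uniqueness of the minimizer follows from the equality case of Cauchy--Schwarz.

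As a cross-check, and as an alternative route that avoids matrix square roots, one can use Lagrangian/KKT conditions. The problem minimizes a linear objective over a convex compact set with Slater's condition holding, so the minimum lies on the boundary. Stationarity gives $\mathbf{x}_c + \mu H(\bm{\theta}-\hat{\bm{\theta}}) = 0$, hence $\bm{\theta}-\hat{\bm{\theta}} = -\mu^{-1}H^{-1}\mathbf{x}_c$. The active constraint then fixes $\mu^{-1} = \sqrt{2\epsilon/(\mathbf{x}_c^\top H^{-1}\mathbf{x}_c)}$, and substituting recovers the same value.

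The only delicate point is the degenerate case $\mathbf{x}_c = \mathbf{0}$. There the objective is identically $0$ on the ellipsoid, so the value formula still holds because both terms vanish. The worst-case model formula, however, is undefined (every $\bm{\theta}$ attains the minimum). I would note explicitly that the expression for $\bm{\theta}_{worst}$ is meant for $\mathbf{x}_c \neq \mathbf{0}$, which ensures $\mathbf{x}_c^\top H^{-1}\mathbf{x}_c > 0$ by positive definiteness of $H^{-1}$. Apart from this, the argument is routine.
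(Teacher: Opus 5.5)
Your proof is correct and takes essentially the same route as the paper. A whitening change of variables reduces the problem to minimizing a linear function over a Euclidean ball; the paper's $\bm{v} = H^{1/2}(\bm{\theta}-\hat{\bm{\theta}})$ is your $\sqrt{2\epsilon}\,\mathbf{u}$. You settle the ball step directly via Cauchy--Schwarz where the paper cites the standard closed form. One nit: your substitution is a bijection only for $\epsilon>0$, but for $\epsilon=0$ it still maps onto the one-point ellipsoid $\{\hat{\bm{\theta}}\}$, so the conclusion is unaffected.
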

\addtocounter{theorem}{-1}
\endgroup

\begin{proof}
Recall that $H$ is positive definite, therefore $H^{1/2}$ is well-defined, symmetric, and invertible.
Let $\bm{v} = H^{1/2}(\bm{\theta} - \hat{\bm{\theta}})$ and $\bm{\xi} = H^{-1/2}\mathbf{x}_c$. Then, we can reformulate our inner optimization problem using $\bm{v}$ and $\bm{\xi}$ as variables.

First, denote $\bm{\theta} - \hat{\bm{\theta}} = H^{-1/2}\bm{v}$, then we can substitute $\bm{\theta} - \hat{\bm{\theta}} = H^{-1/2}\bm{v}$ into the constraint $\frac{1}{2}(\bm{\theta} - \hat{\bm{\theta}})^{\top} H (\bm{\theta} - \hat{\bm{\theta}}) \le \epsilon
$ to get:
\begin{align*}
(H^{-1/2}\bm{v})^{\top} H (H^{-1/2}\bm{v}) &= \bm{v}^{\top}(H^{-1/2})^{\top} H H^{-1/2}\bm{v} \\
&= \bm{v}^{\top}H^{-1/2} H H^{-1/2}\bm{v} \\
&= \bm{v}^{\top}I \bm{v} = \bm{v}^{\top}\bm{v} = \lVert \bm{v} \rVert_2^2.
\end{align*}

Second, recall that $\bm{\xi} = H^{-1/2}\mathbf{x}_c$, then we have that:
\[\bm{\theta}^\top \mathbf{x}_c = (\hat{\bm{\theta}} + H^{-1/2}\bm{v})^\top\mathbf{x}_c = \hat{\bm{\theta}}^\top \mathbf{x}_c + \bm{\xi}^\top \bm{v}.\]
The original optimization problem is now entirely in terms of $\bm{v}$. Since $\hat{\bm{\theta}}^{\top}\mathbf{x}_c$ is a constant with respect to $\bm{v}$, the optimization problem becomes:
\[
\hat{\bm{\theta}}^{\top}\mathbf{x}_c + \min_{\lVert \bm{v} \rVert_2 \le \sqrt{2\epsilon}} \bm{v}^{\top}\bm{\xi}.
\]




Note that we are looking for a minimum of a linear function over a Euclidean ball in terms of $\bm{v}$.
This problem of minimizing a linear function $\bm{a}^{\top}\mathbf{x}$ subject to an $\ell_2$-norm constraint $\lVert \mathbf{x} \rVert_2 \le B$ has a well-known closed-form solution \cite{boyd2004convex}. The optimal value is $-B \lVert \bm{a} \rVert_2$, achieved at $\hat{x} = -B \frac{\bm{a}}{\lVert \bm{a} \rVert_2}$ (for $\bm{a} \neq \bm{0}$). Translating to $\bm{v}$ and $\bm{\xi}$, we get that the optimal value of the $\bm{v}$-minimization is $-\sqrt{2\epsilon} \lVert \bm{\xi} \rVert_2$, achieved at $\hat{\bm{v}} = -\sqrt{2\epsilon} \frac{\bm{\xi}}{\lVert \bm{\xi} \rVert_2}$ (for $\bm{\xi} \neq \bm{0}$).

And translating to the original problem formulation, we get that the overall optimal value is $\hat{\bm{\theta}}^{\top}\mathbf{x}_c - \sqrt{2\epsilon} \lVert H^{-1/2}\mathbf{x}_c \rVert_2$. The optimal $\bm{\theta}_{worst}$ is achieved at:
\[ \bm{\theta}_{worst} = \hat{\bm{\theta}} - \sqrt{2\epsilon}  \left( \frac{H^{-1}\mathbf{x}_c}{\lVert H^{-1/2}\mathbf{x}_c \rVert_2}\right). \]
Note that if $\mathbf{x}_c = \bm{0}$, then the solution is $\bm{\theta}_{worst} = \hat{\bm{\theta}}$.


\end{proof}

The corollary \ref{corollary:check} follows immediately from the above theorem. It allows us to easily check if the counterfactual explanation is robust given the ellipsoid-based Rashomon set, even if this explanation was generated by another method.

\subsection{Proof for Theorem \ref{thm:uniqueness}}\label{app:proof2}

We state and prove Theorem \ref{thm:uniqueness} below. 

\begingroup
\def\thetheorem{\ref{thm:uniqueness}}
\begin{theorem}[Uniqueness]
\TheoremUniqueness
\end{theorem}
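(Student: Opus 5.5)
The plan is the standard argument that a strictly convex objective has at most one minimizer over a convex set. There are two steps: show the feasible set of \eqref{eq:final_optimization} is convex, then use strict convexity of $\|\mathbf{x}_c-\mathbf{x}_0\|_2^2$ to rule out two distinct minimizers.

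\textbf{Step 1: convexity of the feasible set.} Define
\[
g(\mathbf{x}) = \hat{\bm{\theta}}^\top\mathbf{x} - \sqrt{2\epsilon}\,\|H^{-1/2}\mathbf{x}\|_2 .
\]
The feasible set is the superlevel set $\mathcal{F}=\{\mathbf{x}: g(\mathbf{x})\ge t\}$. Because $H$ is positive definite, $H^{-1/2}$ exists, so $\mathbf{x}\mapsto\|H^{-1/2}\mathbf{x}\|_2$ is a norm composed with a linear map and hence convex. It follows that $g$ is concave, being linear minus a nonnegative multiple of a convex function, and superlevel sets of concave functions are convex.

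As a cross-check, Theorem~\ref{th:closed_form_solution} gives a second route. It shows $g(\mathbf{x})=\min_{\bm{\theta}\in\hat{\mathcal{R}}(\epsilon)}\bm{\theta}^\top\mathbf{x}$. This is a pointwise minimum of linear functions, so it is concave. Equivalently, $\mathcal{F}=\bigcap_{\bm{\theta}\in\hat{\mathcal{R}}(\epsilon)}\{\mathbf{x}:\bm{\theta}^\top\mathbf{x}\ge t\}$ is an intersection of half-spaces. Either argument also shows $\mathcal{F}$ is closed, although closedness is only needed for existence, which the theorem assumes.

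\textbf{Step 2: strict convexity rules out two minimizers.} Suppose $\mathbf{x}_1\neq\mathbf{x}_2$ are both optimal with common value $p^*$. Their midpoint $\bar{\mathbf{x}}=\tfrac12(\mathbf{x}_1+\mathbf{x}_2)$ lies in $\mathcal{F}$ by Step 1. The parallelogram identity gives
\[
\|\bar{\mathbf{x}}-\mathbf{x}_0\|_2^2=\tfrac12\|\mathbf{x}_1-\mathbf{x}_0\|_2^2+\tfrac12\|\mathbf{x}_2-\mathbf{x}_0\|_2^2-\tfrac14\|\mathbf{x}_1-\mathbf{x}_2\|_2^2 .
\]
The right-hand side equals $p^*-\tfrac14\|\mathbf{x}_1-\mathbf{x}_2\|_2^2<p^*$. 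So $\bar{\mathbf{x}}$ is feasible and strictly better than both, contradicting optimality. Hence the solution is unique.

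The main point needing care is Step 1, since the constraint is not a polyhedron: it involves a square root and so is genuinely nonlinear. Concavity of $g$ must be justified rather than assumed, and the cleanest justification is the norm-composition argument, with Theorem~\ref{th:closed_form_solution} as backup. The degenerate point $\mathbf{x}=\mathbf{0}$, where the square root is non-differentiable, causes no trouble because convexity of a norm does not require differentiability. Step 2 is routine.
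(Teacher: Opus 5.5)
Your proposal is correct and follows essentially the same route as the paper: you show the constraint function $\hat{\bm{\theta}}^{\top}\mathbf{x} - \sqrt{2\epsilon}\,\|H^{-1/2}\mathbf{x}\|_2$ is concave via the norm-composition argument, conclude that the feasible set is convex, and invoke strict convexity of $\|\mathbf{x}_c-\mathbf{x}_0\|_2^2$. Your parallelogram-identity step and the intersection-of-half-spaces cross-check via Theorem~\ref{th:closed_form_solution} just make explicit what the paper cites as the standard fact that a strictly convex function has at most one minimizer over a convex set.
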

\addtocounter{theorem}{-1}
\endgroup

\begin{proof}
The objective function $\|\mathbf{x}_c - \mathbf{x}_0\|_2^2$ is strictly convex as a squared Euclidean norm.

Consider the constraint function $g_{c}(\mathbf{x}_c) = \hat{\bm{\theta}}^{\top} \mathbf{x}_c - \sqrt{2\epsilon\, \mathbf{x}_c^\top H^{-1} \mathbf{x}_c}$.
The first term, $\hat{\bm{\theta}}^{\top} \mathbf{x}_c$, is linear and thus concave.
The second term is convex, since $\sqrt{\mathbf{x}_c^\top H^{-1} \mathbf{x}_c}=\|H^{-1/2}\mathbf{x}_c\|_2$ is a norm, which is convex given that $H$ and $H^{-1}$ is  positive definite. Given that $\epsilon \ge 0$, $g_{c}(\mathbf{x}_c)$ is the sum of a concave function ($\hat{\bm{\theta}}^{\top} \mathbf{x}_c$) and the concave function ($-\sqrt{2\epsilon}\|H^{-1/2}\mathbf{x}_c\|_2$), which means $g_{c}(\mathbf{x}_c)$ is concave. 

Therefore, we get that the feasible set $S = \{ \mathbf{x}_c \mid g_{c}(\mathbf{x}_c) \ge t \}$ is a convex set for a threshold $t$.
Minimizing a strictly convex function over a convex set guarantees that if a solution exists, it is unique.


\end{proof}

\subsection{Proof for Theorem \ref{thm:stability}}
\label{app:proof4}

We prove Theorem \ref{thm:stability} after first proving a helping Lemma about the set of feasible solutions $S$ defined in the proof of Theorem \ref{thm:uniqueness} above. 

\begin{lemma}\label{lemma:feasible_set}
For $H \succ 0$ and $\epsilon \ge 0$, the feasible set $S = \{ \mathbf{x}_c \mid \hat{\bm{\theta}}^{\top} \mathbf{x}_c - \sqrt{2\epsilon\,\mathbf{x}_c^{\top} H^{-1} \mathbf{x}_c} \ge t \}$ is closed and convex. Furthermore, if a solution to the optimization problem~\eqref{eq:final_optimization} exists, $S$ is non-empty.

\end{lemma}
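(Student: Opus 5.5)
We will establish the three claims (closedness, convexity, non-emptiness) separately, all through the constraint function $g_c(\mathbf{x}_c) = \hat{\bm{\theta}}^{\top}\mathbf{x}_c - \sqrt{2\epsilon}\,\|H^{-1/2}\mathbf{x}_c\|_2$ introduced in the proof of Theorem~\ref{thm:uniqueness}. The rewriting is legitimate because $H \succ 0$ gives $H^{-1}\succ 0$, so $H^{-1/2}$ exists and is symmetric, and $\mathbf{x}_c^{\top}H^{-1}\mathbf{x}_c = \|H^{-1/2}\mathbf{x}_c\|_2^2$. Then $S = \{\mathbf{x}_c : g_c(\mathbf{x}_c)\ge t\}$ is a superlevel set of $g_c$.

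\emph{Convexity.} We simply reuse the argument already given in the proof of Theorem~\ref{thm:uniqueness}. The map $\mathbf{x}_c\mapsto\|H^{-1/2}\mathbf{x}_c\|_2$ is a norm composed with a linear map, hence convex. Multiplying it by $-\sqrt{2\epsilon}\le 0$ gives a concave function, and adding the linear term $\hat{\bm{\theta}}^{\top}\mathbf{x}_c$ keeps $g_c$ concave. Superlevel sets of concave functions are convex: if $g_c(\mathbf{x}),g_c(\mathbf{y})\ge t$, then $g_c(\alpha\mathbf{x}+(1-\alpha)\mathbf{y})\ge \alpha g_c(\mathbf{x})+(1-\alpha)g_c(\mathbf{y})\ge t$. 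The degenerate case $\epsilon=0$ is covered, since $S$ is then a closed half-space.

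\emph{Closedness.} The function $g_c$ is continuous on all of $\mathbb{R}^d$, because it is a linear function plus a constant times the composition of the continuous norm $\|\cdot\|_2$ with a linear map. The square root causes no trouble at $\mathbf{x}_c=\mathbf{0}$, since we use the norm form rather than differentiating. Hence $S = g_c^{-1}([t,\infty))$ is the preimage of a closed set under a continuous map, and so it is closed.

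\emph{Non-emptiness.} If a solution $\mathbf{x}_c^*$ to \eqref{eq:final_optimization} exists, it is feasible by definition, so $\mathbf{x}_c^*\in S$ and $S\neq\emptyset$. None of these steps is a genuine obstacle. The only point needing care is the continuity and concavity of the square-root term at the origin, and writing it as $\|H^{-1/2}\mathbf{x}_c\|_2$ settles both.
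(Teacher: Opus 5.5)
Your proposal is correct and follows essentially the same route as the paper. Convexity of $S$ comes from concavity of $g_c$ (reusing the argument from Theorem~\ref{thm:uniqueness}), closedness from continuity of $g_c$, and non-emptiness from the feasibility of any solution; you simply make the superlevel-set inequality and the rewriting $\sqrt{\mathbf{x}_c^{\top}H^{-1}\mathbf{x}_c}=\|H^{-1/2}\mathbf{x}_c\|_2$ more explicit.
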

\begin{proof}
 Let $g_{c}(\mathbf{x}_c) = \hat{\bm{\theta}}^{\top} \mathbf{x}_c - \sqrt{2\epsilon\,\mathbf{x}_c^{\top} H^{-1} \mathbf{x}_c}$, then the feasible set is $S = \{ \mathbf{x}_c \mid g_{c}(\mathbf{x}_c) \ge t \}$.
We already showed in the proof of Theorem \ref{thm:uniqueness} that $S$ is a convex set. It must be non-empty, so that a solution to the optimization problem~\eqref{eq:final_optimization} exists.
Therefore, we next focus on showing that $S$ is closed.

Notice that $g_{c}(\mathbf{x}_c)$ is continuous, as it is being composed of differences and compositions of continuous functions. Since $g_{c}(\mathbf{x}_c)$ is continuous, the set $S = \{ \mathbf{x}_c \mid g_{c}(\mathbf{x}_c) \ge t \}$ is closed.   
\end{proof}

Now, we state and prove Theorem \ref{thm:stability}

\begingroup
\def\thetheorem{\ref{thm:stability}}
\begin{theorem}[Stability]
\TheoremStability
\end{theorem}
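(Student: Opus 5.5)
The plan is to recognize the robust counterfactual map as a Euclidean projection onto a fixed closed convex set and then invoke the non-expansiveness of such projections. Problem~\eqref{eq:final_optimization} asks for the point of $S = \{ \mathbf{x}_c \mid \hat{\bm{\theta}}^{\top} \mathbf{x}_c - \sqrt{2\epsilon\,\mathbf{x}_c^{\top} H^{-1} \mathbf{x}_c} \ge t \}$ closest to $\mathbf{x}_0$ in $\ell_2$, so $\mathbf{x}_c = P_S(\mathbf{x}_0)$. The crucial observation is that $S$ does not depend on $\mathbf{x}_0$: it depends only on $\hat{\bm{\theta}}$, $H$, $\epsilon$ and $t$. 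Hence the perturbed solution is $\mathbf{x}_c' = P_S(\mathbf{x}_0')$, a projection onto the \emph{same} set. By Lemma~\ref{lemma:feasible_set}, $S$ is closed, convex and (whenever a solution exists) non-empty. Theorem~\ref{thm:uniqueness} then shows $P_S$ is well defined and single-valued. Existence also follows, since minimizing a coercive continuous function over a nonempty closed set attains its minimum.

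The key step is the variational characterization of the projection. For a closed convex $S$ and $\mathbf{p} = P_S(\mathbf{x})$, one has $(\mathbf{x} - \mathbf{p})^\top(\mathbf{y} - \mathbf{p}) \le 0$ for all $\mathbf{y}\in S$. I will derive this from first-order optimality. The function $\tau \mapsto \|\mathbf{x} - (\mathbf{p} + \tau(\mathbf{y}-\mathbf{p}))\|_2^2$ on $[0,1]$ stays within $S$ by convexity and is minimized at $\tau=0$, so its right derivative at $0$ is nonnegative, which is exactly the inequality.

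Next, apply the inequality twice: with $\mathbf{x}=\mathbf{x}_0$ and $\mathbf{y}=\mathbf{x}_c'$, and with $\mathbf{x}=\mathbf{x}_0'$ and $\mathbf{y}=\mathbf{x}_c$. Adding the two gives
\[
\big((\mathbf{x}_0 - \mathbf{x}_0') - (\mathbf{x}_c - \mathbf{x}_c')\big)^\top (\mathbf{x}_c - \mathbf{x}_c') \ge 0,
\]
that is, $\|\mathbf{x}_c - \mathbf{x}_c'\|_2^2 \le (\mathbf{x}_0 - \mathbf{x}_0')^\top(\mathbf{x}_c - \mathbf{x}_c')$. Cauchy--Schwarz then bounds the right-hand side by $\|\bm{\delta}\|_2\|\mathbf{x}_c - \mathbf{x}_c'\|_2$. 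If $\mathbf{x}_c = \mathbf{x}_c'$ the claim is trivial; otherwise dividing through gives $\|\mathbf{x}_c - \mathbf{x}_c'\|_2 \le \|\bm{\delta}\|_2$.

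The only delicate point is the setup rather than the inequality chain. I must make sure that convexity of $S$, already established via concavity of $g_c$ in the proof of Theorem~\ref{thm:uniqueness}, is what licenses the variational inequality. I also need to note explicitly that the feasible set is unchanged under the perturbation of $\mathbf{x}_0$. The rest is routine.
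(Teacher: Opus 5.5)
Your proposal is correct and follows the paper's proof: you identify $\mathbf{x}_c = P_S(\mathbf{x}_0)$ and $\mathbf{x}_c' = P_S(\mathbf{x}_0')$ as projections onto the same closed convex set $S$ from Lemma~\ref{lemma:feasible_set}, and then use non-expansiveness of the projection. The only difference is that the paper simply cites the 1-Lipschitz property of $P_S$, whereas you derive it from the variational inequality; you also state explicitly that $S$ does not depend on $\mathbf{x}_0$, which the paper leaves implicit.
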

\addtocounter{theorem}{-1}
\endgroup

\begin{proof}

Let $S = \{ \mathbf{x} \mid \hat{\bm{\theta}}^{\top} \mathbf{x} - \sqrt{2\epsilon\, \mathbf{x}^\top H^{-1} \mathbf{x}} \ge t \}$ denote the feasible set for the robust counterfactual solutions. According to Lemma \ref{lemma:feasible_set}, $S$ is a non-empty, closed, and convex set.

The robust counterfactual solution $\mathbf{x}_c$ corresponding to an input $\mathbf{x}_0$ minimizes $\|\mathbf{x} - \mathbf{x}_0\|_2^2$ for $\mathbf{x} \in S$. Thus, $\mathbf{x}_c$ is the Euclidean projection of $\mathbf{x}_0$ onto $S$. Let $P_S(\cdot)$ denote this projection operator. Then, we have:
\[ \mathbf{x}_c = P_S(\mathbf{x}_0), \quad \mathbf{x}_c' = P_S(\mathbf{x}_0').
\]
The Euclidean projection $P_S$ onto a non-empty, closed, convex set $S$ is 1-Lipschitz continuous. Given that $\|\mathbf{x}_0 - \mathbf{x}_0'\|_2 = \|\bm{\delta}\|_2$ since $\mathbf{x}_0' = \mathbf{x}_0 + \bm{\delta}$, we get:
\begin{align*}
 \|\mathbf{x}_c - \mathbf{x}_c'\|_2 & = \|P_S(\mathbf{x}_0) - P_S(\mathbf{x}_0')\|_2 \\
 &\leq \|\mathbf{x}_0 - \mathbf{x}_0'\|_2 = \|\bm{\delta}\|_2.
\end{align*}

Therefore, we obtain: $\|\mathbf{x}_c - \mathbf{x}_c'\|_2 \le \|\bm{\delta}\|_2.$











\end{proof}

Next we focus on proving that our counterfactual solutions are aligned with directions of the important features.


\subsection{Proof for Theorem \ref{thm:VI}}\label{app:proof5}

We state and prove Theorem \ref{thm:VI} below.

\begingroup
\def\thetheorem{\ref{thm:VI}}
\begin{theorem}[Alignment with Important Feature Directions.]
\TheoremAlignment
\end{theorem}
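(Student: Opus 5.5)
Since $\epsilon>0$ is fixed and $u\mapsto\sqrt{2\epsilon u}$ is strictly increasing on $[0,\infty)$, minimizing $C_{rob}(\epsilon,\mathbf{x}_c)$ over vectors of fixed norm $\|\mathbf{x}_c\|_2 = r > 0$ is equivalent to minimizing the quadratic form $\mathbf{x}_c^\top H^{-1}\mathbf{x}_c$ over the sphere of radius $r$. (If $\epsilon=0$ the penalty is identically zero and the statement is vacuous, so we assume $\epsilon>0$.) This reduces the claim to a Rayleigh quotient problem for $H^{-1}$, and the plan is to solve that problem through the spectral decomposition of $H$.

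First, use that $H$ is symmetric positive definite and write $H = Q\Lambda Q^\top$ with $Q=[\mathbf{q}_1,\dots,\mathbf{q}_p]$ orthogonal and $\lambda_1>\lambda_2\ge\dots\ge\lambda_p>0$; the strict first inequality is the uniqueness assumption on $\lambda_1$. Then $H^{-1}=Q\Lambda^{-1}Q^\top$, with eigenvalues $1/\lambda_i$. Among these, $1/\lambda_1$ is the strictly smallest, and its eigenvector is $\mathbf{q}_1$.

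Second, change coordinates to $\mathbf{y}=Q^\top\mathbf{x}_c$. Orthogonality gives $\|\mathbf{y}\|_2=r$, and the quadratic form becomes
\[
\mathbf{x}_c^\top H^{-1}\mathbf{x}_c=\sum_{i=1}^p \frac{y_i^2}{\lambda_i}\;\ge\;\frac{1}{\lambda_1}\sum_{i=1}^p y_i^2=\frac{r^2}{\lambda_1}.
\]
This lower bound is attained at $\mathbf{x}_c=\pm r\,\mathbf{q}_1$. Consequently the minimal penalty at norm $r$ equals $r\sqrt{2\epsilon/\lambda_1}$.

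Third, characterize the equality case, which is the only delicate step. Equality requires $\sum_{i\ge 2} y_i^2\,(1/\lambda_i-1/\lambda_1)=0$. Each coefficient $1/\lambda_i-1/\lambda_1$ is strictly positive for $i\ge2$, precisely because $\lambda_1$ is unique, so equality forces $y_i=0$ for all $i\ge 2$. Hence every minimizer is of the form $\mathbf{x}_c=\pm r\,\mathbf{q}_1$, i.e. collinear with $\mathbf{q}_1$.

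Finally, map the conclusion back through the monotone square root. The penalty $C_{rob}(\epsilon,\mathbf{x}_c)$ attains its minimum over the sphere exactly at vectors aligned with $\mathbf{q}_1$, which is the statement of the theorem.
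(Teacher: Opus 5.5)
Your proposal is correct and follows the paper's proof essentially step for step: the monotone reduction to the quadratic form $\mathbf{x}_c^\top H^{-1}\mathbf{x}_c$, the spectral decomposition of $H$, the orthogonal change of variables $\mathbf{y}=Q^\top\mathbf{x}_c$, and placing all of the norm on the coordinate with the smallest coefficient $1/\lambda_1$. Your equality-case argument, using the strictly positive coefficients $1/\lambda_i - 1/\lambda_1$, rigorously justifies a step the paper only asserts informally: that every minimizer, not merely some minimizer, is collinear with $\mathbf{q}_1$.
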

\addtocounter{theorem}{-1}
\endgroup

\begin{proof}

When $\mathbf{x}_c = \bm{0}$, the conclusion of the theorem are satisfied in the trivial sense. Assume that $\mathbf{x}_c \neq \bm{0}$.
Minimizing $C_{rob}(\epsilon, \mathbf{x}_c)$ for $\epsilon > 0$ is equivalent to minimizing $\mathbf{x}_c^\top H^{-1} \mathbf{x}_c$.

Since $H$ is symmetric positive definite, its  decomposition is $H = U \Lambda U^{\top}$, where $U$ is an orthogonal matrix, such that its columns ($\mathbf{q}_1, \dots, \mathbf{q}_d$) are the eigenvectors of $H$, and $\Lambda = \operatorname{diag}(\lambda_1, \lambda_2, \dots, \lambda_d)$ is the matrix with eigenvalues on the diagonal. 
Then $H^{-1} = U \Lambda^{-1} U^{\top}$, where $\Lambda^{-1} = \operatorname{diag}(1/\lambda_1, 1/\lambda_2, \dots, 1/\lambda_d)$. The smallest eigenvalue of $H^{-1}$ is $1/\lambda_1$, since by assumption of the theorem $\lambda_1 > \lambda_2 \ge \cdots \ge \lambda_d > 0$.

Let $\bm{\xi} = U^{\top}\mathbf{x}_c$. Then $\mathbf{x}_c = U\bm{\xi}$, and $\lVert \bm{\xi} \rVert_2 = \lVert U^{\top}\mathbf{x}_c \rVert_2 = \lVert \mathbf{x}_c \rVert_2$ since $U$ is orthogonal. We are minimizing the following quadratic form:
\[ \mathbf{x}_c^{\top}H^{-1}\mathbf{x}_c = (U\bm{\xi})^{\top}(U\Lambda^{-1}U^{\top})(U\bm{\xi}) = \bm{\xi}^{\top}\Lambda^{-1}\bm{\xi} = \sum_{j=1}^d \frac{\xi_j^2}{\lambda_j}. \]
To find the optimal for $\bm{\xi}$, we analyze which direction minimizes this sum for any given positive norm. Thus, we add constraints that $\lVert \bm{\xi} \rVert_2^2$ is fixed and positive, $\lVert \bm{\xi} \rVert_2^2= a^2 > 0$, and search for minimizing direction among all vectors with the fixed norm $a$.
Since $1/\lambda_1 < 1/\lambda_j$ for $j \neq 1$, the sum $\sum_{j=1}^d \xi_j^2/\lambda_j$ is minimized when the whole mass of the squared norm $\lVert \bm{\xi} \rVert_2^2$ is placed on the component that corresponds to the smallest coefficient $1/\lambda_1$. Therefore,  the minimizing $\bm{\xi}$ is $(\pm \lVert \bm{\xi} \rVert_2, 0, \dots, 0)^{\top}$ in the basis of $U$.
Transforming back to $\mathbf{x}_c$:
\[ \mathbf{x}_c = U\bm{\xi} = \xi_1\mathbf{q}_1 + \sum_{i=2}^d \xi_i\mathbf{q}_i = (\pm \lVert \bm{\xi} \rVert_2)\mathbf{q}_1 = (\pm \lVert \mathbf{x}_c \rVert_2) \mathbf{q}_1.\]
Thus, for any fixed non-zero norm, $\mathbf{x}_c^\top H^{-1}\mathbf{x}_c$ is minimized when $\mathbf{x}_c$ is aligned with the eigenvector $\mathbf{q}_1$.


\end{proof}

\subsection{Proof for Theorem \ref{thm:tradeoff}}\label{app:proof3}

The last property we consider is that the counterfactuals generated by ElliCE have a robustness-proximity trade-off. This property has also been observed by other works \cite{ForelParmentierVidal2022} and is natural for counterfactual generations. ElliCE allows us to find the smallest length counterfactuals that are robust to all models in the Rashomon set.
We next state Theorem \ref{thm:tradeoff} and then prove it.

\begingroup
\def\thetheorem{\ref{thm:tradeoff}}
\begin{theorem}[Robustness-Proximity Trade-off.]
\TheoremTradeoff
\end{theorem}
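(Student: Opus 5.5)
The argument rests on nestedness of the feasible sets in $\epsilon$. Write $S(\epsilon)=\{\mathbf{x}: g_\epsilon(\mathbf{x})\ge t\}$ with $g_\epsilon(\mathbf{x})=\hat{\bm{\theta}}^\top\mathbf{x}-\sqrt{2\epsilon\,\mathbf{x}^\top H^{-1}\mathbf{x}}$. Since $\sqrt{\mathbf{x}^\top H^{-1}\mathbf{x}}\ge 0$, the function $g_\epsilon(\mathbf{x})$ is nonincreasing in $\epsilon$ for every fixed $\mathbf{x}$. Hence $S(\epsilon_2)\subseteq S(\epsilon_1)$ whenever $\epsilon_1<\epsilon_2$.

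\textbf{Weak inequality.} The optimum $\mathbf{x}_c^*(\epsilon_2)$ is feasible for the $\epsilon_1$ problem. By the optimality of $\mathbf{x}_c^*(\epsilon_1)$, which is unique by Theorem~\ref{thm:uniqueness}, we get $\nu(\epsilon_1)\le\nu(\epsilon_2)$.

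\textbf{Strict inequality.} Suppose $\nu(\epsilon_1)=\nu(\epsilon_2)$. Then $\mathbf{x}_c^*(\epsilon_2)$ is also a minimizer of the $\epsilon_1$ problem, so by uniqueness $\mathbf{x}_c^*(\epsilon_2)=\mathbf{x}_c^*(\epsilon_1)=:\mathbf{x}^*$. I will show this contradicts the hypotheses, in three steps.

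\emph{Step 1: the constraint is active at $\mathbf{x}^*$ for $\epsilon_1$.} Since $\nu(\epsilon_1)>0$, we have $\mathbf{x}^*\neq\mathbf{x}_0$. The case $\mathbf{x}_0\in S(\epsilon_1)$ is impossible, because then $\mathbf{x}_0$ itself would be the projection and $\nu(\epsilon_1)=0$. So $\mathbf{x}_0\notin S(\epsilon_1)$. Now suppose $g_{\epsilon_1}(\mathbf{x}^*)>t$. By continuity of $g_{\epsilon_1}$ (Lemma~\ref{lemma:feasible_set}), a small step from $\mathbf{x}^*$ toward $\mathbf{x}_0$ stays feasible and strictly decreases the distance, contradicting optimality. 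Hence $g_{\epsilon_1}(\mathbf{x}^*)=t$.

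\emph{Step 2: $\mathbf{x}^*$ is infeasible for $\epsilon_2$.} Because $\mathbf{x}^*\neq\mathbf{0}$ and $H^{-1}\succ 0$, we have $\mathbf{x}^{*\top}H^{-1}\mathbf{x}^*>0$. Since $\epsilon_2>\epsilon_1$, this gives
\[
g_{\epsilon_2}(\mathbf{x}^*) < g_{\epsilon_1}(\mathbf{x}^*) = t,
\]
so $\mathbf{x}^*\notin S(\epsilon_2)$. This contradicts $\mathbf{x}^*=\mathbf{x}_c^*(\epsilon_2)$, which lies in $S(\epsilon_2)$. Therefore $\nu(\epsilon_1)<\nu(\epsilon_2)$.

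\emph{Step 3: a bookkeeping point.} The argument implicitly assumes that $\mathbf{x}_c^*(\epsilon_2)$ exists. The theorem speaks of it, so I will take existence as given. The existence of $\mathbf{x}_c^*(\epsilon_1)$ is then automatic, since $S(\epsilon_1)\supseteq S(\epsilon_2)$ is nonempty, closed and convex.

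The hypothesis $\hat{\bm{\theta}}^\top\mathbf{x}_0\le t$ is not actually needed once $\nu(\epsilon_1)>0$ is assumed, because the activity of the constraint already follows from $\mathbf{x}_0\notin S(\epsilon_1)$.

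The main obstacle is Step 1, the activity of the constraint at the optimum. The cleanest route avoids KKT and Slater conditions and uses only convexity of $S(\epsilon_1)$ together with continuity of $g_{\epsilon_1}$. Specifically, the points $\mathbf{x}^*+s(\mathbf{x}_0-\mathbf{x}^*)$ for small $s>0$ remain feasible when $g_{\epsilon_1}(\mathbf{x}^*)>t$, and they are strictly closer to $\mathbf{x}_0$. This is exactly where the hypotheses $\nu(\epsilon_1)>0$ and $\mathbf{x}_c^*(\epsilon_1)\neq\mathbf{0}$ enter: the first rules out $\mathbf{x}^*=\mathbf{x}_0$, and the second rules out the degenerate case in which the penalty term vanishes.
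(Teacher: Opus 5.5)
Your proof is correct and follows the same route as the paper. Nestedness $S(\epsilon_2)\subseteq S(\epsilon_1)$ gives the weak inequality, uniqueness then forces a common minimizer $\mathbf{x}^\star\neq\mathbf{0}$ at which the $\epsilon_1$-constraint is active, and $\mathbf{x}^{\star\top}H^{-1}\mathbf{x}^\star>0$ then makes $\mathbf{x}^\star$ infeasible for $\epsilon_2$, the paper's ``forces $\epsilon_1=\epsilon_2$'' step. Your step-toward-$\mathbf{x}_0$ argument for constraint activity fills in a step the paper only asserts; note it uses continuity of $g_{\epsilon_1}$ alone, not convexity. You are also right that the hypothesis $\hat{\bm{\theta}}^\top\mathbf{x}_0\le t$ is never used.
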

\addtocounter{theorem}{-1}
\endgroup

\begin{proof}
For every robustness level $\epsilon>0$ define the feasible region  
\[
   \mathcal C(\epsilon)\;:=\;\Bigl\{\mathbf{x}\in\R^{d}\;\Bigm|\;
         \hat{\bm{\theta}}^{\top}\mathbf{x}-\sqrt{2\epsilon\,\mathbf{x}^{\top}H^{-1}\mathbf{x}}\;\ge\;t\Bigr\}.
\]
Recall that $\mathbf{x}_c^*(\epsilon)$ is the {\em unique} minimiser of the strictly–convex
objective $Obj(\mathbf{x}) = \min_{\mathbf{x}}\|\mathbf{x}-\mathbf{x}_0\|_2^{2}$ over~$\mathcal C(\epsilon)$
as we proved in  Theorem~\ref{thm:uniqueness}.  
Based on the theorem notations, $\nu(\epsilon)=\|\mathbf{x}_c^*(\epsilon)-\mathbf{x}_0\|_2^{2}$.

If $0<\epsilon_1<\epsilon_2$ then for every $\mathbf{x}\in\R^{d}$:
$ -\sqrt{2\epsilon_1\,\mathbf{x}^{\top}H^{-1}\mathbf{x}}\;\;\ge\;\;
   -\sqrt{2\epsilon_2\,\mathbf{x}^{\top}H^{-1}\mathbf{x}}
$.
Therefore for any $\mathbf{x} \in \mathcal C(\epsilon_2)$:
\[\hat{\bm{\theta}}^{\top}\mathbf{x}-\sqrt{2\epsilon_1\,\mathbf{x}^{\top}H^{-1}\mathbf{x}} \geq\hat{\bm{\theta}}^{\top}\mathbf{x}-\sqrt{2\epsilon_2\,\mathbf{x}^{\top}H^{-1}\mathbf{x}}\ge t.\]
This means that sets $\mathcal C(\epsilon_2)$ and $\mathcal C(\epsilon_1)$ are enclosed:
\[
   \mathcal C(\epsilon_2)\subseteq\mathcal C(\epsilon_1).
\]

Then for $\mathbf{x}_c^*(\epsilon_2)\in\mathcal C(\epsilon_1)$ and $\mathbf{x}_c^*(\epsilon_1)$ in $\mathcal C(\epsilon_1)$ we get that $\nu$ is non-decreasing:
\[
   \nu(\epsilon_1)\;=\;\|\mathbf{x}_0-\mathbf{x}_c^*(\epsilon_1)\|_2^{2}
   \;\le\;\|\mathbf{x}_0-\mathbf{x}_c^*(\epsilon_2)\|_2^{2}\;=\;\nu(\epsilon_2).
\]

To show strict increase we will assume  contradiction. More specifically, we assume  that $\nu(\epsilon_1)=\nu(\epsilon_2)$.
Then both $\mathbf{x}_c^*(\epsilon_1)$ and $\mathbf{x}_c^*(\epsilon_2)$ minimize $Obj(\mathbf{x})$
over $\mathcal C(\epsilon_1)$.  Uniqueness 
 of the solution means that it is possible only when:
\[
   \mathbf{x}_c^*(\epsilon_1)=\mathbf{x}_c^*(\epsilon_2)=:\mathbf{x}^{\star}\neq\mathbf0.
\]
Because $\nu(\epsilon_1)>0$, we have $\mathbf{x}_0\notin \mathcal C(\epsilon_1)$.  Therefore the minimizer $\mathbf{x}^\star$ of
$\min_{\mathbf{x}\in\mathcal C(\epsilon_1)} \|\mathbf{x}-\mathbf{x}_0\|_2$ cannot be $\mathbf{x}_0$ and must lie on the boundary of $\mathcal C(\epsilon_1)$, i.e.,
\[
\hat{\bm{\theta}}^{\top}\mathbf{x}^\star-\sqrt{2\epsilon_1}\,\|H^{-1/2}\mathbf{x}^\star\|_2 = t.
\]

Now suppose for contradiction that $\nu(\epsilon_2)=\nu(\epsilon_1)$ with $\epsilon_2>\epsilon_1$.
Since $\mathcal C(\epsilon_2)\subseteq \mathcal C(\epsilon_1)$, the point $\mathbf{x}^\star$ is feasible for $\mathcal C(\epsilon_1)$, and
$\nu(\epsilon_2)=\nu(\epsilon_1)$ implies that $\mathbf{x}^\star$ also attains the minimum distance over $\mathcal C(\epsilon_2)$.
By uniqueness of the projection (Theorem~\ref{thm:uniqueness}), it follows that the minimizers coincide, i.e.,
$\mathbf{x}^\star(\epsilon_2)=\mathbf{x}^\star(\epsilon_1)=\mathbf{x}^\star$.
In particular, $\mathbf{x}^\star$ must be feasible for $\mathcal C(\epsilon_2)$, which gives the inequality
\[
\hat{\theta}^{\top}\mathbf{x}^\star-\sqrt{2\epsilon_2}\,\|H^{-1/2}\mathbf{x}^\star\|_2 \ge t.
\]
Combining this with the boundary equality for $\epsilon_1$ forces $\epsilon_1=\epsilon_2$, which is a contradiction.

\end{proof}

\begin{table}[t]
\centering
\small
\caption{Datasets description and pre-processing notes. In Comments, we provide original dimension of the dataset before any edits (\#samples $\times$ \#features).}\label{tab:datasets}
\label{tab:preprocessing_notes}
\begin{tabular}{p{0.1\textwidth}p{0.05\textwidth}p{0.05\textwidth}p{0.5\textwidth}p{0.05\textwidth}p{0.1\textwidth}}
\toprule
\textbf{Dataset Name} 
  & \textbf{\# Samples} 
  & \textbf{\# Features} 
  & \textbf{Preprocessing Notes} 
  & \textbf{Source}
  & \textbf{Comments}\\
\midrule

Parkinson’s 
  & 5872 
  & 18 
  & 
      \textit{motor\_UPDRS} is considered as the target, that was binarized by cutting at the median value.
      Dropped subject\#, test\_time, raw UPDRS columns.
      Under-sampled to balance the dataset.
      Shuffled data.
  & \cite{parkinsons_telemonitoring_189}
  & Orig. $5 875 \times 21$ \\
\midrule

Wine Quality 
  & 2554 
  & 11 
  & 
      Binarized target \textit{quality} at the median value.
      Dropped \textit{type} and raw \textit{quality}.
      Under-sampled to balance dataset,
      Shuffled data.
  & \cite{wine_quality_186}
  & Orig.$ 6 497 \times 12$ \\
\midrule
German Credit 
  & 600 
  & 61 
  & 
      Performed one‐hot encoding for all categorical features. Under-sampled to balance the dataset.
      Shuffled data.
  & \cite{statlog_144}
  & Orig. $ 1000 \times 20$ \\
\midrule
Extended Iris 
  & 800 
  & 4 
  & 
      Converted to binary classification:  $1$, if Iris-setosa, $0$ otherwise.  Under-sampled to balance the dataset.
      Shuffled data. Only kept features present in the original Iris dataset.
  & \cite{samy_baladram_2023}
  & Orig.$ 1 200 \times 67$ \\
\midrule

Banknote 
  & 1220 
  & 4 
  & 
      Selected variance, skewness, curtosis, entropy as features. Under-sampled to balance the dataset.
      Shuffled data.
  & \cite{banknote_authentication_267}
  & Orig. $1 372 \times 4$ \\
\midrule
Australian Credit 
  & 530 
  & 18 
  & Dropped the first column (unique identifier). Under-sampled to balance the dataset.
      Shuffled data.
  & \cite{statlog_143}
  & Orig. $640 \times 19$ \\
\midrule
FICO 
  & 9470 
  & 20 
  & 
  Dropped 3 features with the high amount of missing values (MSinceMostRecentDelq, NetFractionInstallBurden, MSinceMostRecentInqexcl7days) and removed rows with $>$40\% missingness among remaining features. Filled remaining missing values with -1. Under-sampled to balance the dataset. Shuffled data.
  & \cite{fico2018heloc}
  & Orig. $10 459 \times 23$ \\
\midrule
COMPAS 
  & 6392
  & 7 
  &  Under-sampled to balance the dataset.
      Shuffled data.
  & \cite{bao2021s}
  & Orig. $6 907 \times 7$ \\
\midrule
Diabetes 
  & 536 
  & 8 
  &  
  Under-sampled to balance the dataset.
      Shuffled data.
  & \cite{smith1988pima}
  & Orig. $768 \times 8$ \\

\bottomrule
\end{tabular}
\end{table}

\section{Additional Experiments}\label{appendix:experiments}

In this Appendix, we provide more experimental results and show how ElliCE performs under different optimization schemes and compared to different baselines.

\subsection{Datasets}

Please see Table \ref{tab:datasets} for the summary of all datasets used in the paper and pre-processing notes, if any. For each dataset and each fold, we fit a StandardScaler on the training set, computing the per-feature mean and variance, and then apply those parameters to both the validation and test splits. This ensures that each continuous feature is centered at zero mean and scaled to unit variance based on the training data. One-hot encoded and binary features are left untouched.

\subsection{Empirical Rashomon Set Construction}\label{appendix:emp_rash_c}

Each evaluator set of models was constructed as follows. 
\textbf{Retrain models} were obtained by independently retraining the base model from random initialization using different random seeds on the same training and validation data. 
Models whose training loss remained within the Rashomon bound were retained to form the Rashomon set. 
\textbf{Dropout models} were generated by applying Gaussian dropout noise directly to the weights of the trained base model. 
The dropout variance hyperparameter was first tuned to approximate the target Rashomon bound by finding the maximum dropout value such that at least $5\%$ of the models sampled using it remained within the Rashomon set (which in practice, due to the discreteness of the dropout search space, typically ranged between $10\%$ and $15\%$). 
During model sampling, if a specific perturbed model exceeded the Rashomon bound, its dropout variance was reduced by a factor of $2$; if the resulting model satisfied the Rashomon constraint, it was retained. 
\textbf{AWP models} were produced using Adversarial Weight Perturbation (AWP), where the base model’s parameters were iteratively updated with adversarial perturbations computed on the training data. 
We stopped the perturbations once the training loss exceeded the Rashomon threshold and retained the model weights from the previous step, ensuring that all generated models remained valid members of the Rashomon set.

\subsection{Computation Resources}

 Our experiments were conducted on Apple M2/M2 Max MacBooks (16-32 GB RAM) for development and Google Cloud Platform C4 VMs (4-16 vCPUs, 16-64 GB RAM) for production runs.
We performed parallelization using multiprocessing for cross-validation folds.
 Our complete pipeline (5 datasets, 2 model types) required $\sim 64$ hours sequential execution on a single CPU.
We used fixed random seeds for reproducibility. 



\begin{algorithm}[t]
\caption{Continuous ElliCE (Preparation \& CE generation)}\label{alg1}
\begin{small}
\begin{algorithmic}[1]
  \State \textbf{Input:} $\{(\textbf{x}_i,y_i)\}_{i=1}^n$, baseline model $f_{\bm{\theta}}$, Rashomon budget $\epsilon$, regularization $\lambda$, initial \ $\mathbf{x}_0$
  \Statex\textbf{Preparation:}
  \State $E_i\gets\phi(\textbf{x}_i)$,\quad $\tilde H\gets[E,\mathbf1]$
  \State Optionally refit last layer $\hat{\bm{\theta}}\gets\mathrm{LR}(\tilde H,y)$
  \State $\omega_c\gets\hat{\bm{\theta}}$, $s\gets\tilde H\,\omega_c$, $p\gets\sigma(s)$
  \State $W\gets\mathrm{diag}(p\odot(1-p))$
  \State $H\gets\tfrac1n\,\tilde H^\top W\,\tilde H + \lambda I$
  \Statex\textbf{Generation:}
  \State Initialize: $\mathbf{x}_c \leftarrow \mathbf{x}_0$, \textit{steps} $\leftarrow 0$
  \While{$\textit{steps}<T$}
    \State $h\gets\phi(\mathbf{x}_c)$,\quad $\tilde h\gets[h^\top,1]^\top$
    \State \emph{Worst‐case model (Theorem \ref{th:closed_form_solution}):}
    \State \quad
      $\displaystyle
         \bm{\theta}_{w}
         = \omega_c
           - \sqrt{2\epsilon}\,
             \frac{H^{-1}\tilde h}
                  {\sqrt{\tilde h^{\top}H^{-1}\tilde h}}$
    \State Prediction loss:
      $\ell_{\mathrm{pred}}
       = \mathrm{BCE}(\tilde h^\top\omega_c,1)$
    \State Robustness loss:
      $\ell_{\mathrm{rob}}
       = \mathrm{BCE}(\tilde h^\top\bm{\theta}_w,1)$
    \State Proximity loss:
      $\ell_{\mathrm{prox}}
       = \|\mathbf{x}_c - \mathbf{x}_0\|_2^2$
    \State Sparsity loss:
      $\ell_{\mathrm{sparse}}
       = \|\mathbf{x}_c - \mathbf{x}_0\|_1$
    \State Total objective:
      $\displaystyle
        \mathcal{L}
        = \alpha\,\ell_{\mathrm{pred}}
        + \beta\,\ell_{\mathrm{rob}}
        + \lambda\,\ell_{\mathrm{prox}}
        + \gamma\,\ell_{\mathrm{sparse}}
      $
    \State Gradient update:
      $\mathbf{x}_c \gets \mathbf{x}_c - \eta\,\nabla_{\mathbf{x}_c}\mathcal{L}$
    \If{$\tilde h^\top\bm{\theta}_w\ge0$}
      \State \textbf{break}
    \EndIf
    \State \textit{steps} $\leftarrow$ \textit{steps}+1
  \EndWhile
  \State \textbf{Output:} robust counterfactual $\mathbf{x}_c$
\end{algorithmic}
\end{small}
\end{algorithm}

\begin{algorithm}[t]
\caption{Continuous ElliCE (CE generation)}\label{alg2}
\begin{small}
\begin{algorithmic}[1]
  \State \textbf{Input:} factual $\mathbf{x}_0$, central weights $\boldsymbol\omega_c$, Hessian $H$, Rashomon radius $\epsilon$, learning-rate $\eta$, max\_steps $T$, coefficients $(\alpha,\beta,\lambda,\gamma)$
  \State \textbf{Initialize:} $\mathbf{x}_c \leftarrow \mathbf{x}_0$, \textit{steps} $\leftarrow 0$
  \While{$\textit{steps}<T$}
    \State Compute penultimate features $h \leftarrow \phi(\mathbf{x}_c)$, augmented $\tilde h \leftarrow [h^{\top},1]^{\top}$
    \State \emph{Worst‐case model (Thm.\!1):}
    \State \quad
      $\displaystyle
         \bm{\theta}_{w}
         = \boldsymbol\omega_c
           - \sqrt{2\epsilon}\,
             \frac{H^{-1}\tilde h}
                  {\sqrt{\tilde h^{\top}H^{-1}\tilde h}}$
    \State Prediction loss 
      $\ell_{\mathrm{pred}} = \mathrm{BCE}(\tilde h^{\top}\boldsymbol\omega_c,1)$
    \State Robustness loss 
      $\ell_{\mathrm{rob}} = \mathrm{BCE}(\tilde h^{\top}\bm{\theta}_w,1)$
    \State Proximity loss 
      $\ell_{\mathrm{prox}} = \|\mathbf{x}_c - \mathbf{x}_0\|_2^2$
    \State Sparsity loss 
      $\ell_{\mathrm{sparse}} = \|\mathbf{x}_c - \mathbf{x}_0\|_1$
    \State Total objective 
      $\mathcal{L} = \alpha\,\ell_{\mathrm{pred}}
                       + \beta\,\ell_{\mathrm{rob}}
                       + \lambda\,\ell_{\mathrm{prox}}
                       + \gamma\,\ell_{\mathrm{sparse}}$
    \State Gradient update 
      $\mathbf{x}_c \leftarrow \mathbf{x}_c - \eta\,\nabla_{\mathbf{x}_c}\mathcal{L}$
    \If{$\tilde h^{\top}\bm{\theta}_w \ge 0$}
      \State \textbf{break}
    \EndIf
    \State \textit{steps} $\leftarrow$ \textit{steps}+1
  \EndWhile
  \State \textbf{Output:} robust counterfactual $\mathbf{x}_c$
\end{algorithmic}
\end{small}
\end{algorithm}

\subsection{Optimization}\label{appendix:optimization}

For continuous scenarios where counterfactuals are not required to be on the data manifold, we implemented a continuous optimization approach described in Algorithm \ref{alg1} and \ref{alg2}. This gradient-based method directly optimizes in the input space by iteratively computing the worst-case model using Theorem~\ref{th:closed_form_solution} and updating the counterfactual through gradient descent on a multi-objective loss combining prediction, robustness, proximity, and sparsity terms.


\begin{figure}[h]
    \centering
\includegraphics[width=1\textwidth]{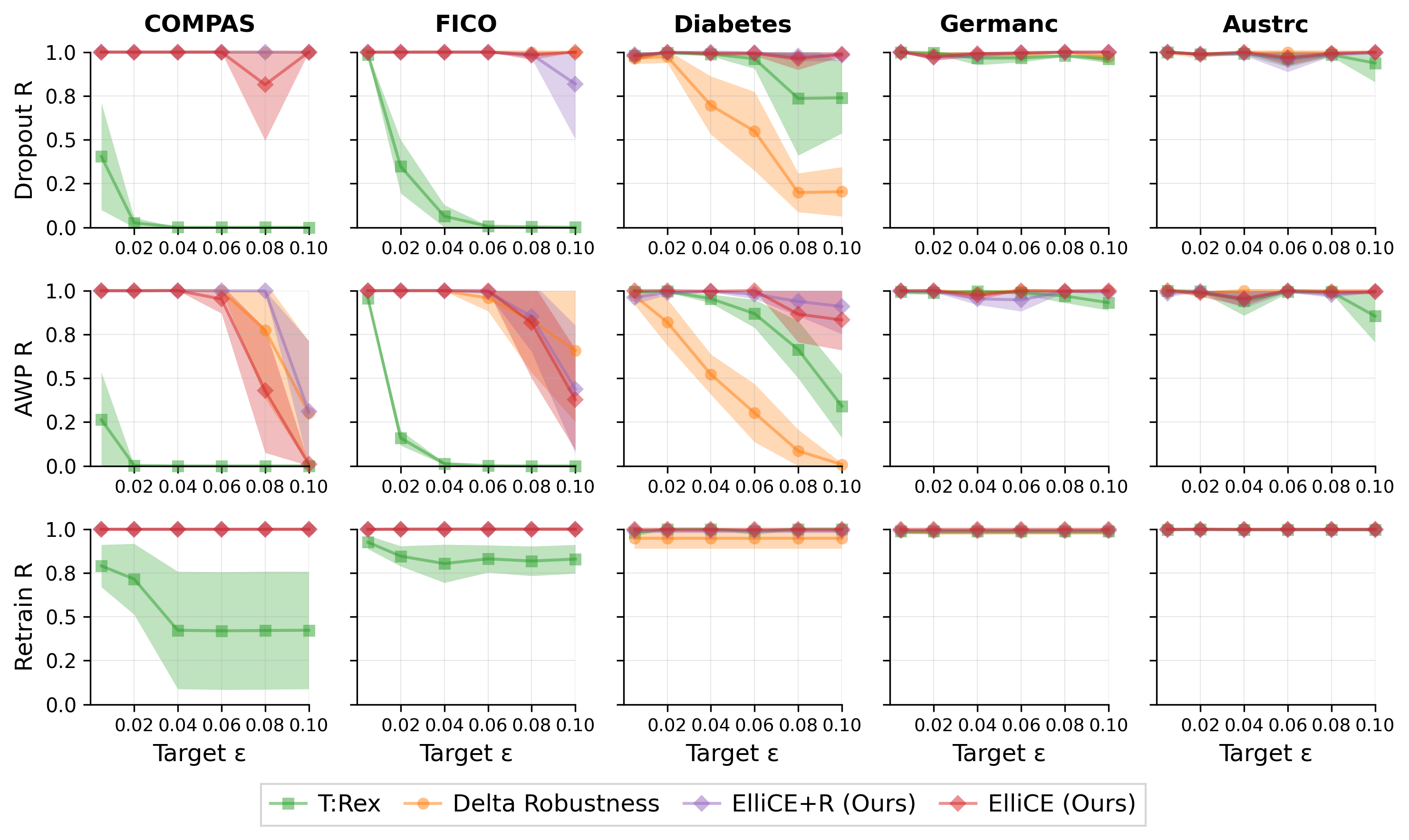}
    \caption{Robustness evaluation of ElliCE against baselines on MLPs using data-supported generation across all datasets.
}
    \label{fig:extension_rob_mlp}
\end{figure}

\begin{figure}[t]
    \centering
\includegraphics[width=1\textwidth]{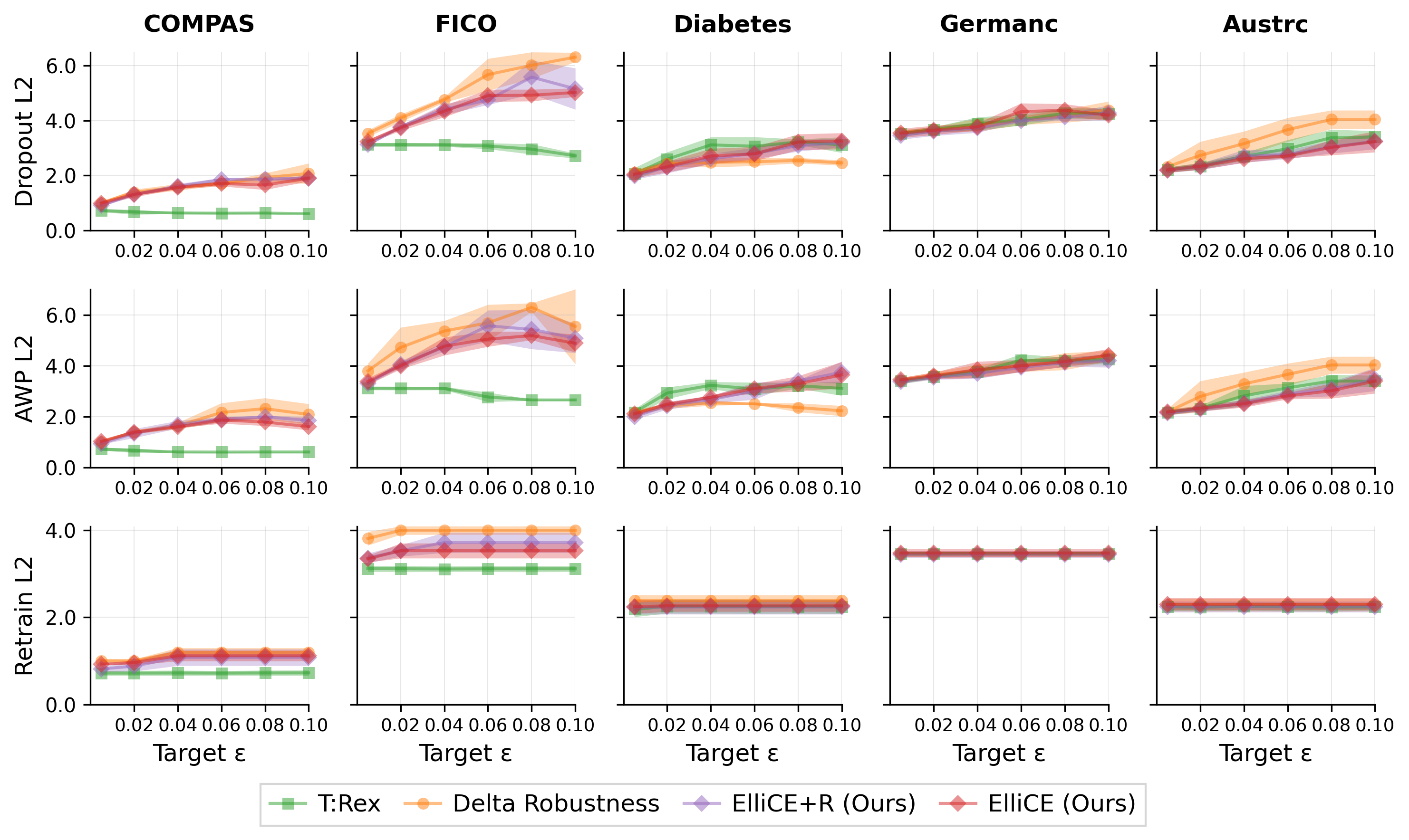}
    \caption{Length evaluation of ElliCE against baselines on MLPs using data-supported generation across all datasets.
}
    \label{fig:extension_length_mlp}
\end{figure}

\begin{figure}[t]
    \centering
\includegraphics[width=1\textwidth]{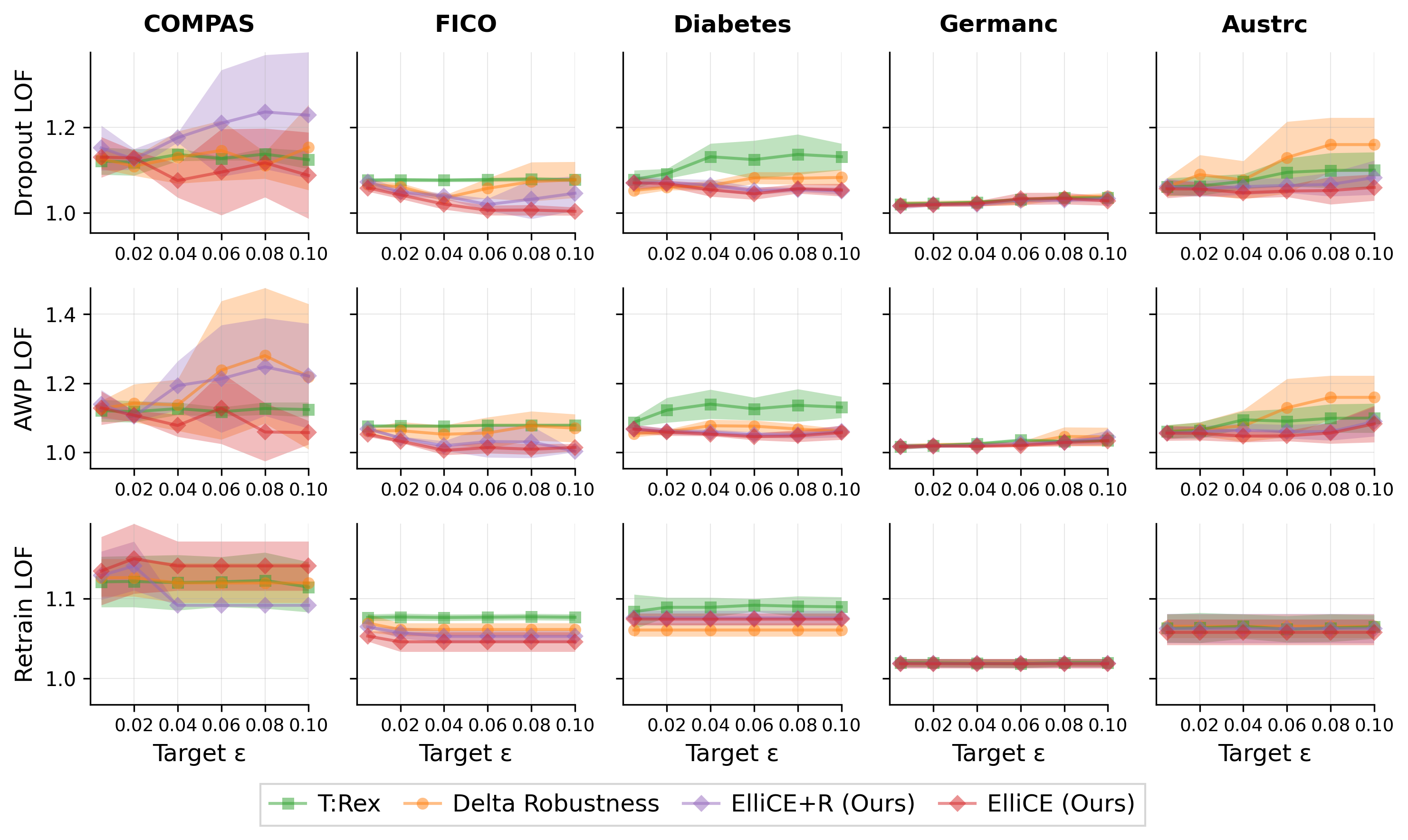}
    \caption{LOF evaluation of ElliCE against baselines on MLPs using data-supported generation across all datasets.
}
    \label{fig:lof_ds_mlp}
\end{figure}

\begin{figure}[t]
    \centering
\includegraphics[width=1\textwidth]{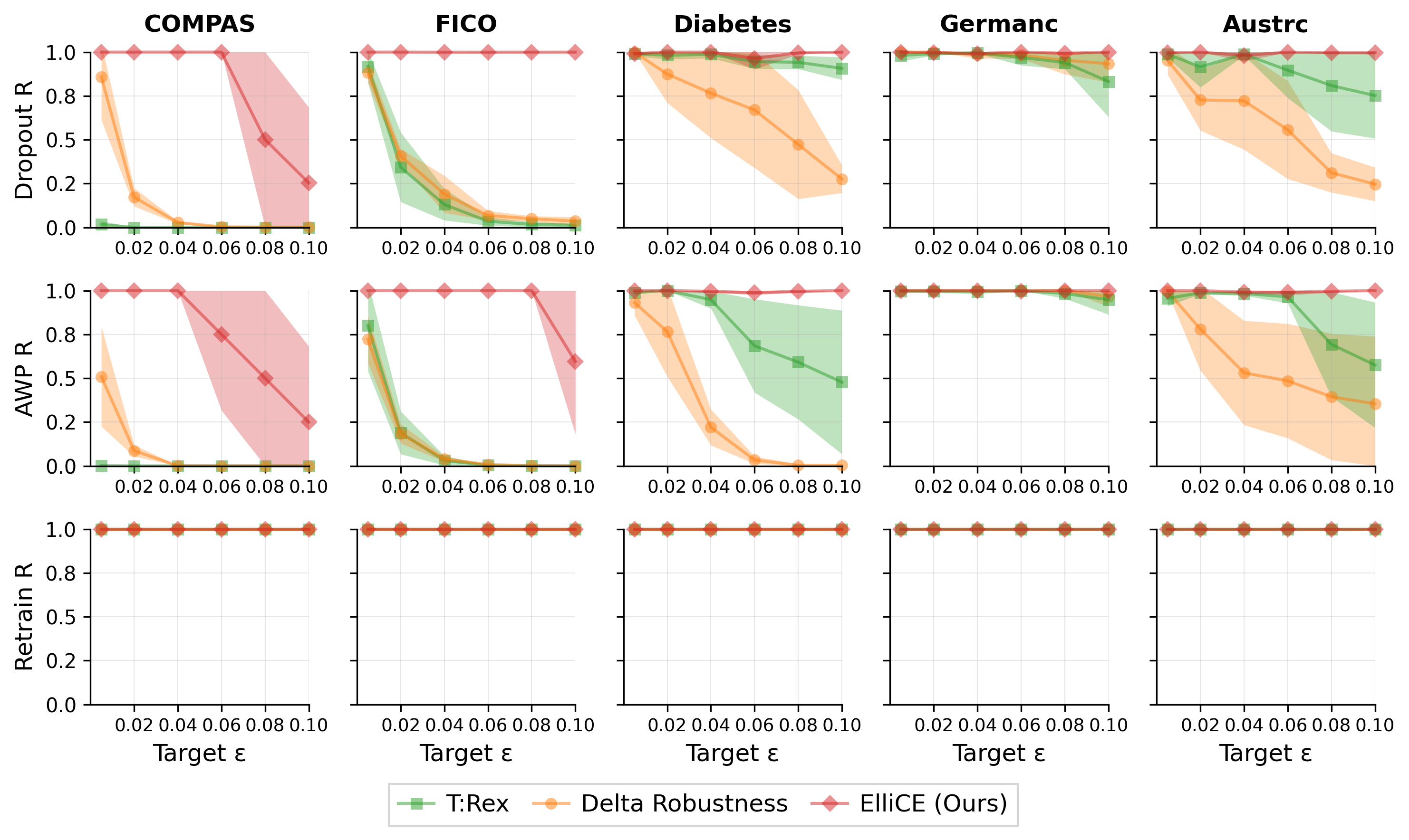}
    \caption{Robustness evaluation of ElliCE against baselines on linear models using data-supported generation across all datasets.
}
    \label{fig:extension_rob_linear}
\end{figure}

\begin{figure}[t]
    \centering
\includegraphics[width=1\textwidth]{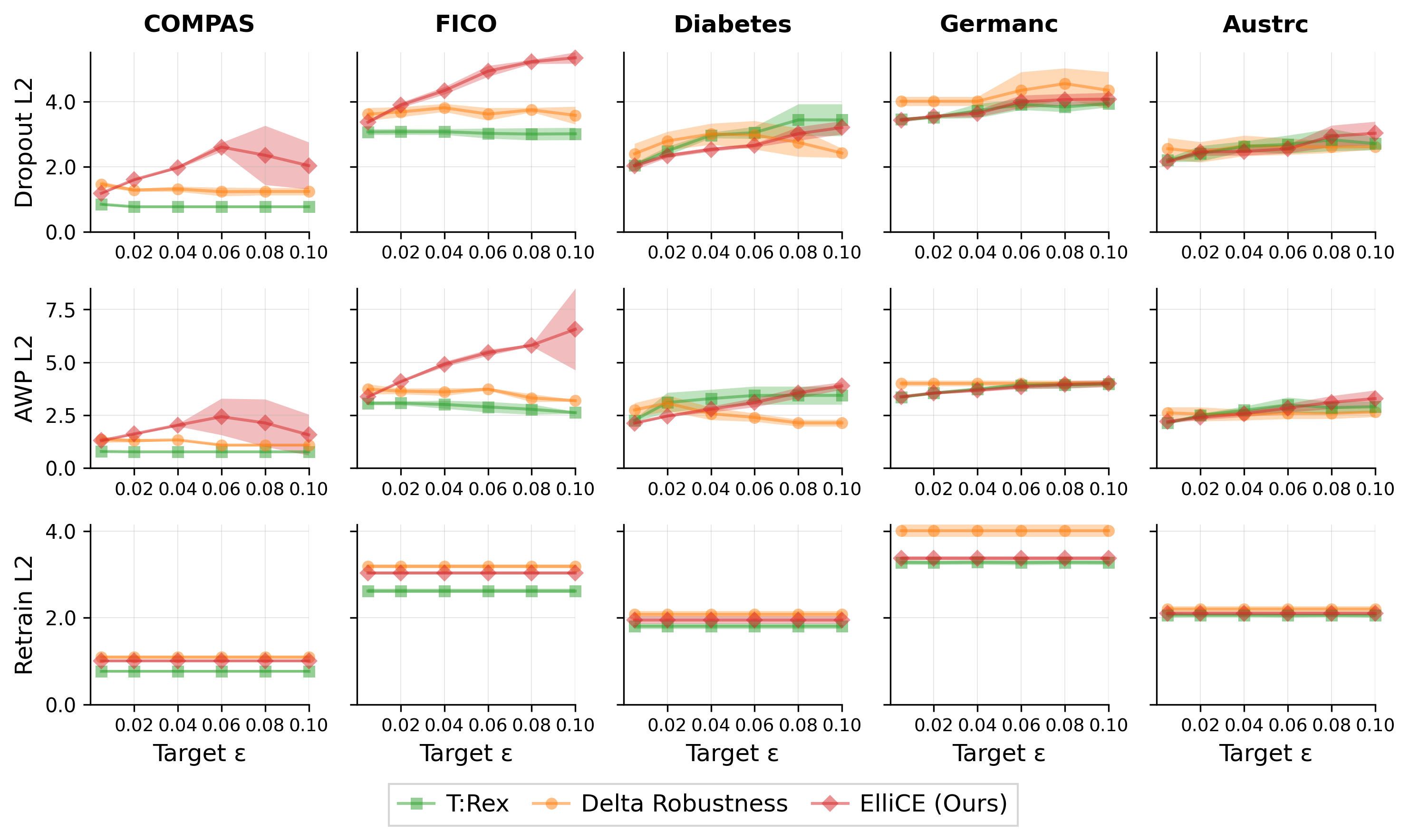}
    \caption{Length evaluation of ElliCE against baselines on linear models using data-supported generation across all datasets.
}
    \label{fig:extension_length_linear}
\end{figure}

\begin{figure}[t]
    \centering
\includegraphics[width=1\textwidth]{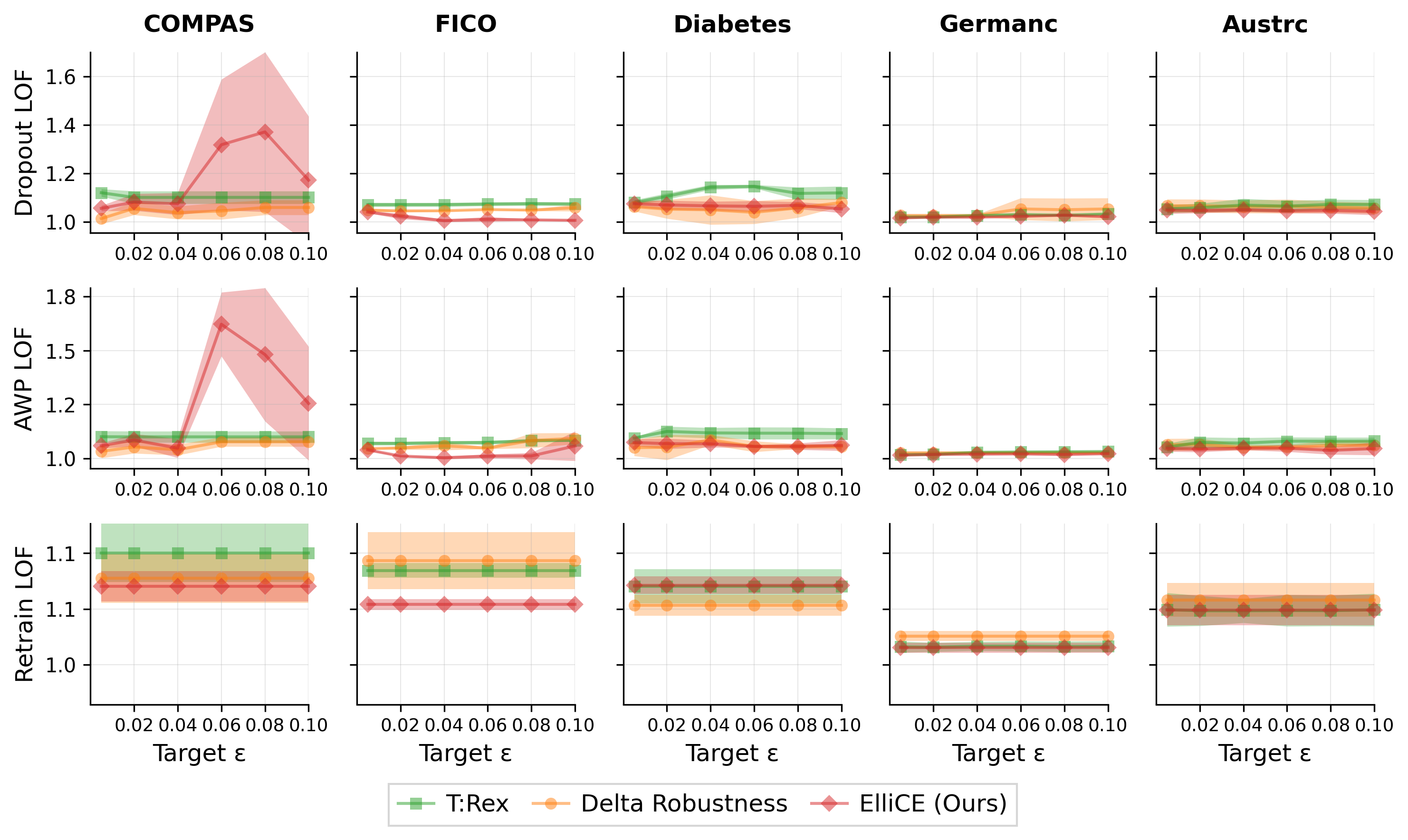}
    \caption{LOF evaluation of ElliCE against baselines on linear models using data-supported generation across all datasets.
}
    \label{fig:lof_ds_lin}
\end{figure}

\subsection{Hyperparameter Tuning}

For hyperparameter tuning, we chose the best hyperparameters for each method, targeting validity first and then robustness. We observed that ElliCE, TRex, and ROAR behaved more consistently across different model types and datasets, not requiring significant adjustments to their hyperparameter search ranges. In contrast, Delta Robustness and PROPLACE required disproportionately different search ranges depending on the model, primarily because different model architectures (linear vs. neural networks) operate with vastly different parameter scales and amplitudes, requiring delta-based perturbation methods to adjust their sensitivity accordingly.

For linear models, the search range for Delta Robustness was $[0.2, 3.0]$ with a step size of $0.1$ to $0.2$ (dataset-dependent), and for PROPLACE, the range was $[0.2, 6.5]$ with a step size of $0.2$. MLPs required much smaller amplitudes: Delta Robustness used a range of $[0.002, 0.1]$ with a step of $0.004$, and PROPLACE used $[0.02, 0.5]$ with a step of $0.02$. Interestingly, for an MLP with hidden layers $[32, 32]$ on the Parkinsons dataset, PROPLACE began to lose validity and failed to find counterfactuals (CEs) for $\delta > 0.001$. Consequently, its search range was reduced to $[0.0001, 0.001]$ with a step of $0.0004$. Even within this adjusted range, PROPLACE sometimes failed to converge in a reasonable time when generating CEs for certain inputs $\mathbf{x}$. Thus, we imposed a hard time limit of $30$ seconds per CE generation.

For both TRex and TRexI, we tuned the threshold $\tau$ in the range $[0.375, 1.0]$ with a step of $0.05$ to $0.1$. For ElliCE and Continuous ElliCE, the respective search ranges were $[0.00125 \text{--} 0.0025, 0.15 \text{--} 0.2]$ with step sizes of $0.0025 \text{--} 0.01$, and $[0.0025, 0.2]$ with step sizes of $0.0025 \text{--} 0.01$. For ROAR, we used ranges of $[0.01 \text{--} 0.6]$ with a step of $0.06$.

Additionally, we applied early stopping for all methods upon achieving a robustness score of $1$, which significantly reduced running times for some methods.


\subsection{The Choice of $\epsilon$ Parameter for ElliCE}
The choice of the robustness parameter $\epsilon$ is a key consideration when generating recourse. ElliCE is designed to produce explanations robust to the Rashomon set, offering a basis for selecting $\epsilon$.

Empirical evidence (as seen in Figures \ref{fig:extension_rob_linear}, \ref{fig:extension_rob_mlp}, \ref{fig:cont_rob_linear}, and \ref{fig:cont_rob_nn}) indicates that models produced by random retraining often represent a smaller, less diverse subset of the  Rashomon set, making them easier to be robust against. ElliCE frequently achieves perfect robustness to such retrained models with relatively small $\epsilon$ values.

Ideally, if $\varepsilon_\text{target}$ (the Rashomon parameter of the evaluator) is known, we recommend choosing $\epsilon \ge \varepsilon_\text{target}$. Selecting a value slightly higher, such as $\epsilon = \varepsilon_\text{target} + 0.01$, can provide additional confidence by accounting for broader model multiplicity beyond the local estimates ElliCE focuses on. This helps prevent under-exploration of the Rashomon set. 

If a precise $\varepsilon_\text{target}$ value is unavailable from an evaluator or prior analysis, we recommend using model performance metrics as a practical way to gauge an appropriate $\epsilon$. For example, one might set a target based on acceptable performance deviations. Empirically, rules of thumb such as choosing $\epsilon$ that corresponds to a 10\% increase above the original model's loss, or an $\epsilon$ that captures models within a 5\% accuracy deviation, often work well as starting points. The goal is to select an $\epsilon$ that reflects a meaningful degree of model variability relevant to the specific application.

\subsection{Experiments for Data-Supported Counterfactual Generation}

This section extends results from Section~\ref{section:experiments}. Here, we provide plots for all datasets and hypothesis spaces.
More specifically, robustness and length results for linear models are shown in Figure \ref{fig:extension_rob_linear} and Figure \ref{fig:extension_length_linear} and for multi-layer perceptrons in Figure \ref{fig:extension_rob_mlp} and Figure \ref{fig:extension_length_mlp}.  Table \ref{tab:performance_comparison} contains a snapshot of figures when target epsilon is 10\% of the optimal loss on the train data. The robustness-proximity tradeoff plots are in Figure \ref{fig:rob_len_tradeoff_lin} for linear models and Figure \ref{fig:rob_len_tradeoff_nn} for MLPs. 

\begin{table}[t]
\centering
\small
\caption{Counterfactual explanations generated by ElliCE based on German Credit dataset. We provide the original input, closest counterfactual, and actionable counterfactual respecting immutable features (denoted by $\dagger$). $\varepsilon_\text{target}=0.01 \hat{L}_{train}(f_{\text{baseline}})$}
\label{tab:actionability_example}
\begin{tabular}{l|c|c|c}
\hline
\textbf{Feature} & \textbf{Original} & \textbf{Robust CE} & \textbf{Actionable and Robust CE} \\
\hline
Checking Account Status 
& No account 
& No account 
& No account \\
Duration (months)
& 48 
& \textbf{37.09} 
& \textbf{34.7} \\
Credit History$^\dagger$
& Critical account 
& Critical account  
& Critical account  \\
Purpose$^\dagger$
& Car (new) 
& Car (new) 
& Car (new) \\
Credit Amount (DM)
& 10 127 
& \textbf{7 584} 
& \textbf{7 073} \\
Savings Account Status 
& 500–1000 DM 
& 500–1000 DM 
& 500–1000 DM \\
Employment Duration 
& 1–4 yrs 
& 1–4 yrs 
& 1–4 yrs \\
Installment Rate (\%) 
& 2 
& 2 
& 2 \\
Personal Status \& Sex$^\dagger$
& Male, single 
& Male, single 
& Male, single \\
Other Debtors/Guarantors 
& None & None & None \\
Residence Since 
& 2 & 2 & 2 \\
Property$^\dagger$
& No property 
& No property 
& No property \\
Age $^\dagger$
& 44 
& \textcolor{red}{\textbf{54.06}} 
& 44 \\
Other Installment Plans 
& Bank 
& Bank 
& Bank \\
Housing$^\dagger$
& Free 
& Free 
& Free \\
Number of Credits$^\dagger$
& 0 
& \textcolor{red}{$\bm{0.29}$}
& 0 \\
Job Level$^\dagger$
& Skilled employee 
& Skilled employee 
& Skilled employee \\
Number of Dependents$^\dagger$
& 0 & 0 & 0 \\
Telephone 
& None 
& None 
& \textbf{Registered} \\
Foreign Worker$^\dagger$
& Yes & Yes & Yes \\
\hline
\end{tabular}
\end{table}

\subsubsection{Examples of Counterfactuals Generated by ElliCE}

Based on the German Credit dataset, we demonstrate ElliCE's ability to generate meaningful and actionable recourse.  
More specifically, when dealing with immutable features (those that cannot be changed), ElliCE can impose immutable feature constraints by restricting candidate selection or gradient flow, thereby generating realistic alternatives.

In Table~\ref{tab:actionability_example}, we present the original input, the closest output that is robust to model changes (over the Rashomon set), and then the actionable output. Features such as ``Age,'' ``Personal status $\&$ Sex,'' and whether someone is a ``Foreign Worker'' were held fixed and marked with $\dagger$.

As input (second column), consider an applicant who was denied credit. 
The closest robust counterfactual requires waiting 10 years, which is not feasible for the applicant. We allow changes in other features such as ``Credit amount'' or ``Duration'' but fix ``Age'' and other features of choice, which can be chosen by user depending on what is actionable for them at the time.

\begin{figure}[t]
    \centering
\includegraphics[width=1\textwidth]{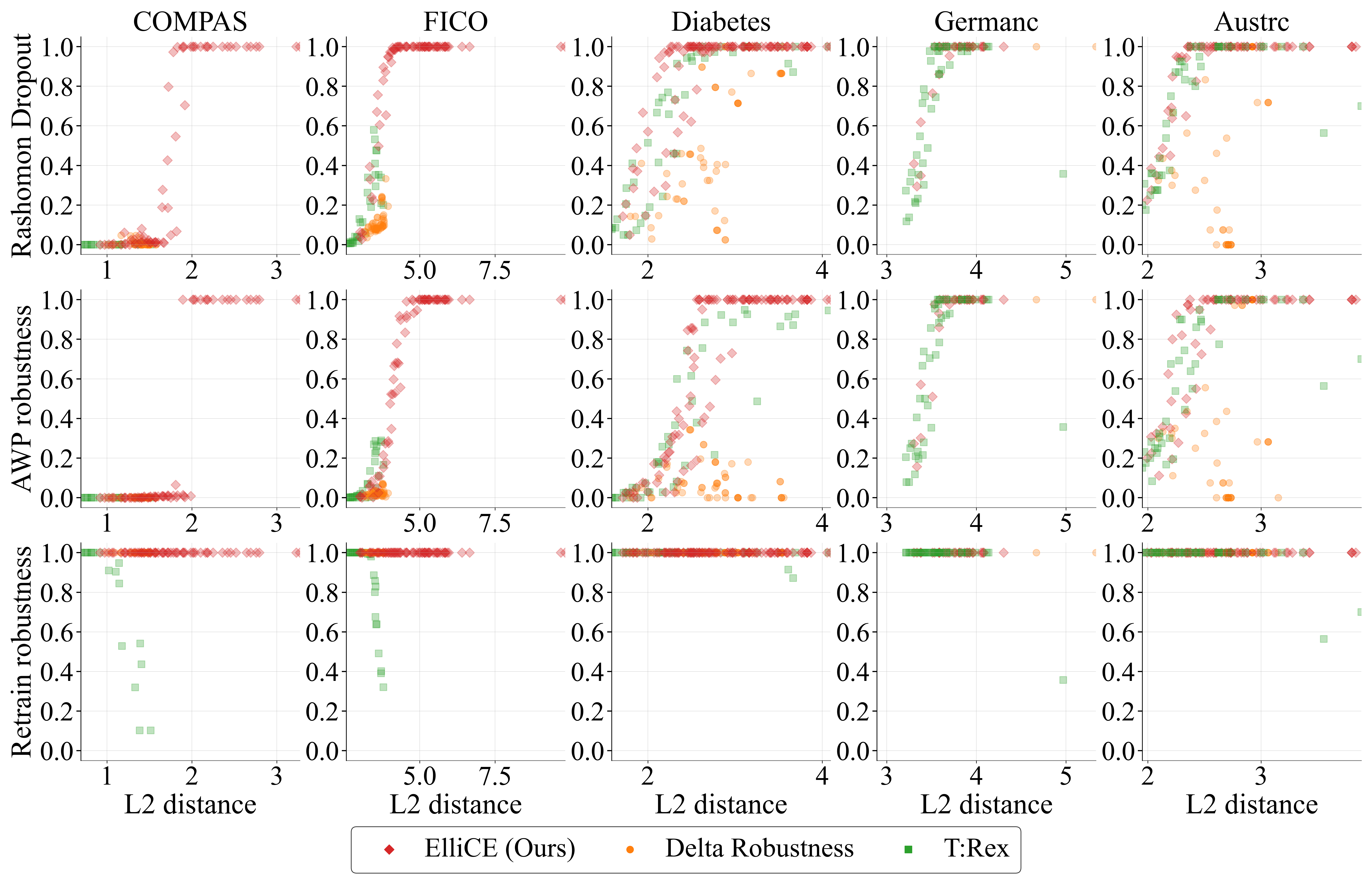}
        \caption{Robustness vs length tradeoff on linear models for all datasets and methods. Data-supported generation,  $\varepsilon_{\text{target}}=0.04$.
}
    \label{fig:rob_len_tradeoff_lin}
\end{figure}

\begin{figure}[t]
    \centering
\includegraphics[width=1\textwidth]{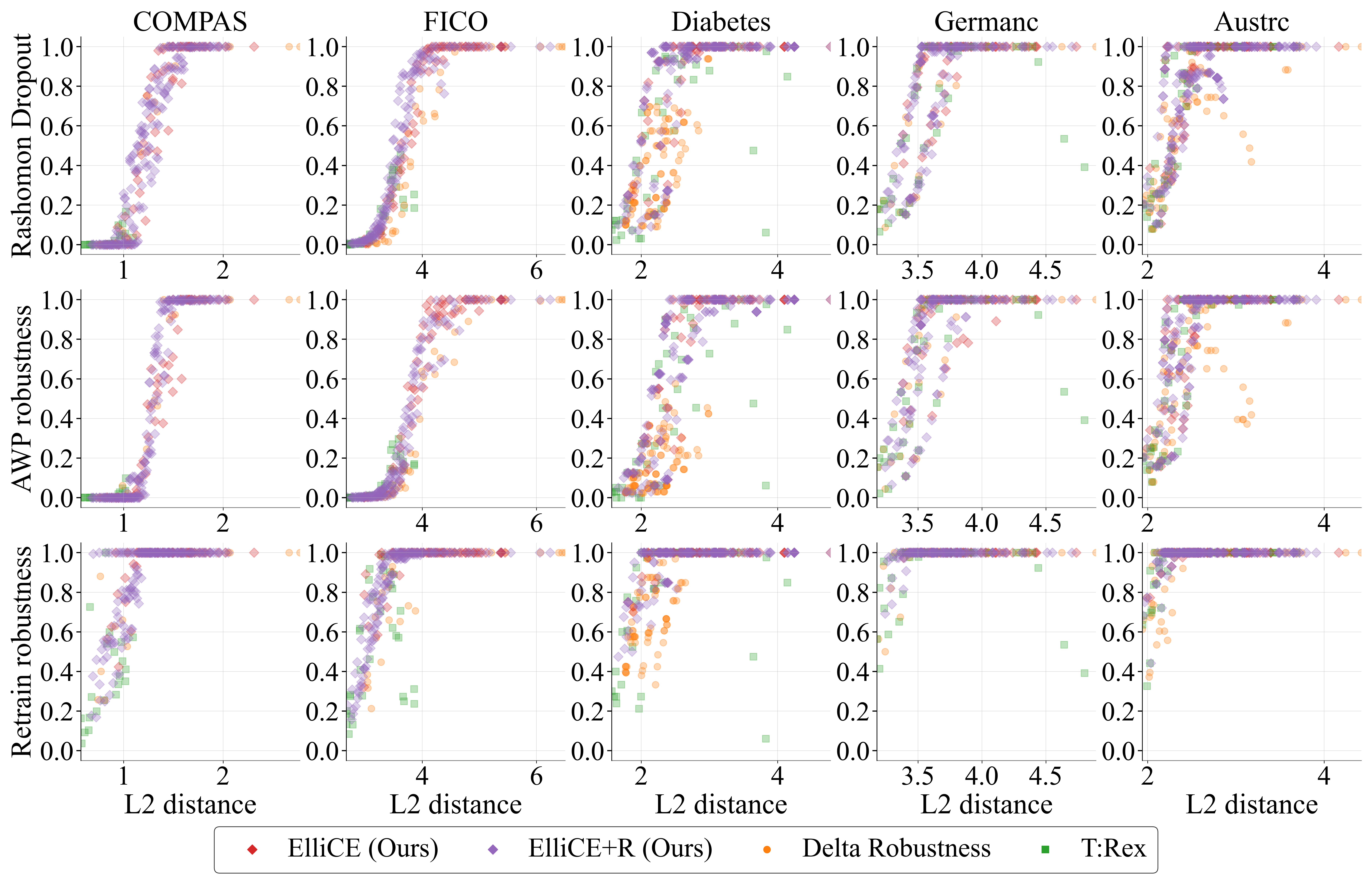}
    \caption{Robustness vs length tradeoff for all datasets and methods on MLPs. Data-supported generation, $\varepsilon_{\text{target}}=0.04$.
}
    \label{fig:rob_len_tradeoff_nn}
\end{figure}

\begin{figure}[t]
    \centering
\includegraphics[width=1\textwidth]{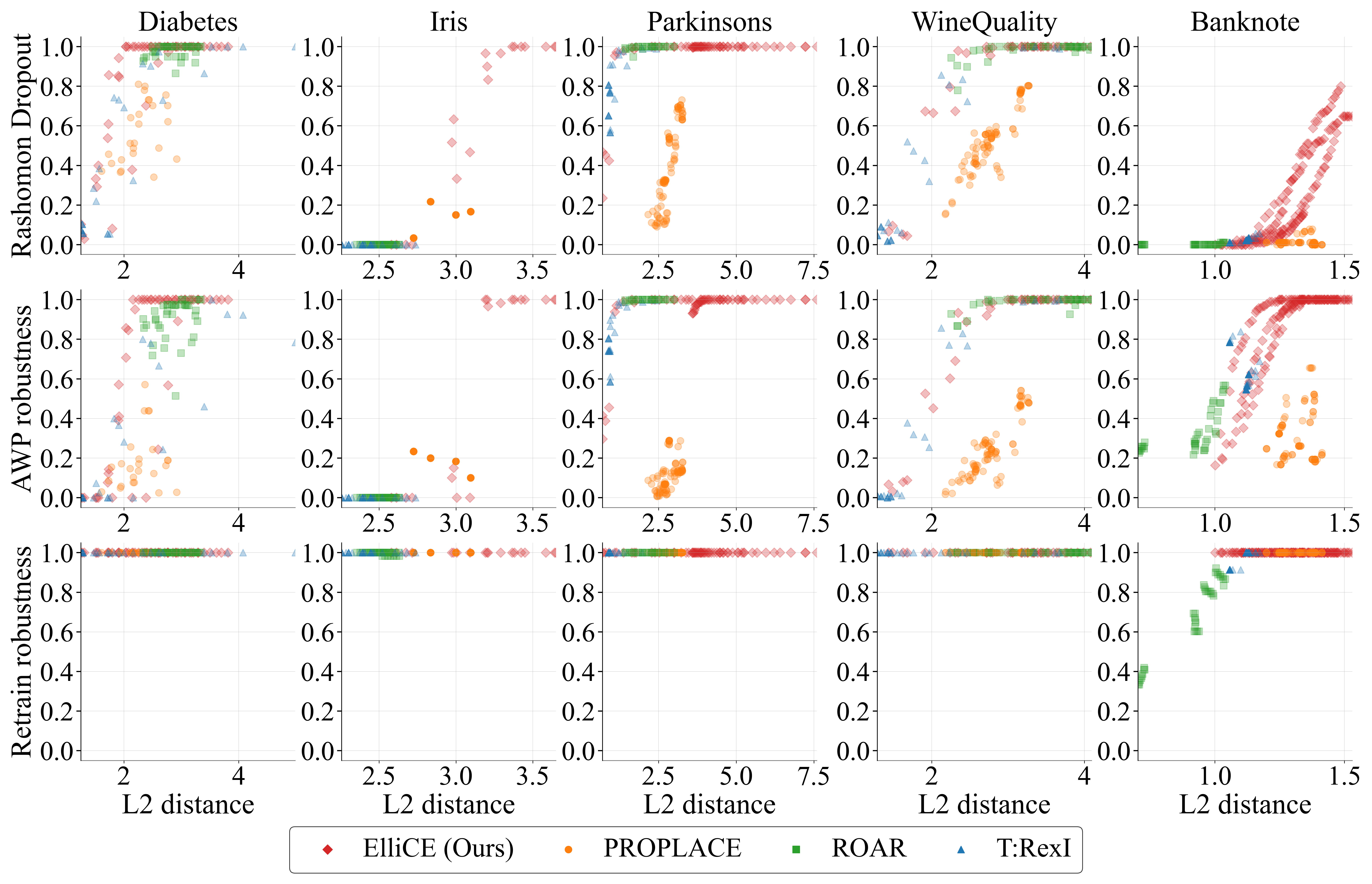}
    \caption{Robustness vs length tradeoff for all datasets and methods on linear models. Continuous generation,  $\varepsilon_{\text{target}}=0.04$.
}
    \label{fig:rob_len_tradeoff_lin_cnt}
\end{figure}

\begin{figure}[t]
    \centering
\includegraphics[width=1\textwidth]{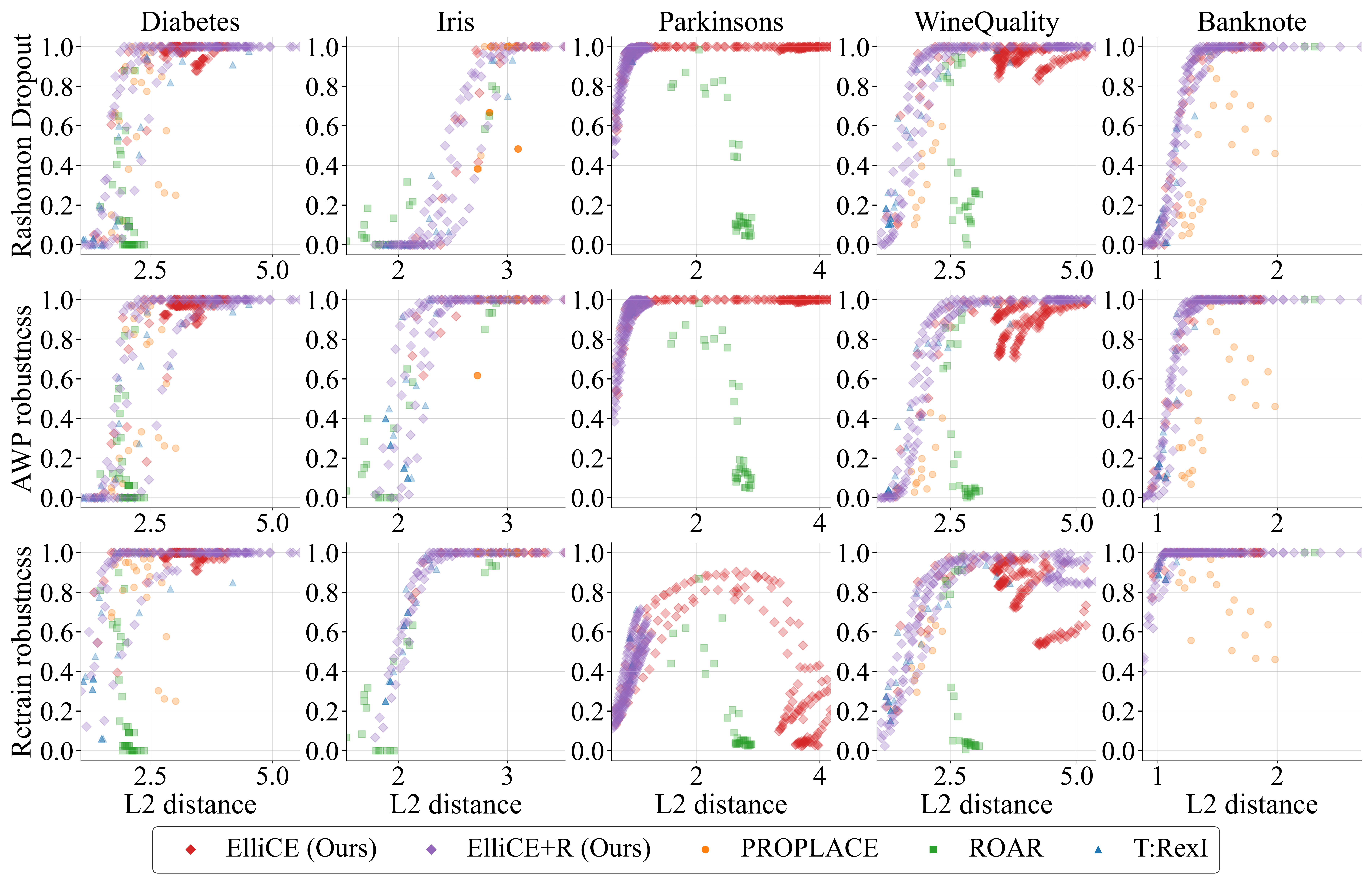}
    \caption{Robustness vs length tradeoff for all datasets and methods. MLPs. Continuous generation. $\varepsilon_{\text{target}}=0.04$.
}
    \label{fig:rob_len_tradeoff_nn_cnt}
\end{figure}


\subsubsection{Analysis of Robustness }

As shown in Figure \ref{fig:extension_rob_linear}, across all five datasets (COMPAS, FICO, Diabetes, German and Australian) as well as all metrics, ElliCE keeps high robustness score almost everywhere, indicating method's ability to adapt to different multiplicity levels. All three methods are robust to Retrain Robustness. Note, that Retrain Robustness for linear models is not influenced by \(\varepsilon_\text{target}\) in any way, due to a determinism of linear solvers, thus the plots stay constant. Delta Robustness starts strong, but quickly drops with increase of \(\varepsilon_\text{target}\). This shows that Delta Robustness is able to capture slight model perturbations, but might not able to extend to more uncertainty. T:Rex performs on par or better than Delta Robustness on all datasets except for COMPAS, showing overall similar behavior.




For MLPs, Figure~\ref{fig:extension_rob_mlp} shows the robustness of ElliCE, Delta Robustness, and T:Rex across different metrics (Retrain, AWP, and Rashomon dropout) and multiplicity levels ($\varepsilon_\text{target}$).

A method's ability to maintain robustness for large $\varepsilon_\text{target}$ values often depends on its capacity to explore more distant counterfactuals as its internal hyperparameters are varied. Delta Robustness exhibits this property of exploring further points. T:Rex, however, not always does. If T:Rex considers a point robust for a given threshold $\tau$, increasing $\tau$ will not necessary change that specific counterfactual, if it remains robust under new $\tau$. ElliCE is designed to navigate the trade-off between counterfactual length and robustness. It increases counterfactual length strategically, only if doing so leads to an improvement in robustness.

\subsubsection{Analysis of Length }

Figure \ref{fig:extension_length_linear} depicts, for each dataset, how the average \(\ell_2\) distance of counterfactuals increases with increasing \(\epsilon\). For random retrain, all methods give approximately the same length. ElliCE tends to be in the middle, but has the longest counterfactuals for FICO and COMPAS datasets. For FICO dataset we observe  a clear trade-off between robustness and length: while other methods struggle to find robust counterfactuals on this dataset, ElliCE does it by increasing the length.

Figure~\ref{fig:extension_length_mlp} shows that, on average, both the counterfactual lengths themselves and their increasing trends with $\varepsilon_\text{target}$ are similar across methods for MLPs.

\subsubsection{Robustness-Length Trade-off}\label{appendix:tradeoff}

Figure~\ref{fig:rob_len_tradeoff_lin} presents the robustness–length trade-off for linear models, computed as the average achieved robustness on the validation dataset compared to the used length. It is noteworthy that for the Retrain Robustness metric, all methods achieve high robustness scores, which can be attributed to the determinism typically observed in linear models under retraining. ElliCE demonstrates a consistent trade-off, producing more robust examples as the length budget is increased. For Delta Robustness, on the Diabetes and Australian datasets, increased counterfactual length does not always translate to proportionally higher robustness compared to other methods. Additionally, for T: Rex for the Retrain Robustness, we observe a decrease in robustness as the distance grows, partially attributed to the fact that it fails to efficiently explore higher distances and returns ``no result'' counterfactual, which by definition is not robust nor valid.



Figure~\ref{fig:rob_len_tradeoff_nn} illustrates, for five benchmark datasets (with $\varepsilon_{\text{target}}=0.04$), how counterfactual length ($\ell_2$ distance) trades off against three robustness metrics: Dropout Robustness (top row), AWP robustness (middle row), and Retrain robustness (bottom row). 


All methods demonstrate a clear trade-off. On some datasets T:Rex explores a range of counterfactual lengths that are, in some instances, shorter than those achieved by ElliCE or Delta Robustness when ElliCE achieves comparable or higher robustness. 

Additional figures for Continuous CE methods are presented in Figures~\ref{fig:rob_len_tradeoff_lin_cnt} and ~\ref{fig:rob_len_tradeoff_nn_cnt}.

\begin{figure}[h]
    \centering
\includegraphics[width=1\textwidth]{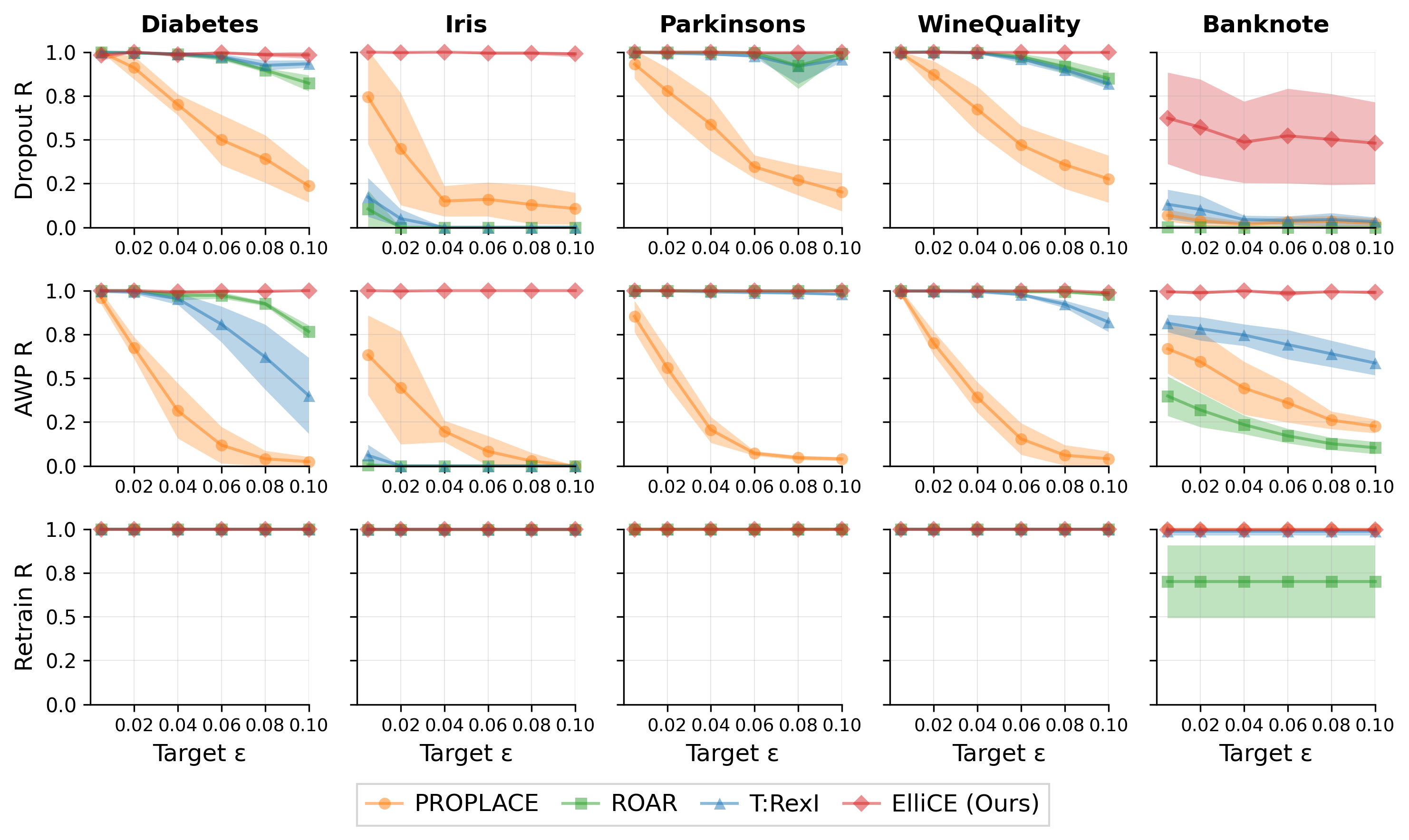}
    \caption{Robustness evaluation of Continuous ElliCE against baselines for linear models. On x-axis we have the target robustness level $\varepsilon_{\text{target}}$ and on y-axis the achieved robustness score. 
}
    \label{fig:cont_rob_linear}
\end{figure}

\begin{figure}[h]
    \centering
\includegraphics[width=1\textwidth]{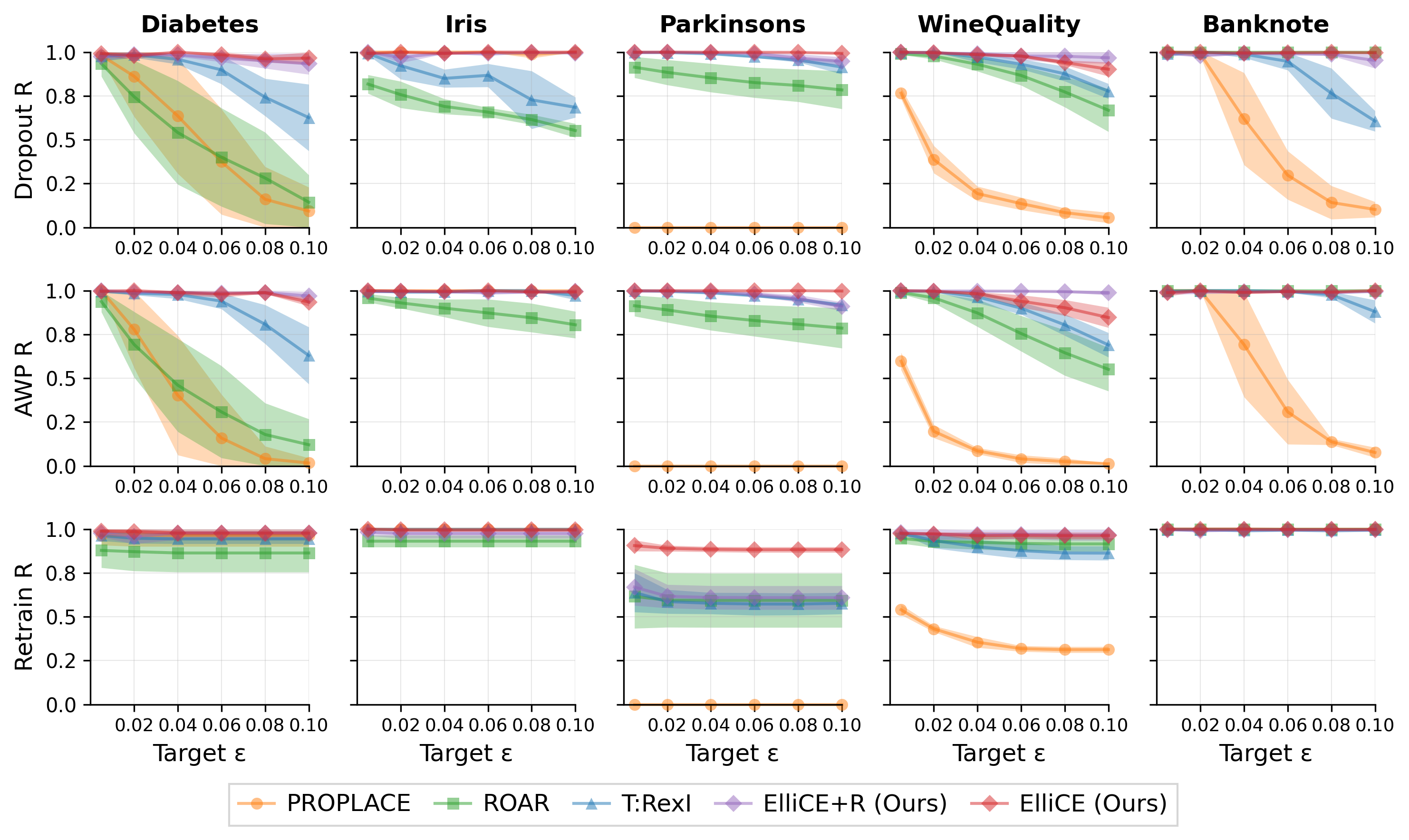}
    \caption{Robustness evaluation of Continuous ElliCE against baselines for NN models. On x-axis we have the target robustness level $\varepsilon_{\text{target}}$ and on y-axis the achieved robustness score.
}
    \label{fig:cont_rob_nn}
\end{figure}


\begin{figure}[h]
    \centering
\includegraphics[width=1\textwidth]{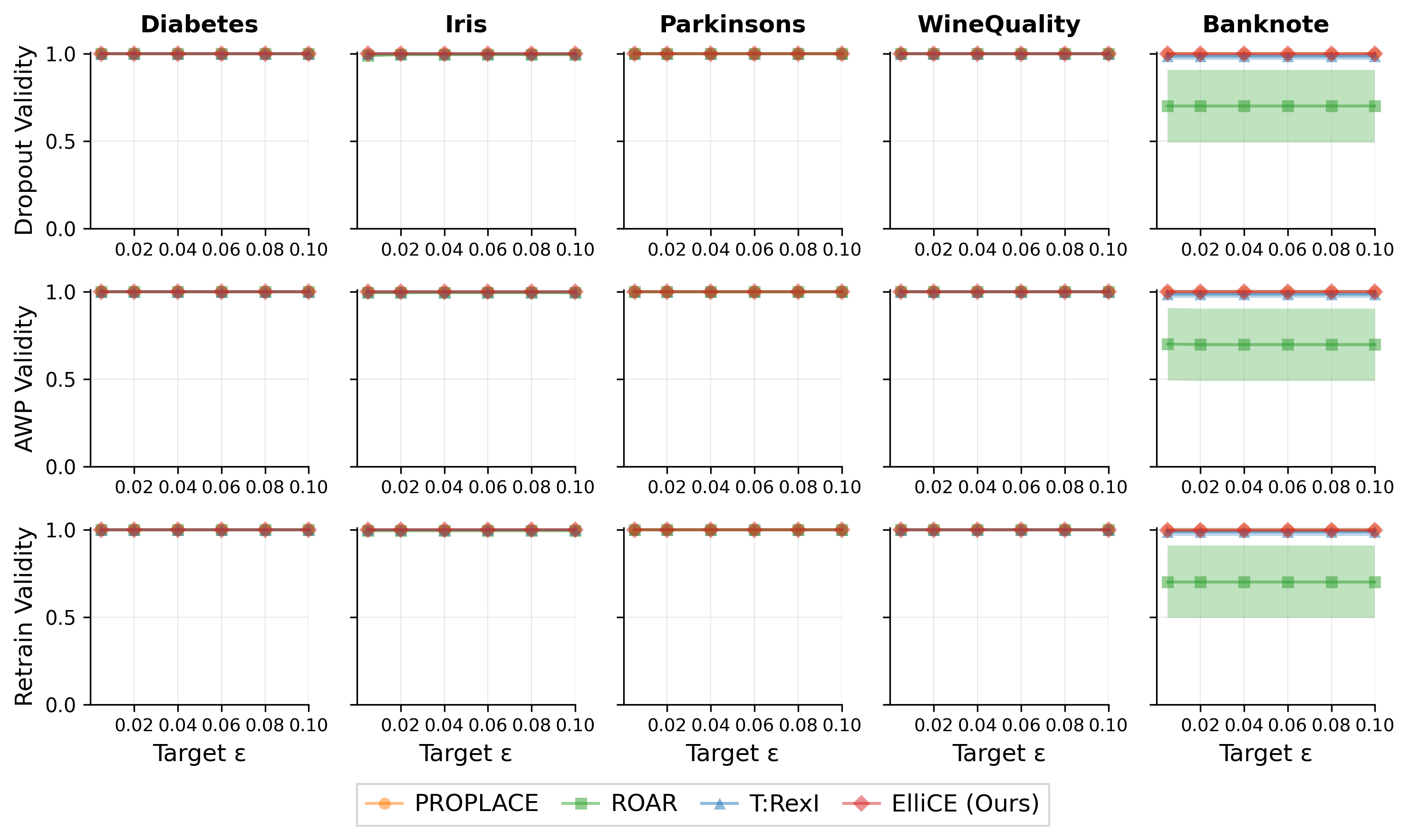}
    \caption{Validity evaluation of Continuous ElliCE against baselines for linear models. On x-axis we have the target robustness level $\varepsilon_{\text{target}}$ and on y-axis the achieved robustness score. 
}
    \label{fig:cont_val_linear}
\end{figure}

\begin{figure}[h]
    \centering
\includegraphics[width=1\textwidth]{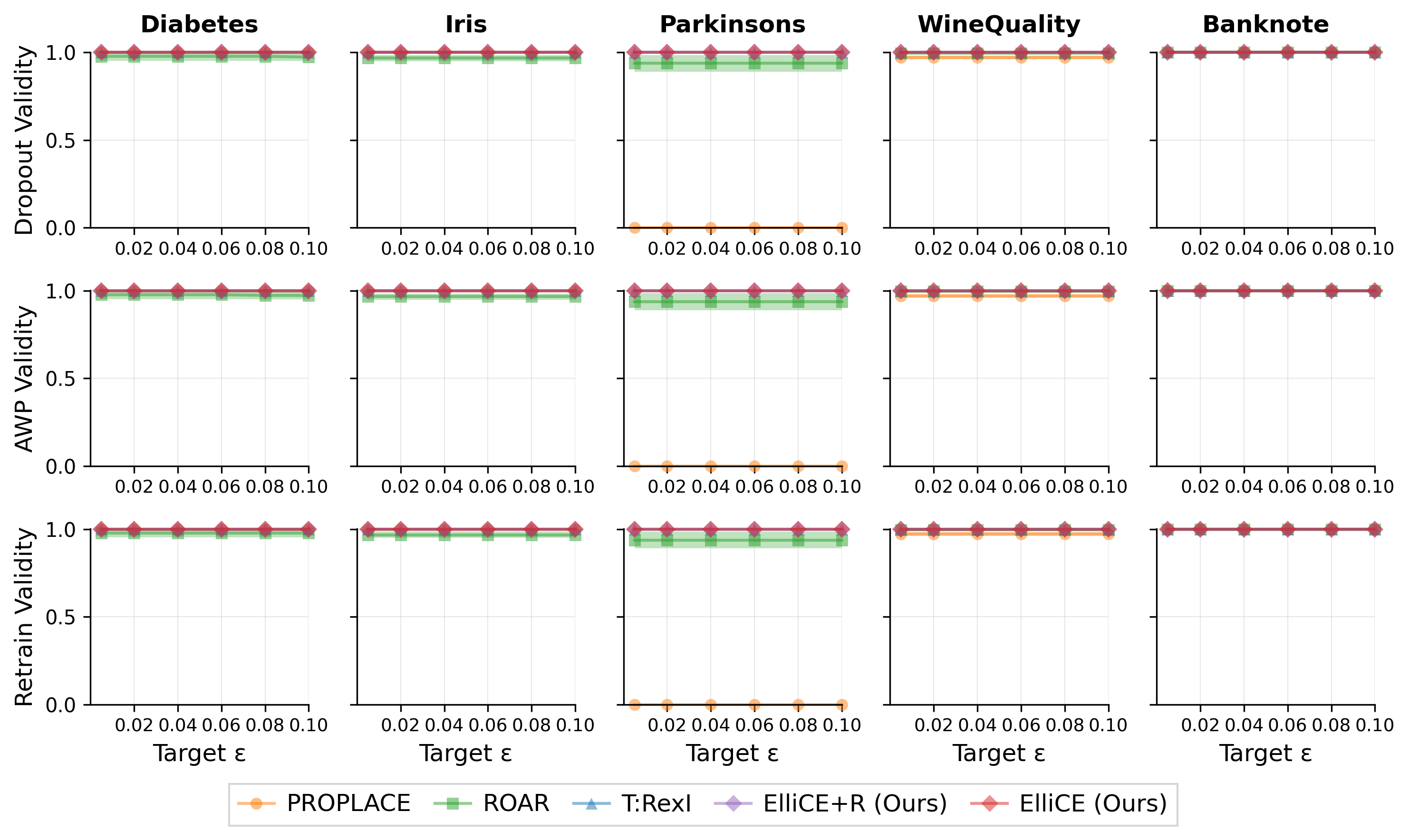}
    \caption{Validity evaluation of Continuous ElliCE against baselines for NN models. On x-axis we have the target robustness level $\varepsilon_{\text{target}}$ and on y-axis the achieved robustness score.
}
    \label{fig:cont_val_nn}
\end{figure}

\begin{figure}[h]
    \centering
\includegraphics[width=1\textwidth]{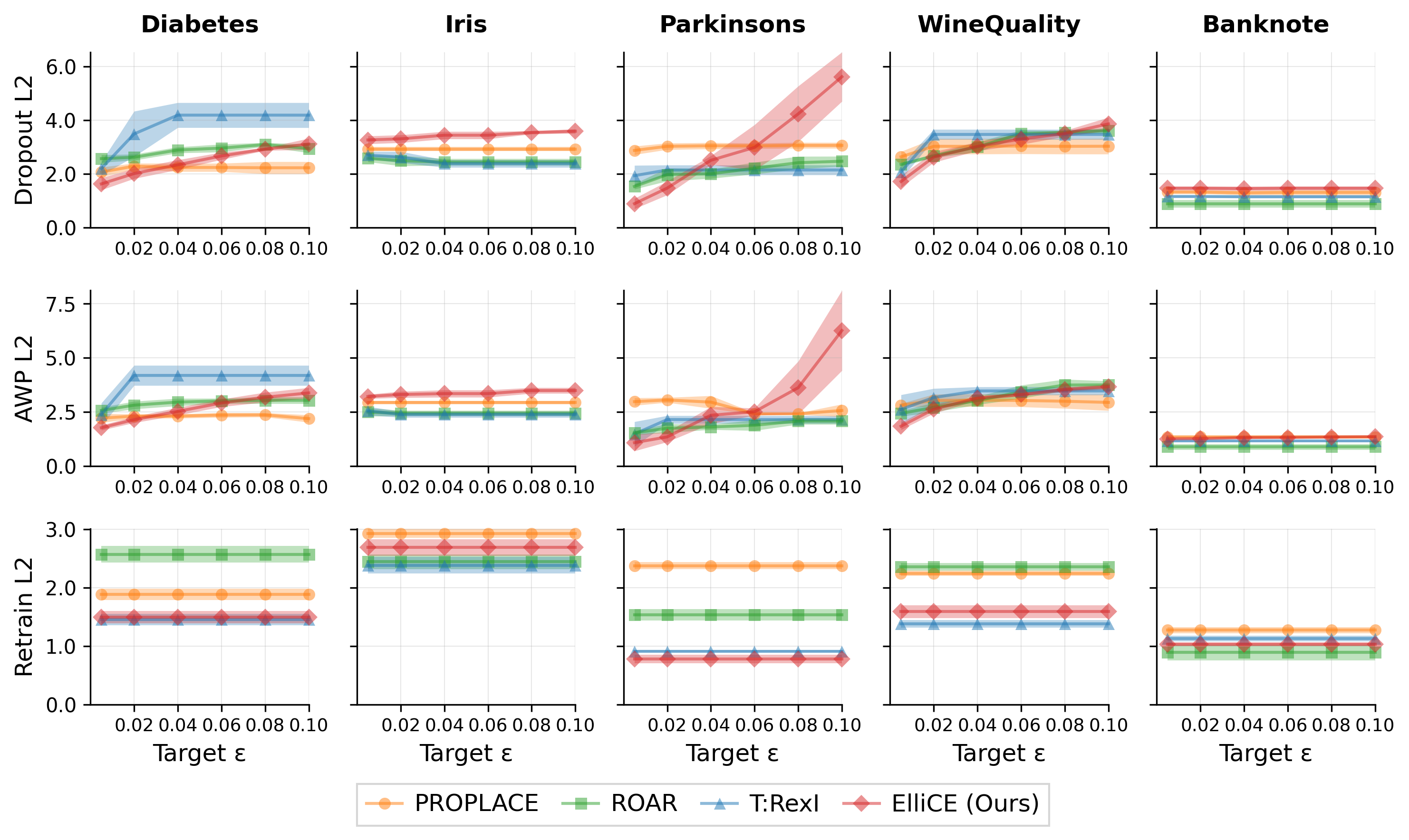}
    \caption{Length evaluation of Continuous ElliCE against baselines for linear models. 
}
    \label{fig:cont_len_linear}
\end{figure}

\begin{figure}[h]
    \centering
\includegraphics[width=1\textwidth]{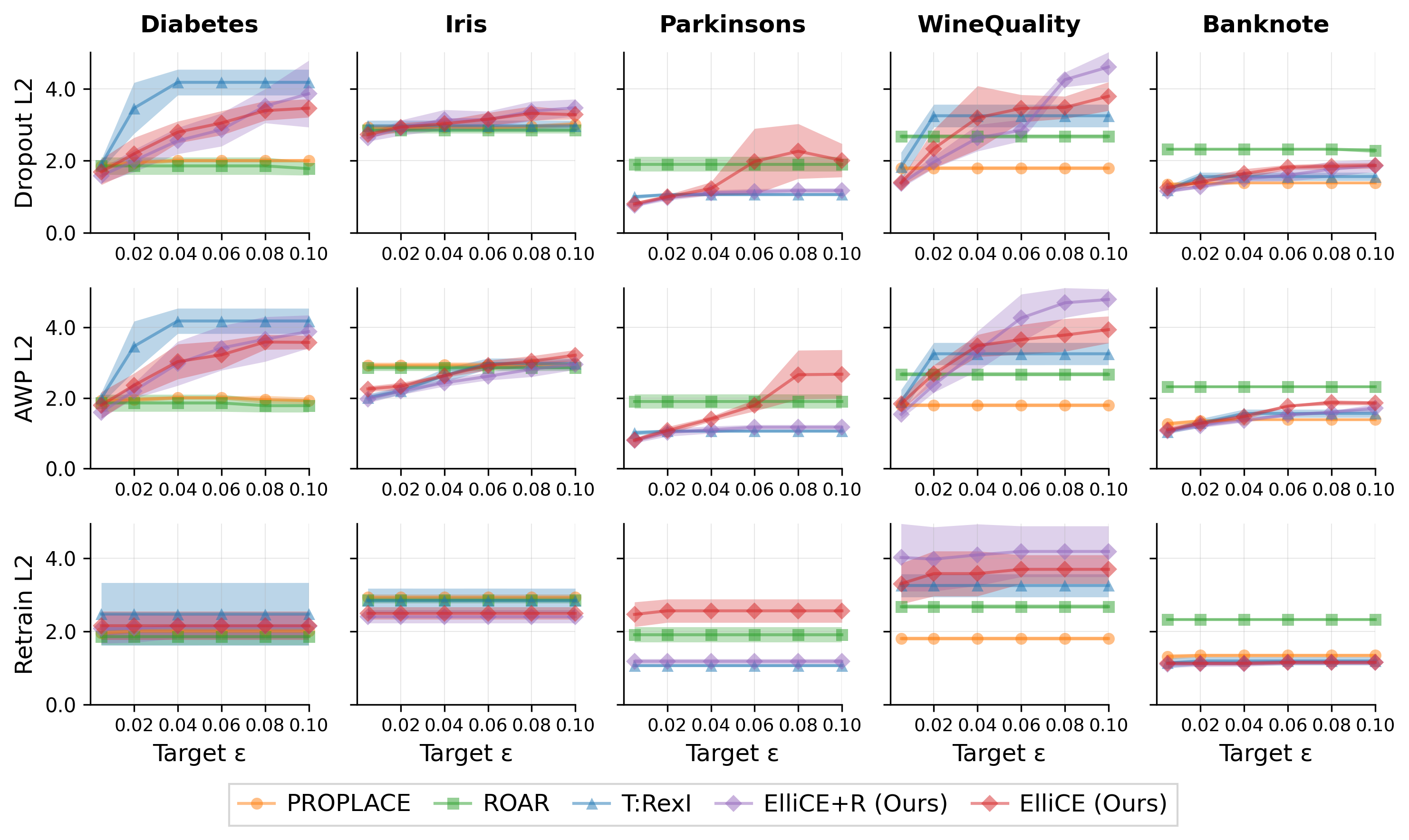}
    \caption{Length evaluation of Continuous ElliCE against baselines for NN models. On x-axis we have the target robustness level $\varepsilon_{\text{target}}$ and on y-axis the achieved $l_2$ length.
}
    \label{fig:cont_len_nn}
\end{figure}

\begin{figure}[h]
    \centering
\includegraphics[width=1\textwidth]{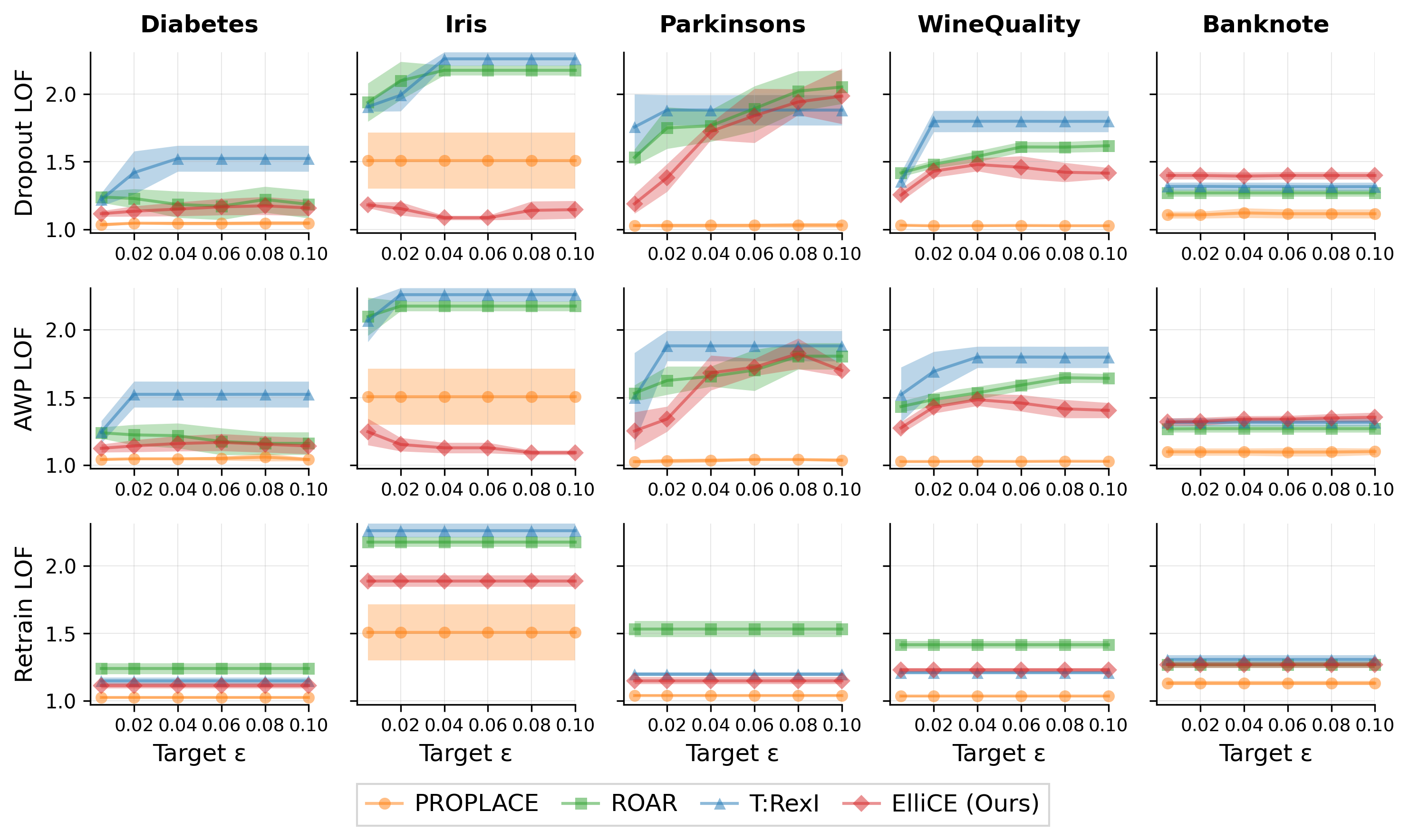}
    \caption{LOF evaluation of Continuous ElliCE against baselines for linear models. On x-axis we have the target robustness level $\varepsilon_{\text{target}}$ and on y-axis the achieved LOF score. 
}
    \label{fig:lof_cont_lin}
\end{figure}

\begin{figure}[h]
    \centering
\includegraphics[width=1\textwidth]{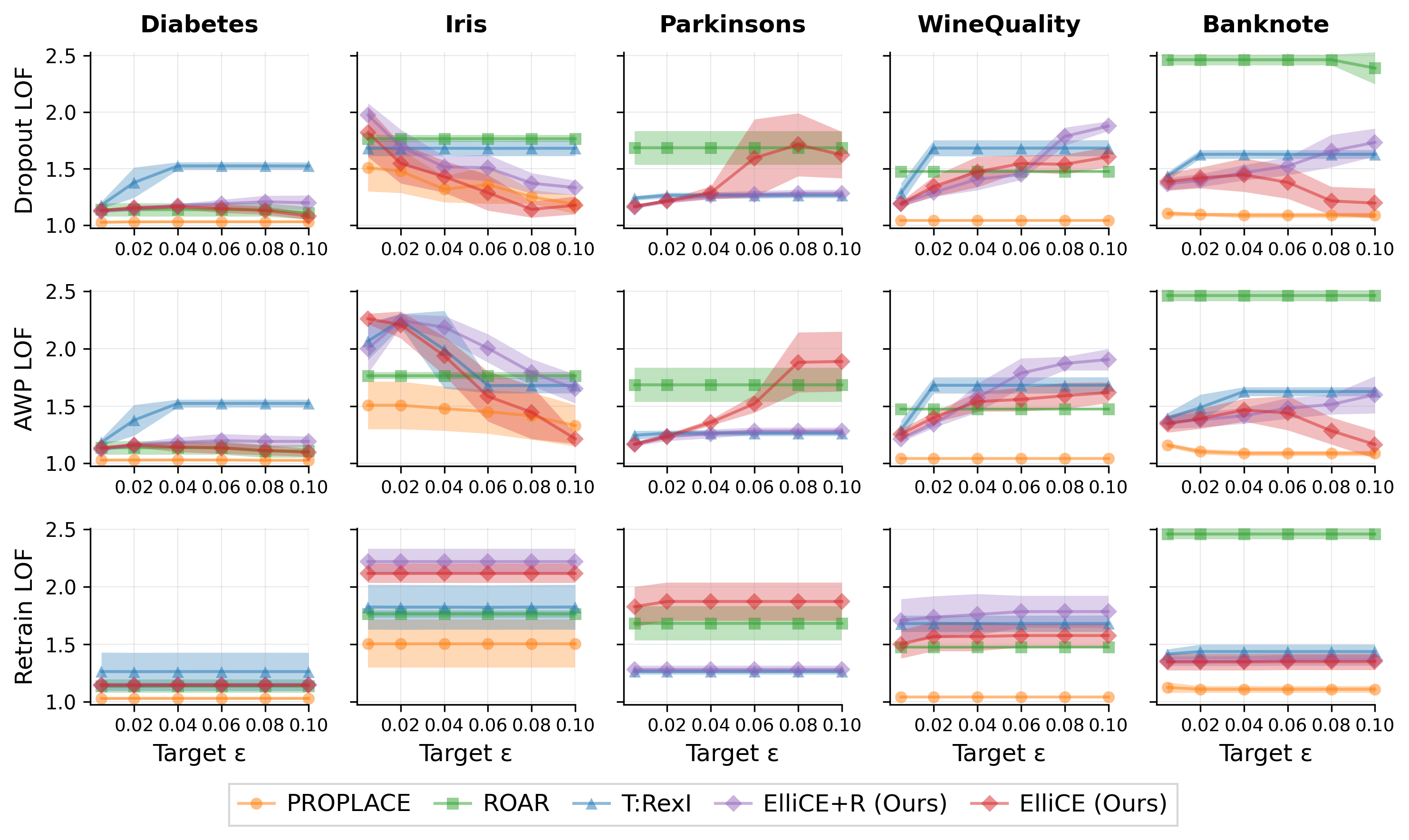}
    \caption{LOF evaluation of Continuous ElliCE against baselines for NN models. On x-axis we have the target robustness level $\varepsilon_{\text{target}}$ and on y-axis the achieved LOF score.
}
    \label{fig:lof_cont_mlp}
\end{figure}

\subsection{Experiments for Non-data Supported Counterfactual Generation}


Figures~\ref{fig:cont_rob_linear} and~\ref{fig:cont_rob_nn} illustrate robustness across varying multiplicity levels ($\epsilon$) for linear models on five benchmark tabular datasets: Diabetes, Iris, Parkinson’s, Wine Quality, and Banknotes. We compare ElliCE with PROPLACE, ROAR, and T:RexI, using Rashomon Dropout Robustness, AWP Robustness, and Retrain Robustness metrics. In Table~\ref{tab:performance_comparison_cnt_full} we present all robustness and length results for Non-data Supported Counterfactual Generation for LMs and MLPs (Retrain columns for linear models are filled with ``-'' due to the absence of randomness when training with a standard deterministic linear solver).

For Random Retrain evaluator, all methods achieve perfect scores. This is because linear models tend to not change upon retraining. Consequently, this also indicates that all methods achieve perfect validity (i.e., a score of 1) for the retrain method.

Figures~\ref{fig:cont_val_linear} and~\ref{fig:cont_val_nn} present validity results. Most methods generally find valid counterfactuals.


Figures~\ref{fig:cont_len_linear} and~\ref{fig:cont_len_nn} illustrate the relationship between counterfactual length and robustness. While all methods often start with similar counterfactual lengths for small multiplicity values, ElliCE distinctively increases the length of counterfactuals to enhance robustness, especially as $\varepsilon_\text{target}$ grows. Thus, when robust counterfactuals are  located nearby, other methods  also tend to find them. However, if achieving robustness at larger $\epsilon$ values necessitates exploring more distant points (i.e., increasing the length budget), ElliCE demonstrates a superior capability to do so effectively.

\subsection{Data shift}

To further support that the ElliCE effectively handles re-training scenarios (model shifts), we evaluate our methods and baselines under the data shift scenario.  Specifically, we split each dataset into two parts: the first part was used to train our method (ElliCE) and baseline models, while the second part was used exclusively to train the Random Retraining evaluator. 
This setup follows a setup similar to that used by T:Rex and Argumentative Ensembling and ensures that the methods under evaluation and the evaluator are trained on disjoint data, capturing a realistic data-shift scenario. 
For counterfactual evaluation, we use the same data split that was used to train evaluators model (the second data part described above). 
Under the data shift evaluation pipeline, ElliCE continues to outperform or perform on par with the baselines across both continuous and data-supported settings. Please see Figures~\ref{fig:shift_cnt_r} and~\ref{fig:shift_ds_r} for more details.

\begin{figure}[h]
    \centering
\includegraphics[width=1\textwidth]{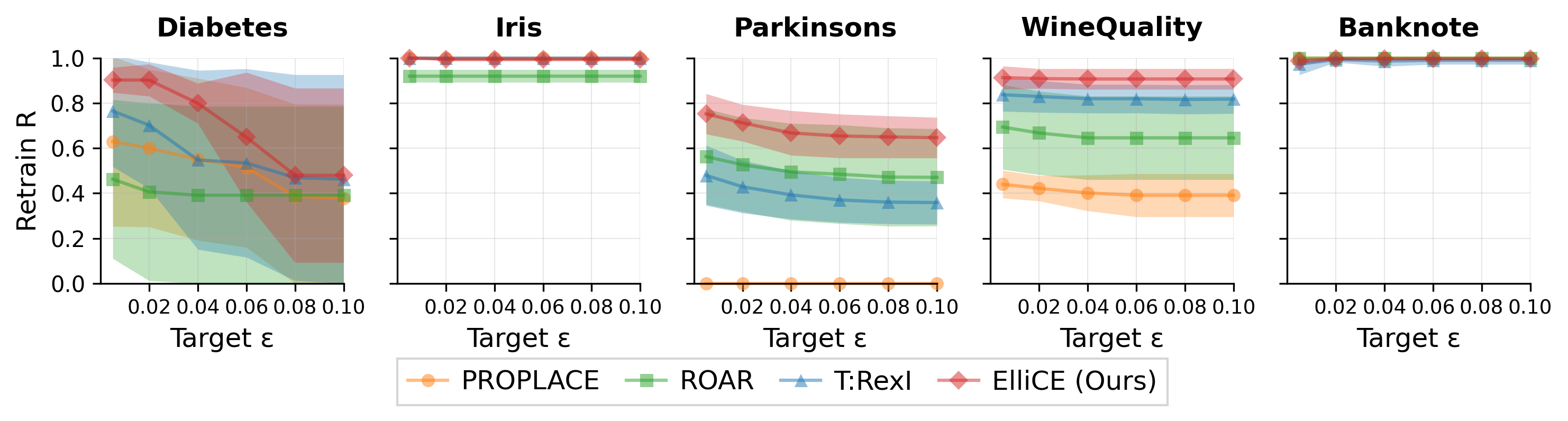}
    \caption{Robustness evaluation of Continuous ElliCE against baselines for MLP for Data Shift. On x-axis we have the target robustness level $\varepsilon_{\text{target}}$ and on y-axis the achieved Robustness score.
}
    \label{fig:shift_cnt_r}
\end{figure}

\begin{figure}[h]
    \centering
\includegraphics[width=1\textwidth]{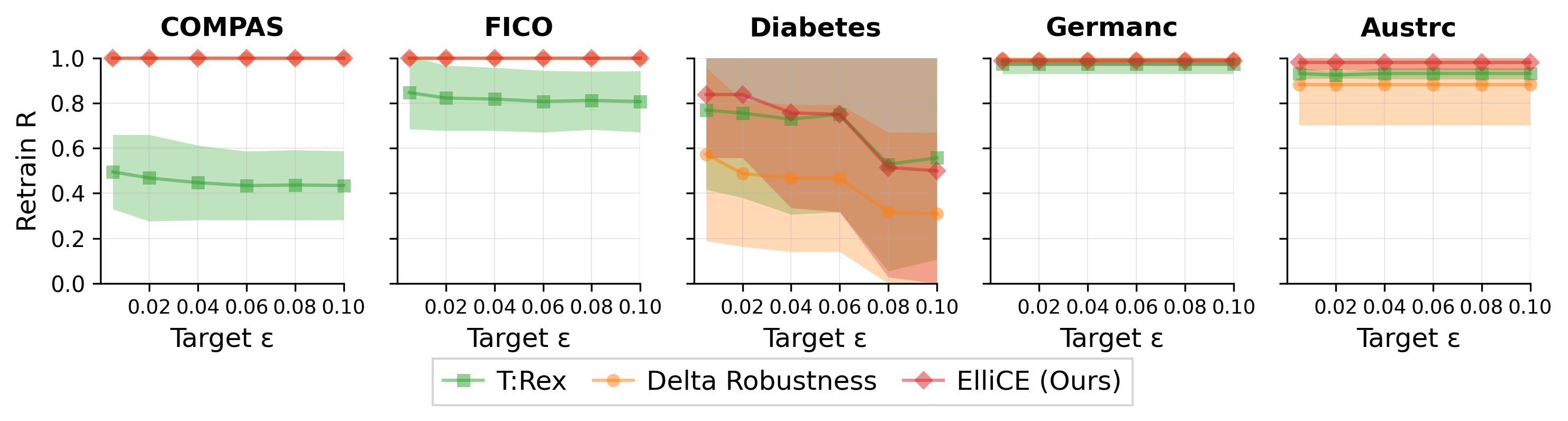}
    \caption{Robustness evaluation of Data supported ElliCE against baselines for MLP for Data Shift. On x-axis we have the target robustness level $\varepsilon_{\text{target}}$ and on y-axis the achieved Robustness score.
}
    \label{fig:shift_ds_r}
\end{figure}

\section{Delta-Robustness Works in the Last Layer under Reparameterization}

Delta-Robustness, a leading baseline for generating robust counterfactual explanations, perturbs all network parameters within an $\ell_\infty$-norm ball of a given radius $\delta$. Our analysis demonstrates a key finding: under a specific model reparameterization  Delta-Robustness works in the final layer of a MLP, an approach similar to that of ElliCE.

This reparameterization implies that Delta-Robustness then approximates the model uncertainty set in the last layer using an $\ell_\infty$-norm constraint (a hypercube), whereas ElliCE uses an $\ell_2$-norm constraint (an ellipsoid). While ellipsoidal bounds are often presumed to offer a tighter fit for convex losses, a hypercubic approximation might, in some cases, be more accurate. This geometric difference could potentially explain cases where Delta-Robustness performs better than ElliCE.

 In this section, we establish theoretical limitations for delta interval approaches to robust counterfactuals under model multiplicity. We prove that for certain model reparameterizations, establishing robust counterfactuals becomes either too restrictive or computationally inefficient.

\subsection{Problem Setting}
Consider a two-layer perceptron with input vector \(x \in \mathbb{R}^d\), weight matrices \(W_1 \in \mathbb{R}^{h \times d}\) and \(W_2 \in \mathbb{R}^{c \times h}\), and bias vectors \(b_1 \in \mathbb{R}^{h}\), \(b_2 \in \mathbb{R}^{c}\). The network output can be expressed as:
\[
y = W_2 \cdot \sigma(W_1 \mathbf{x} + b_1) + b_2,
\]
where \( \sigma \) is a 1-Lipschitz non-linear activation function (e.g., ReLU).
Throughout the following theorems we adopt a threshold of 0.

Let \(\Theta := \{(W_i, b_i)\}^{L}_{i=1}\) denote the network parameters of an L-layer network. One approach to ensure robustness under model multiplicity is to establish a \( \delta \)-interval around these parameters. For a fixed radius \(\delta > 0\) we define an \(\ell_\infty\)-ball:
\[
\mathcal M_\delta(\Theta)\;=\;
\Bigl\{\Theta' =  \{(W'_i, b'_i)\}^{L}_{i=1}\: \mid \;
\max\bigl(\|W_i-W'_i\|_\infty,\,
          \|b_i-b'_i\|_\infty\bigr)\le\delta\Bigr\}.
\]

\paragraph{Norm conventions.}
For the theoretical analysis, we measure weight perturbations using the induced matrix
\(\infty\)-norm
\[
\|A\|_{\infty}=\max_{i}\sum_{j}|A_{ij}|,
\]
which controls the worst-case row-sum change and yields dimension-stable bounds.
Our implementation uses entrywise (“box”) constraints
\[
\|A\|_{\max}=\max_{i,j}|A_{ij}|.
\]
If \(d_{A}\) is the number of columns of \(A\), these notions are related by
\[
\|A\|_{\max}\le \|A\|_{\infty}\le d_{A}\,\|A\|_{\max},
\]
hence \( \mathrm{Box}(\delta/d_{A})\subseteq \mathrm{Induced}(\delta)\subseteq \mathrm{Box}(\delta)\).
Consequently, results proven under the induced norm transfer to box perturbations up to a factor of \(d_{A}\) in the radius (i.e., with appropriately rescaled \(\delta\)).

(Above \(
\mathrm{Box}(\delta) := \{ A \mid \|A\|_{\max} \le \delta \}
\), and
\(
\mathrm{Induced}(\delta) := \{ A \mid \|A\|_{\infty} \le \delta \}.
\))

A counterfactual \(\mathbf{x}_c\) is considered \(\delta\)-robust if it maintains the desired prediction across all models in \(\mathcal{M}_\delta(\Theta)\).

\subsection{Necessary Conditions for Delta-Robustness}

We first establish necessary conditions for a meaningful delta-robustness analysis.

\begin{proposition}\label{prop:delta-upper-bound}
Consider an L-layer network whose last affine layer is \(h_L(\mathbf{x}) = W_Lh_{L-1}(\mathbf{x})+b_L\) with decision rule \(\hat{y}(\mathbf{x}) = \mathbf{1}[h_L(\mathbf{x})>0]\). Assume the final-layer bias satisfies \(b_L<0\).

Robust \(0\rightarrow 1\) counterfactuals (i.e., for a reference input \(\mathbf{x}_0\) classified as \(0\) under the reference parameters \(\Theta\), there exists a counterfactual \(\mathbf{x}_c\) that is classified as \(1\) under every \(\Theta' \in \mathcal{M}_\delta(\Theta)\)) exist within the \(\ell_\infty\)-ball \(\mathcal{M}_{\delta}(\Theta)\) only if the perturbation radius \(\delta>0\) satisfies:
\[
\delta < \|W_L\|_\infty.
\]
\end{proposition}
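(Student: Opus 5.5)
The plan is to prove the contrapositive. Assume $\delta \ge \|W_L\|_\infty$. I will exhibit a single admissible parameter vector $\Theta' \in \mathcal{M}_\delta(\Theta)$ under which \emph{every} input is classified as $0$. It then follows that no $\mathbf{x}_c$ can be classified as $1$ under all of $\mathcal{M}_\delta(\Theta)$, so robust $0\rightarrow 1$ counterfactuals cannot exist. The construction only perturbs the last affine layer and leaves all earlier layers $(W_i, b_i)$, $i<L$, unchanged.

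Concretely, set $W_L' = \mathbf{0}$ (the zero matrix of the same shape as $W_L$) and $b_L' = b_L$, with $W_i' = W_i$ and $b_i' = b_i$ for $i < L$. First I check admissibility. For the final weights, $\|W_L - W_L'\|_\infty = \|W_L\|_\infty \le \delta$ by assumption. All bias differences and all earlier-layer differences are zero. Hence the layerwise maximum in the definition of $\mathcal{M}_\delta(\Theta)$ is at most $\delta$, and $\Theta' \in \mathcal{M}_\delta(\Theta)$.

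Next I evaluate the perturbed network. For any input $\mathbf{x}$, including any candidate counterfactual $\mathbf{x}_c$, we have $h_L'(\mathbf{x}) = \mathbf{0}\cdot h_{L-1}(\mathbf{x}) + b_L = b_L < 0$. Here the penultimate representation $h_{L-1}$ is irrelevant, so no property of the activation (Lipschitzness, etc.) is needed. By the decision rule, $\hat y'(\mathbf{x}) = \mathbf{1}[b_L > 0] = 0$ for all $\mathbf{x}$. Therefore every $\mathbf{x}_c$ fails under $\Theta'$, and no $\delta$-robust $0\rightarrow 1$ counterfactual exists. This establishes that existence forces $\delta < \|W_L\|_\infty$.

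There is no real obstacle; the only points needing care are bookkeeping. First, the inequality $\|W_L\|_\infty \le \delta$ must be non-strict, which is exactly why the conclusion is a strict inequality $\delta < \|W_L\|_\infty$. Second, I will remark that the argument transfers to the entrywise box convention. Since $\|W_L\|_{\max} \le \|W_L\|_\infty \le \delta$, the same $\Theta'$ is also admissible under box constraints, so the necessary condition holds (indeed in the stronger form $\delta < \|W_L\|_{\max}$) for the implemented version as well.
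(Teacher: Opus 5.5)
Your proof is correct and essentially identical to the paper's. Both argue by contrapositive, replace $W_L$ with the zero matrix while keeping $b_L$ and all earlier layers fixed, verify membership in $\mathcal{M}_\delta(\Theta)$ via $\|W_L\|_\infty \le \delta$, and observe that the perturbed network outputs the constant $b_L<0$ for every input. Your extra remark on the box convention is also sound, though note that the sharper condition $\delta < \|W_L\|_{\max}$ comes from rerunning the same construction under the hypothesis $\delta \ge \|W_L\|_{\max}$, not from the chain $\|W_L\|_{\max}\le\|W_L\|_\infty\le\delta$ alone.
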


\begin{proof}
Assume that the perturbation radius satisfies \(\delta \geq \|W_L\|_\infty\). Consider a parameter set \(\Theta' =  \{(W'_i, b'_i)\}^{L}_{i=1}\), where
\[
W'_i=W_i,\quad b'_i=b_i,\quad \forall i \in \{1,\dots,L-1\},\quad W'_L=\mathbf{0},\quad b'_L = b_L.
\]

As all layers except the last one are unchanged, we can verify that \(\Theta' \in \mathcal{M}_\delta(\Theta)\):
\[
\|W_L - W_L'\|_\infty = \|W_L - \mathbf{0}\|_\infty = \|W_L\|_\infty \leq \delta,
\qquad
\|b_L - b_L'\|_\infty = 0.
\]

For any input \(\mathbf{x}\) in the last layer we have:
\[
h'_L(\mathbf{x}) = W'_Lh_{L-1}(\mathbf{x})+b'_L=\mathbf{0}\cdot h_{L-1}(\mathbf{x})+b_L=b_L < 0.
\]

The model \(\Theta'\) has a degenerate last layer that produces a constant output \(b_L < 0\) and predicts class \(0\) regardless of the input. Since this degenerate model belongs to \(\mathcal{M}_\delta(\Theta)\), no counterfactual \(\mathbf{x}_c\) can be robust \(0\to 1\) with respect to all models in the interval. Therefore, to avoid this case, we must have \(\delta < \|W_L\|_\infty\).
\end{proof}

This proposition establishes our first condition (A) and provides a clear upper bound on the permissible magnitude of \(\delta\).

\subsection{The Impact of Model Reparameterization}

Neural networks exhibit reparameterization invariance, where different parameter configurations can represent functionally equivalent models. We now demonstrate that this property creates fundamental challenges for the delta interval approach.

Let a two-layer perceptron be given by
\[
y = W_2 \sigma(W_1 \mathbf{x} + b_1) + b_2,
\]
where \(W_1, W_2\) are weight matrices, \(b_1, b_2\) are bias vectors and \(\sigma\) is the ReLU activation.

Reparameterization can be achieved by scaling. For any \(\alpha>0\) we define
\[
W_1^{(\alpha)} = \alpha W_1,\quad b_1^{(\alpha)} = \alpha b_1,\quad
W_2^{(\alpha)} = \frac{1}{\alpha} W_2,\quad b_2^{(\alpha)} =  b_2.
\]
Because ReLU is positively homogeneous, i.e., \(\sigma(\alpha z) = \alpha \sigma(z)\) for all \(\alpha > 0\), the output is unchanged:
\[
y' = W_2^{(\alpha)} \sigma(W_1^{(\alpha)} \mathbf{x} + b_1^{(\alpha)}) + b_2^{(\alpha)}
= \frac{W_2}{\alpha}\,\sigma(\alpha(W_1 \mathbf{x} + b_1)) + b_2
= y.
\]

\begin{theorem}
Let
\[
f_\Theta(\mathbf{x}) =  W_2 \sigma(W_1 \mathbf{x} + b_1) + b_2,\quad \Theta = (W_1, b_1, W_2, b_2)
\]
be a two-layer ReLU (1-Lipschitz) perceptron. Suppose the decision threshold is 0, input data is \(X = [-1, 1]^d\) and the negative final bias \(b_2<0\).

Fix $\delta > 0$ and \(\gamma >0\). Suppose the following two conditions for \(\Theta\) hold:

\begin{enumerate}[leftmargin=*]
    \item (A) Final-layer robustness bound:
    \[
    \delta <  \|W_2\|_\infty
    \]
    \hfill\textit{(see Proposition~\ref{prop:delta-upper-bound})}

    \item (B) Layer-wise tangibility:

    For each layer \(i \in \{1,2\}\), there exists a parameter perturbation in that layer that is no larger than \(\delta\) in \(\ell_\infty\) norm that changes the output by at least \(\gamma\) for some \(\mathbf{x} \in X\).

    Here, for a matrix \(A\), \(\|A\|_\infty\) denotes the induced matrix \(\infty\)-norm (maximum absolute row sum).

    For any pair \((W, b)\) and radius \(\delta>0\) set:
    \[
    \mathcal{B}_\infty((W,b),\delta) := \Bigl\{(W',b') \,\bigm|\, \|W'-W\|_\infty \leq \delta,\ \|b'-b\|_\infty \leq \delta\Bigr\}.
    \]

    Now layer-wise tangibility condition for a particular pair \(W, b\) can be defined as:
    \begin{align}
    &\sup_{\substack{
    \mathbf{x} \in X \\
    (W', b') \in \mathcal{B}_\infty((W,b),\delta)
    }} \bigl|f(W',b')-f(W'_{-i} \cup W_i, b'_{-i} \cup b_i)\bigr| > \gamma.
    \end{align}

    For binary classification, \(\gamma\) can be set to:
    \[
    \gamma = \min_{\mathbf{x} \in \mathcal{D}} |f(\mathbf{x})|.
    \]
\end{enumerate}

Then there exists \(\alpha>0\) such that for the functionally equivalent reparameterization
\(\Theta^{(\alpha)}=(W_1^{(\alpha)}, b_1^{(\alpha)}, W_2^{(\alpha)}, b_2^{(\alpha)})\),
no choice of radius \(\delta'>0\) can simultaneously satisfy the corresponding conditions (A) and (B) for the new model.
\end{theorem}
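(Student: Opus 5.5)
The plan is to show that a large rescaling $\alpha$ squeezes the admissible radius from above through condition (A). That same bound then makes the first-layer part of condition (B) impossible. I will use only the first-layer half of (B) and never the second-layer half. The hypotheses on the original $\Theta$ are needed only to guarantee $\|W_2\|_\infty>0$. Indeed, if $W_2=\mathbf{0}$ then (A) already fails for every $\delta'>0$, so the claim is trivial in that case.

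\textbf{Step 1 (condition (A) after reparameterization).} Since $W_2^{(\alpha)}=W_2/\alpha$, condition (A) for $\Theta^{(\alpha)}$ (the bound of Proposition~\ref{prop:delta-upper-bound}) reads $\delta'<\|W_2\|_\infty/\alpha$. Any radius that could satisfy both conditions must therefore obey this bound, so it shrinks like $1/\alpha$.

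\textbf{Step 2 (upper bound on the layer-1 tangibility quantity).} Fix $\mathbf{x}\in X=[-1,1]^d$, so $\|\mathbf{x}\|_\infty\le 1$. Take any $(W',b')$ in the $\delta'$-ball around $\Theta^{(\alpha)}$. The quantity in (B) for $i=1$ is the gap between the network with all perturbed parameters and the network with layer-1 parameters reset to $(W_1^{(\alpha)},b_1^{(\alpha)})$. This gap equals
\[
\bigl|W_2'\bigl(\sigma(W_1'\mathbf{x}+b_1')-\sigma(W_1^{(\alpha)}\mathbf{x}+b_1^{(\alpha)})\bigr)\bigr|.
\]
Because ReLU is $1$-Lipschitz coordinatewise and the induced $\infty$-norm is submultiplicative, this is at most
\[
\|W_2'\|_\infty\bigl(\|W_1'-W_1^{(\alpha)}\|_\infty\|\mathbf{x}\|_\infty+\|b_1'-b_1^{(\alpha)}\|_\infty\bigr)\le \Bigl(\frac{\|W_2\|_\infty}{\alpha}+\delta'\Bigr)\,2\delta'.
\]
The bias term is written in the vector $\infty$-norm, and $\|W_2'\|_\infty\le\|W_2^{(\alpha)}\|_\infty+\delta'$ by the triangle inequality. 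The bound is uniform in $\mathbf{x}$ and in the perturbation, so it also bounds the supremum appearing in (B).

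\textbf{Step 3 (choose $\alpha$).} Combining with Step 1, any $\delta'$ satisfying (A) makes the layer-1 supremum at most $\frac{2\|W_2\|_\infty}{\alpha}\cdot\frac{2\|W_2\|_\infty}{\alpha}=\frac{4\|W_2\|_\infty^2}{\alpha^2}$. Choose $\alpha\ge 2\|W_2\|_\infty/\sqrt{\gamma}$. Then the supremum is $\le\gamma$, so the strict inequality in (B) for $i=1$ fails. Hence no $\delta'>0$ satisfies (A) and (B) simultaneously for $\Theta^{(\alpha)}$. The function $f$ is unchanged because ReLU is positively homogeneous, so $\gamma$ defined via $\min|f|$ is also unchanged.

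\textbf{Main obstacle.} The delicate point is Step 2: the tangibility quantity perturbs layer 2 simultaneously, since it involves $W'_{-i}$. The outer weight is therefore $W_2'$ rather than $W_2^{(\alpha)}$. This is handled by absorbing the extra $\delta'$ using (A) itself, giving $\|W_2'\|_\infty<2\|W_2\|_\infty/\alpha$. A further subtlety is consistency of norms between the induced matrix norm and the box norm. I work throughout with the induced norm, as the statement specifies, so that $\|\Delta W\,\mathbf{x}\|_\infty\le\delta'$ holds without any dimension factor.
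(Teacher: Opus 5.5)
Your proof is correct and follows the paper's own argument. Condition (A) forces $\delta'<\|W_2\|_\infty/\alpha$, and the same Lipschitz bound $2\delta'(\|W_2\|_\infty/\alpha+\delta')$ on the layer-1 tangibility quantity (with the perturbed second-layer weight handled by the triangle inequality) rules out (B) for $i=1$. The only difference is that you substitute (A) directly into this bound to get the explicit threshold $\alpha\ge 2\|W_2\|_\infty/\sqrt{\gamma}$, whereas the paper argues asymptotically that (B) forces $\delta'$ above roughly $\sqrt{\gamma/2}$ as $\alpha\to\infty$; your version is the tidier of the two.
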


\begin{proof}
Fix \(\alpha>0\) and consider the reparameterized model \(\Theta^{(\alpha)}\).
Condition (A) for the new model requires
\[
\delta' < \|W_2^{(\alpha)}\|_\infty = \frac{\|W_2\|_\infty}{\alpha}.
\]

We now clarify how tangibility (B) is used in the proof. Since (B) must hold \emph{for each layer}, it is enough to show that, for sufficiently large \(\alpha\), \emph{no} perturbation of the first layer with size at most \(\delta'\) can change the output by \(\gamma\) for any input \(\mathbf{x}\in X\). This contradicts layer-wise tangibility for layer \(i=1\).

We now establish a key lemma.

\begin{lemma}
Consider a two-layer neural network with inputs \(\mathbf{x} \in [-1,1]^d\) and 1-Lipschitz activation function \(\sigma\). Fix a second-layer weight matrix \(\widetilde W_2\). Let \((\widetilde W_1,\widetilde b_1)\) be a perturbation of \((W_1,b_1)\) such that
\[
\|\widetilde W_1 - W_1\|_\infty \leq \delta
\quad\text{and}\quad
\|\widetilde b_1 - b_1\|_\infty \leq \delta.
\]
Then, for all \(\mathbf{x}\in[-1,1]^d\),
\[
\bigl\|\widetilde W_2 \sigma(W_1 \mathbf{x} + b_1) - \widetilde W_2 \sigma(\widetilde W_1 \mathbf{x} + \widetilde b_1)\bigr\|_\infty
\le 2\delta \cdot \|\widetilde W_2\|_\infty.
\]
\end{lemma}

\begin{proof}[Proof of Lemma]
Let \(\mathbf{x} \in [-1,1]^d\). The change in the hidden representation satisfies
\begin{align*}
\bigl\|\sigma(W_1\mathbf{x} + b_1) - \sigma(\widetilde W_1\mathbf{x} + \widetilde b_1)\bigr\|_\infty
&\le \bigl\|(W_1-\widetilde W_1)\mathbf{x} + (b_1-\widetilde b_1)\bigr\|_\infty \\
&\le \|W_1-\widetilde W_1\|_\infty\,\|\mathbf{x}\|_\infty + \|b_1-\widetilde b_1\|_\infty \\
&\le \delta\cdot 1 + \delta
= 2\delta,
\end{align*}
where we used the 1-Lipschitz property of \(\sigma\) and \(\|\mathbf{x}\|_\infty\le 1\). Therefore,
\[
\bigl\|\widetilde W_2 \sigma(W_1 \mathbf{x} + b_1) - \widetilde W_2 \sigma(\widetilde W_1 \mathbf{x} + \widetilde b_1)\bigr\|_\infty
\le \|\widetilde W_2\|_\infty \cdot \bigl\|\sigma(W_1\mathbf{x} + b_1) - \sigma(\widetilde W_1\mathbf{x} + \widetilde b_1)\bigr\|_\infty
\le 2\delta\,\|\widetilde W_2\|_\infty.
\]
\end{proof}

Returning to the main proof, consider any second-layer weight matrix \(\widetilde W_2\) satisfying
\(\|\widetilde W_2 - W_2^{(\alpha)}\|_\infty \le \delta'\). Then
\[
\|\widetilde W_2\|_\infty \le \|W_2^{(\alpha)}\|_\infty + \delta' = \frac{\|W_2\|_\infty}{\alpha} + \delta'.
\]
By the lemma, for any perturbation \((\widetilde W_1,\widetilde b_1)\) of the first layer with
\(\|W_1^{(\alpha)}-\widetilde W_1\|_\infty\le \delta'\) and \(\|b_1^{(\alpha)}-\widetilde b_1\|_\infty\le \delta'\), we have, for all \(\mathbf{x}\in X\),
\[
\bigl\|\widetilde W_2\sigma(W_1^{(\alpha)}\mathbf{x}+b_1^{(\alpha)})-\widetilde W_2\,\sigma(\widetilde W_1\mathbf{x}+\widetilde b_1)\bigr\|_\infty
\le 2\delta'\,\|\widetilde W_2\|_\infty
\le 2\delta'\Bigl(\frac{\|W_2\|_\infty}{\alpha}+\delta'\Bigr).
\]
Thus, in order for layer-wise tangibility (B) to hold for layer \(i=1\), it is necessary that the maximum possible change exceeds \(\gamma\), i.e.,
\[
2\delta'\Bigl(\frac{\|W_2\|_\infty}{\alpha}+\delta'\Bigr) > \gamma.
\]
Equivalently,
\begin{equation}\label{eq:necp1}
2(\delta')^2 + 2\delta'\,\frac{\|W_2\|_\infty}{\alpha} - \gamma > 0.
\end{equation}
Meanwhile (A) requires
\begin{equation}\label{eq:necp2}
\delta' < \frac{\|W_2\|_\infty}{\alpha}.
\end{equation}

For any fixed \(\gamma>0\), as \(\alpha\to\infty\) the linear term \(\tfrac{2\|W_2\|_\infty}{\alpha}\delta'\) vanishes, so~\eqref{eq:necp1} forces
\[
\delta' \;>\;\sqrt{\frac{\gamma}{2}} - \epsilon > 0
\]
for some small constant \(\epsilon>0\). But for sufficiently large \(\alpha\) we have
\[
\sqrt{\tfrac{\gamma}{2}} - \epsilon\;>\;\frac{\|W_2\|_\infty}{\alpha},
\]
so no \(\delta'>0\) can satisfy both \eqref{eq:necp1} and \eqref{eq:necp2}. This completes the proof.
\end{proof}

\subsection*{Implications}
Under the hypotheses of the theorem, and more broadly whenever a
\emph{single, layer-agnostic} radius~\(\delta\) is used, two tendencies emerge:
\begin{enumerate}
    \item If \(\delta\) is chosen comparatively large, the resulting
          \(\delta\)-interval may admit models in which some layers could shrink
          to (near-)zero. Such potentially degenerate members of the interval might
          predict the same undesired class for every input, so otherwise plausible
          counterfactuals could cease to be robust.
    \item Because neural networks admit
          reparameterizations that preserve their input--output
          behaviour, one may be able to construct an equivalent parameter
          set for which \emph{no} positive radius satisfies both the
          ``non-degeneracy'' and ``tangibility'' requirements.
          In other words, a single global~\(\delta\) may fail to
          accommodate all members of a functionally equivalent family.
\end{enumerate}
These observations suggest that---without additional machinery such as
layer-specific or parameter-specific radii---its ability to capture
meaningful model multiplicity may be limited.

\section{Enhanced Algorithmic Recourse: Actionability, Sparsity, and Extended Applications}\label{appendix:actionability}

This section presents additional contributions beyond the core ElliCE framework, including enhancements for actionability, sparsity control, plausibility evaluation, and extensions to broader model classes and feature types. These improvements address practical deployment challenges and extend the applicability of robust counterfactual generation to more diverse real-world scenarios.

\subsection{Plausibility Evaluation}

Plausibility is an essential part of counterfactual explanations, ensuring that generated recourse lies in realistic regions of the feature space.


To quantify plausibility, we adopt the \emph{Local Outlier Factor} (LOF) score, a widely used measure of local density deviation in outlier detection \cite{pmlr-v222-jiang24a}. Let \(S\) denote the training dataset, $\mathbf{x} \in \mathbb{R}^d$ a query point, and $L_k(\mathbf{x})$ the set of its $k$ nearest neighbors in $S$ under an $\ell_p$ distance $\delta(\cdot,\cdot)$. For any neighbor $\mathbf{x}_c \in L_k(\mathbf{x})$, we define the $k$-distance of $\mathbf{x}_c$ as $d_k(\mathbf{x}_c)$, and the \emph{reachability distance} of $\mathbf{x}$ with respect to $\mathbf{x}_c$ as
\[
r_{d_k}(\mathbf{x}, \mathbf{x}_c) := \max \{ \delta(\mathbf{x}, \mathbf{x}_c), \, d_k(\mathbf{x}_c) \}.
\]
The \emph{local reachability density} of $\mathbf{x}$ is then
\[
\mathrm{lrd}_k(\mathbf{x}) := \frac{|L_k(\mathbf{x})|}{\sum_{\mathbf{x}_c \in L_k(\mathbf{x})} r_{d_k}(\mathbf{x}, \mathbf{x}_c)}.
\]
Finally, the LOF score of $\mathbf{x}$ with respect to $S$ is defined as
\[
\mathrm{LOF}_{k,S}(\mathbf{x}) := \frac{1}{|L_k(\mathbf{x})|} \sum_{\mathbf{x}_c \in L_k(\mathbf{x})} \frac{\mathrm{lrd}_k(\mathbf{x}_c)}{\mathrm{lrd}_k(\mathbf{x})}.
\]

By construction, $\mathrm{LOF}_{k,S}(\mathbf{x}) \approx 1$ indicates that $\mathbf{x}$ lies in a region of comparable density to its neighbors, and is therefore considered plausible. Values $\mathrm{LOF}_{k,S}(\mathbf{x}) > 1$ suggest that $x$ lies in a relatively sparse region (i.e., more outlier-like), while values $<1$ indicate higher-than-average local density. For a set of counterfactuals, we report the mean LOF score across all points. This measure provides a principled way to compare the plausibility of counterfactuals generated by ElliCE and competing baselines.


For data-supported counterfactuals, ElliCE achieves plausibility comparable to baseline methods since explanations lie on the data manifold by construction. For non-data-supported counterfactuals, ElliCE performs competitively with methods not specifically designed for plausibility, outperforming TRexI and ROAR on the diabetes dataset while maintaining superior robustness guarantees. PROPLACE, which is explicitly designed for plausibility optimization, achieves better LOF scores but at the cost of reduced robustness as demonstrated in our main experimental results.

LOF metrics results are presented in Figures ~\ref{fig:lof_ds_lin}, ~\ref{fig:lof_ds_mlp}, ~\ref{fig:lof_cont_lin} and ~\ref{fig:lof_cont_mlp}.

The inherent robustness-proximity trade-off in ElliCE, explored in section~\ref{appendix:experiments}, naturally contributes to improved plausibility. By requiring counterfactuals to remain valid across multiple models in the Rashomon set, our method pushes explanations away from unstable regions near decision boundaries, often resulting in more realistic feature combinations that better align with the underlying data distribution.

\subsection{Actionability Enhancements}

In practical deployment, counterfactual explanations must respect application-specific constraints that restrict which features can be modified and how these modifications may occur. To incorporate such requirements, we extend ElliCE with actionability-aware mechanisms that ensure generated recourse recommendations remain feasible.

\subsubsection{Immutable Features and Directional Constraints}

We distinguish two primary classes of actionability constraints.  
(1) \emph{Immutable features}: attributes such as demographic characteristics or historical records that cannot be modified. These are enforced by gradient masking in continuous optimization and by pre-filtering in data-supported search.  
(2) \emph{Range-constrained features}: attributes restricted to remain within predefined feasible intervals. This general class includes monotonic one-directional changes as a special case, e.g., non-decreasing education level or non-increasing debt. Range constraints are enforced by projection operators during optimization and by pre-filtering infeasible candidates in search-based methods.

Table~\ref{tab:actionability_example} reports an illustration using the German Credit dataset. The example contrasts the closest counterfactual with an actionable alternative that respects immutability (e.g., age, foreign worker, gender). The results demonstrate the trade-off between minimal proximity and constraint satisfaction: actionable counterfactuals require larger modifications but remain valid and feasible. The example in the Table~\ref{tab:actionability_example} was generated using Continuous ElliCE method.


\subsection{Sparsity Control}\label{appendix:sparsity_control}

Sparse counterfactuals, which alter fewer features, are generally more interpretable and actionable. ElliCE incorporates possibility of sparsity control for both data-supported and continuous optimization settings.

\subsubsection{Data-Supported Sparsity Optimization}

For data-supported methods, sparsity is promoted either by criteria-based filtering or by a modified BallTree distance:
\[\nu(\mathbf{x}_1,\mathbf{x}_2) = C \cdot \|\mathbf{x}_1 - \mathbf{x}_2\|_0 + \|\mathbf{x}_1 - \mathbf{x}_2\|_1
\]
where $C > 0$ controls the relative importance of sparsity. Larger values of $C$ emphasize reducing the number of modified features, while smaller values prioritize proximity. This formulation enables efficient sublinear search while balancing sparsity and distance objectives. 

Moreover, ElliCE is method-agnostic with respect to nearest neighbor search and supports any available algorithm, including standard KDTree search with $\ell_1$ or $\ell_2$ distances.

\subsubsection{Continuous Sparsity Optimization}

For gradient-based methods, sparsity is enforced through an iterative procedure that integrates feature importance ranking, greedy selection, and coordinate masking. The procedure is summarized in Algorithm~\ref{alg:sparsity}. This procedure produces counterfactuals that are both robust and sparse by modifying only the most influential features while preventing unnecessary changes.

\begin{algorithm}[t]
\caption{Continuous Sparsity Optimization}
\label{alg:sparsity}
\begin{algorithmic}[1]
\Require original instance $\mathbf{x}$, model parameters $\bm{\theta}$, constraint set $\mathcal{C}$
\Ensure sparse robust counterfactual $\mathbf{x}_c$
\State $\mathbf{x}_c \gets \mathbf{x}$
\State Run full optimization to accumulate gradient magnitudes $g_i$ for each feature $i$
\State Rank features by decreasing $|g_i|$ to obtain list $\mathcal{L}$
\State $\mathcal{A} \gets \emptyset$
\For{feature $i \in \mathcal{L}$}
  \State $\mathcal{A} \gets \mathcal{A} \cup \{i\}$
  \State Optimize $\mathbf{x}_c$ with gradients masked outside $\mathcal{A}$ and $\mathbf{x}_c$ projected onto $\mathcal{C}$ (range and immutability constraints) during each step
  \If{robustness criterion is satisfied}
     \State \textbf{break}
  \EndIf
\EndFor
\State \Return $\mathbf{x}_c$
\end{algorithmic}
\end{algorithm}

\subsection{Multi-Class Extension}

Although our primary analysis considers binary classification, ElliCE naturally extends to multi-class settings while preserving convexity. For $K$ classes and target $y^*$, we impose robustness constraints against all competitors:
\[
\min_{\bm{\theta} \in \hat{\mathcal{R}}(\epsilon)} \; [\bm{\theta}_{y^*}^\top \mathbf{x}_c - \bm{\theta}_c^\top \mathbf{x}_c] \geq \tau, 
\quad \forall c \neq y^*,
\]
where $\tau \geq 0$ is a robustness margin. This produces $K-1$ second-order cone constraints, maintaining computational tractability. For neural architectures, we apply the ellipsoidal Rashomon approximation in embedding space, with optimization via gradient descent. All theoretical guarantees (uniqueness, stability, alignment) remain intact under this extension.


\subsection{Mixed Feature Types with Gumbel-Softmax}

Many real-world datasets contain a mixture of continuous and categorical variables, which complicates joint optimization. We adopt a unified relaxation strategy that treats the two types differently but within a single optimization framework. 

\paragraph{Continuous features.}  
Continuous variables are optimized directly in $\mathbb{R}^d$, with projection operators enforcing domain-specific constraints such as feature bounds and immutability.

\paragraph{Categorical features.}  
Each categorical variable represented by a one-hot group is relaxed into a probability vector using the Gumbel-Softmax distribution. Given a logit vector $z \in \mathbb{R}^K$ for a categorical group with $K$ possible categories, we draw Gumbel noise $g_i \sim \mathrm{Gumbel}(0,1)$ and compute
\[
y_i = \frac{\exp\!\bigl((z_i + g_i)/\tau\bigr)}{\sum_{j=1}^K \exp\!\bigl((z_j + g_j)/\tau\bigr)}, 
\quad i = 1, \dots, K,
\]
where $\tau > 0$ is a fixed temperature parameter. As $\tau \to 0$, $y$ approaches a one-hot vector, while for larger $\tau$ the distribution is smoother. In practice, we keep $\tau$ fixed during optimization, yielding a differentiable approximation that allows backpropagation through categorical choices while preserving the simplex constraint $\sum_i y_i = 1$.

\paragraph{Unified optimization.}  
The counterfactual candidate $\mathbf{x}_c$ is obtained by combining continuous variables with the relaxed categorical vectors $y$. Gradient-based updates are applied jointly to both parts, with projection steps ensuring immutability and range constraints. During optimization, we sample discrete counterfactuals by fixing continuous values and drawing one-hot categorical assignments. If at least one sampled counterfactual satisfies the robustness condition, gradient optimization is terminated. A fixed number of additional samples are then evaluated to explore whether a better robust counterfactual can be identified.

\begin{table}[t]
\centering
\footnotesize
\caption{Runtime performance. Data supported CE for MLP. Time required for method-dependent preprocessing and for computing counterfactuals on the validation set for each subfold.}
\setlength{\tabcolsep}{4pt}
\begin{tabular}{l|cccc}
\hline
\textbf{Dataset} & \textbf{ElliCE} & \textbf{ElliCE+R} & \textbf{T:Rex} & \textbf{Delta Rob} \\
\hline
\multicolumn{5}{l}{\emph{Linear}} \\
\hline
FICO & $1.521 \pm 0.019$ & --- & $5.447 \pm 0.062$ & $10.734 \pm 0.070$ \\
COMPAS & $0.816 \pm 0.016$ & --- & $3.037 \pm 0.056$ & $4.640 \pm 0.053$ \\
Australian & $0.055 \pm 0.007$ & --- & $0.288 \pm 0.010$ & $0.544 \pm 0.008$ \\
Diabetes & $0.051 \pm 0.000$ & --- & $0.266 \pm 0.004$ & $0.413 \pm 0.005$ \\
German & $0.089 \pm 0.008$ & --- & $0.433 \pm 0.008$ & $1.214 \pm 0.023$ \\
\hline
\multicolumn{5}{l}{\emph{MLP}} \\
\hline
FICO & $1.792 \pm 0.123$ & $1.283 \pm 0.076$ & $7.006 \pm 0.058$ & $242.035 \pm 1.161$ \\
COMPAS & $0.526 \pm 0.011$ & $0.443 \pm 0.014$ & $3.534 \pm 0.128$ & $360.480 \pm 6.701$ \\
Australian & $0.057 \pm 0.011$ & $0.036 \pm 0.003$ & $0.281 \pm 0.006$ & $2.783 \pm 0.032$ \\
Diabetes & $0.053 \pm 0.001$ & $0.033 \pm 0.001$ & $0.296 \pm 0.006$ & $1.922 \pm 0.032$ \\
German & $0.101 \pm 0.001$ & $0.037 \pm 0.001$ & $0.432 \pm 0.013$ & $9.905 \pm 0.068$ \\
\hline
\end{tabular}
\label{tab:runtime_dats_cm}
\end{table}

\begin{table}[t]
\centering
\footnotesize
\caption{Runtime performance. Continuous CE for MLP. Time required for method-dependent preprocessing and for computing counterfactuals on the validation set for each subfold.}
\setlength{\tabcolsep}{4pt}
\begin{tabular}{l|ccccc}
\hline
\textbf{Dataset} & \textbf{ElliCE} & \textbf{ElliCE+R} & \textbf{T:Rex} & \textbf{ROAR} & \textbf{PROPLACE} \\
\hline
\multicolumn{6}{l}{\emph{Linear}} \\
\hline
Diabetes & $0.250 \pm 0.031$ & --- & $19.476 \pm 1.116$ & $15.282 \pm 1.533$ & $0.836 \pm 0.038$ \\
Iris & $0.563 \pm 0.023$ & --- & $3.219 \pm 0.354$ & $15.603 \pm 0.242$ & $0.639 \pm 0.008$ \\
Parkinsons & $12.339 \pm 4.898$ & --- & $81.553 \pm 13.906$ & $73.409 \pm 11.057$ & $11.157 \pm 0.256$ \\
Wine Quality & $1.407 \pm 0.061$ & --- & $54.151 \pm 2.269$ & $58.821 \pm 1.432$ & $4.876 \pm 0.158$ \\
Banknote & $0.493 \pm 0.010$ & --- & $6.579 \pm 9.541$ & $19.914 \pm 11.567$ & $1.088 \pm 0.017$ \\
\hline
\multicolumn{6}{l}{\emph{MLP}} \\
\hline
Diabetes & $4.861 \pm 1.539$ & $1.122 \pm 0.578$ & $56.104 \pm 12.937$ & $9.448 \pm 0.579$ & $78.323 \pm 85.695$ \\
Iris & $0.562 \pm 0.025$ & $0.476 \pm 0.008$ & $20.984 \pm 3.801$ & $17.134 \pm 1.816$ & $3.703 \pm 0.383$ \\
Parkinsons & $112.323 \pm 6.253$ & $1.409 \pm 0.037$ & $25.771 \pm 10.461$ & $104.660 \pm 14.314$ & --- \\
Wine Quality & $26.961 \pm 4.745$ & $17.199 \pm 4.981$ & $160.531 \pm 37.765$ & $39.688 \pm 5.101$ & $869.119 \pm 794.885$ \\
Banknote & $0.631 \pm 0.030$ & $0.526 \pm 0.062$ & $14.770 \pm 3.422$ & $22.985 \pm 3.777$ & $10.622 \pm 2.580$ \\
\hline
\end{tabular}
\label{tab:runtime_continuous_cm}
\end{table}

\begin{table}[h]
\small
\centering
\setlength{\tabcolsep}{3pt}
\caption{Performance comparison of Data-Supported counterfactual methods at $\varepsilon_\text{target}=0.1 \hat{L}_{train}(f_{\text{baseline}})$.}
\label{tab:performance_comparison}
\footnotesize
\begin{tabular}{l|l|cc|cc|cc}
\hline
\multirow{3}{*}{\textbf{Dataset}} & \multirow{3}{*}{\textbf{Method}} & \multicolumn{6}{c}{\textbf{Evaluation Metrics}} \\[2pt]
\cline{3-8}
& & \multicolumn{2}{c|}{\textbf{Retrain}} & \multicolumn{2}{c|}{\textbf{Dropout Rashomon}} & \multicolumn{2}{c}{\textbf{AWP}} \\[2pt]
& & \textbf{R$\uparrow$} & \textbf{L2$\downarrow$} & \textbf{R$\uparrow$} & \textbf{L2$\downarrow$} & \textbf{R$\uparrow$} & \textbf{L2$\downarrow$} \\[2pt]
\hline
\multicolumn{8}{c}{\textbf{Linear models}} \\[2pt]
\hline
\multirow{3}{*}{Australian} 
& ElliCE   & --- & --- & 0.981 (0.03) & 2.46 (0.12) & \textbf{0.991 (0.01)} & 2.57 (0.15) \\[2pt]
& DeltaRob & --- & --- & 0.722 (0.28) & 2.64 (0.32) & 0.531 (0.30) & 2.51 (0.25) \\[2pt]
& T:Rex    & --- & --- & \textbf{0.988 (0.01)} & 2.62 (0.16) & 0.983 (0.01) & 2.73 (0.18) \\[2pt]
\hline
\multirow{3}{*}{COMPAS} 
& ElliCE   & --- & --- & \textbf{0.501 (0.50)} & 2.35 (0.91) & \textbf{0.500 (0.50)} & 2.13 (1.12) \\[2pt]
& DeltaRob & --- & --- & 0.001 (0.00) & 1.24 (0.11) & 0.000 (0.00) & 1.09 (0.04) \\[2pt]
& T:Rex    & --- & --- & 0.000 (0.00) & 0.77 (0.02) & 0.000 (0.00) & 0.77 (0.02) \\[2pt]
\hline
\multirow{3}{*}{Diabetes} 
& ElliCE   & --- & --- & \textbf{0.964 (0.06)} & 2.65 (0.06) & \textbf{0.989 (0.01)} & 3.10 (0.22) \\[2pt]
& DeltaRob & --- & --- & 0.670 (0.33) & 2.97 (0.44) & 0.033 (0.02) & 2.39 (0.20) \\[2pt]
& T:Rex    & --- & --- & 0.946 (0.04) & 3.04 (0.18) & 0.685 (0.27) & 3.43 (0.43) \\[2pt]
\hline
\multirow{3}{*}{FICO} 
& ElliCE   & --- & --- & \textbf{1.000 (0.00)} & 4.93 (0.17) & \textbf{1.000 (0.00)} & 5.46 (0.14) \\[2pt]
& DeltaRob & --- & --- & 0.068 (0.03) & 3.61 (0.19) & 0.008 (0.00) & 3.74 (0.06) \\[2pt]
& T:Rex    & --- & --- & 0.035 (0.02) & 3.02 (0.15) & 0.003 (0.00) & 2.89 (0.25) \\[2pt]
\hline
\multirow{3}{*}{German} 
& ElliCE   & --- & --- & \textbf{1.000 (0.00)} & 3.99 (0.20) & 0.997 (0.01) & 3.85 (0.11) \\[2pt]
& DeltaRob & --- & --- & 0.983 (0.02) & 4.34 (0.57) & \textbf{1.000 (0.00)} & 4.01 (0.14) \\[2pt]
& T:Rex    & --- & --- & 0.969 (0.04) & 3.91 (0.16) & \textbf{1.000 (0.00)} & 3.92 (0.16) \\[2pt]
\hline
\multicolumn{8}{c}{\textbf{Multi-layer perceptron}} \\[2pt]
\hline
\multirow{4}{*}{Australian} 
& ElliCE   & \textbf{1.000 (0.00)} & 2.30 (0.14) & \textbf{1.000 (0.00)} & 2.61 (0.14) & 0.954 (0.04) & 2.51 (0.15) \\[2pt]
& ElliCE+R & 0.996 (0.01) & 2.24 (0.13) & 0.993 (0.01) & 2.70 (0.23) & 0.954 (0.05) & 2.53 (0.13) \\[2pt]
& DeltaRob & \textbf{1.000 (0.00)} & 2.28 (0.15) & \textbf{1.000 (0.00)} & 3.16 (0.45) & \textbf{1.000 (0.00)} & 3.29 (0.45) \\[2pt]
& T:Rex    & 0.996 (0.01) & 2.25 (0.09) & 0.992 (0.01) & 2.69 (0.13) & 0.942 (0.08) & 2.82 (0.40) \\[2pt]
\hline
\multirow{4}{*}{COMPAS} 
& ElliCE   & \textbf{1.000 (0.00)} & 1.12 (0.12) & 0.814 (0.32) & 1.65 (0.17) & 0.430 (0.35) & 1.79 (0.15) \\[2pt]
& ElliCE+R & 0.999 (0.00) & 1.08 (0.19) & \textbf{1.000 (0.00)} & 1.86 (0.04) & \textbf{0.998 (0.00)} & 1.99 (0.05) \\[2pt]
& DeltaRob & \textbf{1.000 (0.00)} & 1.20 (0.10) & 0.999 (0.00) & 1.92 (0.16) & 0.775 (0.39) & 2.31 (0.42) \\[2pt]
& T:Rex    & 0.422 (0.34) & 0.72 (0.06) & 0.000 (0.00) & 0.63 (0.02) & 0.000 (0.00) & 0.60 (0.02) \\[2pt]
\hline
\multirow{4}{*}{Diabetes} 
& ElliCE   & \textbf{1.000 (0.00)} & 2.27 (0.13) & 0.993 (0.01) & 2.78 (0.26) & \textbf{1.000 (0.00)} & 3.11 (0.18) \\[2pt]
& ElliCE+R & 0.992 (0.01) & 2.24 (0.17) & \textbf{0.996 (0.01)} & 2.81 (0.21) & 0.980 (0.03) & 3.01 (0.34) \\[2pt]
& DeltaRob & 0.947 (0.06) & 2.39 (0.13) & 0.549 (0.23) & 2.50 (0.15) & 0.302 (0.17) & 2.50 (0.03) \\[2pt]
& T:Rex    & 0.988 (0.02) & 2.25 (0.16) & 0.962 (0.06) & 3.06 (0.34) & 0.869 (0.08) & 3.09 (0.26) \\[2pt]
\hline
\multirow{4}{*}{FICO} 
& ElliCE   & \textbf{1.000 (0.00)} & 3.53 (0.17) & \textbf{1.000 (0.00)} & 4.91 (0.22) & \textbf{1.000 (0.00)} & 5.06 (0.29) \\[2pt]
& ElliCE+R & \textbf{1.000 (0.00)} & 3.72 (0.23) & \textbf{1.000 (0.00)} & 4.75 (0.17) & 0.993 (0.01) & 5.58 (0.62) \\[2pt]
& DeltaRob & \textbf{1.000 (0.00)} & 4.00 (0.10) & \textbf{1.000 (0.00)} & 5.67 (0.58) & 0.957 (0.07) & 5.70 (0.72) \\[2pt]
& T:Rex    & 0.830 (0.08) & 3.12 (0.07) & 0.006 (0.00) & 3.07 (0.11) & 0.001 (0.00) & 2.77 (0.19) \\[2pt]
\hline
\multirow{4}{*}{German} 
& ElliCE   & \textbf{1.000 (0.00)} & 3.48 (0.10) & \textbf{0.996 (0.01)} & 4.32 (0.31) & \textbf{1.000 (0.00)} & 4.00 (0.24) \\[2pt]
& ElliCE+R & 0.990 (0.02) & 3.44 (0.08) & \textbf{0.996 (0.01)} & 4.00 (0.06) & 0.947 (0.06) & 3.94 (0.17) \\[2pt]
& DeltaRob & 0.982 (0.01) & 3.45 (0.06) & 0.988 (0.02) & 4.00 (0.15) & \textbf{1.000 (0.00)} & 3.99 (0.22) \\[2pt]
& T:Rex    & 0.989 (0.01) & 3.47 (0.04) & 0.967 (0.02) & 4.03 (0.20) & 0.989 (0.01) & 4.23 (0.24) \\[2pt]
\hline
\end{tabular}
\end{table}

\begin{table}[h]
\small
\centering
\setlength{\tabcolsep}{3pt}
\caption{Performance comparison of Continuous counterfactual methods at $\varepsilon_\text{target}=0.1 \hat{L}_{train}(f_{\text{baseline}})$ (continuous datasets), using L2 distance.}
\label{tab:performance_comparison_cnt_full} 
\footnotesize
\begin{tabular}{l|l|cc|cc|cc}
\hline
\multirow{3}{*}{\textbf{Dataset}} & \multirow{3}{*}{\textbf{Method}} & \multicolumn{6}{c}{\textbf{Evaluation Metrics}} \\[2pt]
\cline{3-8}
& & \multicolumn{2}{c|}{\textbf{Retrain}} & \multicolumn{2}{c|}{\textbf{Dropout Rashomon}} & \multicolumn{2}{c}{\textbf{AWP}} \\[2pt]
& & \textbf{R$\uparrow$} & \textbf{L2$\downarrow$} & \textbf{R$\uparrow$} & \textbf{L2$\downarrow$} & \textbf{R$\uparrow$} & \textbf{L2$\downarrow$} \\[2pt]
\hline
\multicolumn{8}{c}{\textbf{Linear models}} \\[2pt]
\hline
\multirow{4}{*}{Banknote} 
& ElliCE   & --- & --- & \textbf{0.623} (0.26) & 1.470 (0.07) & \textbf{0.993 (0.00)} & 1.259 (0.04) \\[2pt]
& PROPLACE & --- & --- & 0.069 (0.03) & 1.329 (0.07) & 0.668 (0.14) & 1.348 (0.07) \\[2pt]
& ROAR     & --- & --- & 0.002 (0.00) & 0.890 (0.13) & 0.399 (0.11) & 0.890 (0.13) \\[2pt]
& T:Rex    & --- & --- & 0.132 (0.08) & 1.163 (0.05) & 0.814 (0.05) & 1.163 (0.05) \\[2pt]
\hline
\multirow{4}{*}{Diabetes} 
& ElliCE   & --- & --- & \textbf{0.996 (0.01)} & 2.669 (0.13) & \textbf{0.996 (0.01)} & 2.914 (0.17) \\[2pt]
& PROPLACE & --- & --- & 0.499 (0.14) & 2.242 (0.15) & 0.118 (0.10) & 2.357 (0.11) \\[2pt]
& ROAR     & --- & --- & 0.970 (0.02) & 2.962 (0.15) & 0.973 (0.02) & 3.010 (0.12) \\[2pt]
& T:Rex    & --- & --- & 0.974 (0.02) & 4.188 (0.46) & 0.808 (0.10) & 4.188 (0.46) \\[2pt]
\hline
\multirow{4}{*}{Iris} 
& ElliCE   & --- & --- & \textbf{1.000 (0.00)} & 3.265 (0.15) & \textbf{1.000 (0.00)} & 3.213 (0.10) \\[2pt]
& PROPLACE & --- & --- & 0.745 (0.27) & 2.929 (0.08) & 0.633 (0.23) & 2.929 (0.08) \\[2pt]
& ROAR     & --- & --- & 0.105 (0.11) & 2.578 (0.13) & 0.005 (0.01) & 2.492 (0.19) \\[2pt]
& T:Rex    & --- & --- & 0.173 (0.11) & 2.701 (0.14) & 0.060 (0.06) & 2.552 (0.20) \\[2pt]
\hline
\multirow{4}{*}{Parkinsons} 
& ElliCE   & --- & --- & \textbf{0.997 (0.00)} & 4.236 (1.03) & 0.997 (0.00) & 3.621 (1.22) \\[2pt]
& PROPLACE & --- & --- & 0.269 (0.09) & 3.062 (0.11) & 0.046 (0.01) & 2.414 (0.04) \\[2pt]
& ROAR     & --- & --- & 0.923 (0.13) & 2.419 (0.23) & \textbf{1.000 (0.00)} & 2.080 (0.15) \\[2pt]
& T:Rex    & --- & --- & 0.919 (0.10) & 2.148 (0.18) & 0.986 (0.01) & 2.148 (0.18) \\[2pt]
\hline
\multirow{4}{*}{Wine Quality} 
& ElliCE   & --- & --- & \textbf{0.998 (0.00)} & 3.300 (0.20) & \textbf{0.998 (0.00)} & 3.306 (0.07) \\[2pt]
& PROPLACE & --- & --- & 0.470 (0.11) & 3.037 (0.29) & 0.154 (0.09) & 3.030 (0.29) \\[2pt]
& ROAR     & --- & --- & 0.973 (0.02) & 3.502 (0.13) & 0.994 (0.01) & 3.431 (0.31) \\[2pt]
& T:Rex    & --- & --- & 0.961 (0.02) & 3.470 (0.19) & 0.978 (0.01) & 3.470 (0.19) \\[2pt]
\hline
\multicolumn{8}{c}{\textbf{Multi-layer perceptron}} \\[2pt]
\hline
\multirow{5}{*}{Banknote} 
& ElliCE   & \textbf{1.000 (0.00)} & 1.125 (0.06) & 0.995 (0.01) & 1.644 (0.13) & 0.990 (0.01) & 1.475 (0.09) \\[2pt]
& ElliCE+R & 0.995 (0.01) & 1.116 (0.08) & 0.993 (0.01) & 1.496 (0.09) & 0.993 (0.01) & 1.365 (0.06) \\[2pt]
& PROPLACE & \textbf{1.000 (0.00)} & 1.339 (0.05) & 0.619 (0.26) & 1.386 (0.02) & 0.692 (0.30) & 1.386 (0.02) \\[2pt]
& ROAR     & \textbf{1.000 (0.00)} & 2.323 (0.03) & \textbf{1.000 (0.00)} & 2.323 (0.03) & \textbf{1.000 (0.00)} & 2.323 (0.03) \\[2pt]
& T:Rex    & 0.993 (0.01) & 1.188 (0.11) & 0.988 (0.02) & 1.563 (0.12) & \textbf{1.000 (0.00)} & 1.563 (0.12) \\[2pt]
\hline
\multirow{5}{*}{Diabetes} 
& ElliCE   & 0.977 (0.01) & 2.146 (0.39) & \textbf{0.985 (0.02)} & 3.050 (0.34) & 0.980 (0.02) & 3.221 (0.40) \\[2pt]
& ElliCE+R & \textbf{0.978 (0.04)} & 2.133 (0.33) & 0.965 (0.02) & 2.864 (0.46) & \textbf{0.985 (0.00)} & 3.414 (0.63) \\[2pt]
& PROPLACE & 0.482 (0.48) & 2.007 (0.05) & 0.187 (0.28) & 2.007 (0.05) & 0.079 (0.19) & 2.007 (0.05) \\[2pt]
& ROAR     & 0.864 (0.11) & 1.858 (0.24) & 0.400 (0.28) & 1.858 (0.24) & 0.308 (0.26) & 1.858 (0.24) \\[2pt]
& T:Rex    & 0.944 (0.03) & 2.471 (0.86) & 0.899 (0.08) & 4.181 (0.36) & 0.940 (0.04) & 4.181 (0.36) \\[2pt]
\hline
\multirow{5}{*}{Iris} 
& ElliCE   & 0.995 (0.01) & 2.492 (0.17) & \textbf{1.000 (0.00)} & 2.936 (0.12) & 0.998 (0.00) & 2.333 (0.10) \\[2pt]
& ElliCE+R & 0.975 (0.03) & 2.401 (0.18) & 0.970 (0.03) & 2.916 (0.21) & 0.995 (0.01) & 2.193 (0.10) \\[2pt]
& PROPLACE & \textbf{1.000 (0.00)} & 2.929 (0.08) & \textbf{1.000 (0.00)} & 2.933 (0.08) & \textbf{1.000 (0.00)} & 2.929 (0.08) \\[2pt]
& ROAR     & 0.930 (0.03) & 2.851 (0.09) & 0.758 (0.08) & 2.851 (0.09) & 0.930 (0.03) & 2.851 (0.09) \\[2pt]
& T:Rex    & \textbf{1.000 (0.00)} & 2.838 (0.34) & 0.922 (0.08) & 2.973 (0.15) & 0.992 (0.01) & 2.201 (0.11) \\[2pt]
\hline
\multirow{5}{*}{Parkinsons} 
& ElliCE   & \textbf{0.885 (0.02)} & 2.560 (0.32) & \textbf{0.999 (0.00)} & 1.226 (0.18) & \textbf{0.999 (0.00)} & 1.403 (0.07) \\[2pt]
& ElliCE+R & 0.610 (0.07) & 1.170 (0.07) & 0.996 (0.00) & 1.112 (0.07) & 0.993 (0.00) & 1.100 (0.10) \\[2pt]
& PROPLACE & --- & --- & --- & --- & --- & --- \\[2pt]
& ROAR     & 0.593 (0.15) & 1.902 (0.21) & 0.853 (0.08) & 1.902 (0.21) & 0.854 (0.08) & 1.902 (0.21) \\[2pt]
& T:Rex    & 0.577 (0.06) & 1.062 (0.04) & 0.990 (0.00) & 1.062 (0.04) & 0.987 (0.00) & 1.062 (0.04) \\[2pt]
\hline
\multirow{5}{*}{Wine Quality} 
& ElliCE   & 0.964 (0.02) & 3.695 (0.39) & 0.978 (0.01) & 3.457 (0.38) & 0.939 (0.04) & 3.653 (0.42) \\[2pt]
& ElliCE+R & \textbf{0.969 (0.03)} & 4.185 (0.70) & \textbf{0.978 (0.02)} & 2.848 (0.30) & \textbf{0.997 (0.00)} & 4.266 (0.67) \\[2pt]
& PROPLACE & 0.159 (0.16) & 1.796 (0.05) & 0.068 (0.07) & 1.796 (0.05) & 0.020 (0.03) & 1.796 (0.05) \\[2pt]
& ROAR     & 0.918 (0.04) & 2.674 (0.06) & 0.867 (0.06) & 2.674 (0.06) & 0.755 (0.10) & 2.674 (0.06) \\[2pt]
& T:Rex    & 0.877 (0.05) & 3.252 (0.32) & 0.931 (0.03) & 3.252 (0.32) & 0.900 (0.04) & 3.252 (0.32) \\[2pt]
\hline
\end{tabular}
\end{table}

\end{document}